\documentclass{article} 
\usepackage{iclr/iclr2026_conference,times}

\usepackage{url}            
\usepackage{booktabs}       
\usepackage{amsfonts}       
\usepackage{nicefrac}       
\usepackage{microtype}      
\usepackage{xcolor}         
\usepackage{mathtools}
\usepackage{amsthm}
\usepackage{xspace}
\usepackage{algorithm}
\usepackage{algorithmic}
\usepackage{wrapfig}
\usepackage[pagebackref=true,breaklinks=true,colorlinks,urlcolor={red!90!black},linkcolor={red!90!black},citecolor={blue!90!black},bookmarks=false]{hyperref}
\usepackage[toc,page,header]{appendix}
\usepackage{minitoc}
\usepackage{booktabs}
\usepackage{tabularx}
\usepackage{siunitx}
\usepackage[table]{xcolor}
\usepackage{multirow}
\usepackage{makecell}
\usepackage{array}
\usepackage{graphicx}
\usepackage{caption}
\usepackage{subcaption}
\usepackage{cleveref}
\usepackage{wrapfig}

\definecolor{headercolor}{RGB}{45, 55, 72}
\definecolor{rowgray}{RGB}{245, 245, 245}
\definecolor{bestvalue}{RGB}{34, 139, 34}

\newcolumntype{L}[1]{>{\raggedright\arraybackslash}p{#1}}
\newcolumntype{C}[1]{>{\centering\arraybackslash}p{#1}}

\usepackage{amsmath,amsfonts,bm}

\def\eqref#1{equation~\ref{#1}}

\def\1{\bm{1}}

\def\rvh{{\mathbf{h}}}

\def\rvs{{\mathbf{s}}}
\def\rvt{{\mathbf{t}}}

\def\rvx{{\mathbf{x}}}

\def\rmI{{\mathbf{I}}}

\def\rmR{{\mathbf{R}}}

\DeclareMathAlphabet{\mathsfit}{\encodingdefault}{\sfdefault}{m}{sl}
\SetMathAlphabet{\mathsfit}{bold}{\encodingdefault}{\sfdefault}{bx}{n}

\def\gD{{\mathcal{D}}}
\def\gE{{\mathcal{E}}}

\def\gG{{\mathcal{G}}}

\def\gL{{\mathcal{L}}}

\def\gN{{\mathcal{N}}}

\def\gT{{\mathcal{T}}}

\def\sR{{\mathbb{R}}}

\newcommand{\E}{\mathbb{E}}

\DeclareMathOperator*{\argmin}{arg\,min}

\theoremstyle{plain}
\newtheorem{theorem}{Theorem}[section]

\theoremstyle{definition}

\theoremstyle{remark}

\newcommand{\method}{\textsc{EGInterpolator}\xspace}

\title{Align Your Structures: Generating Trajectories with Structure Pretraining for Molecular Dynamics}

\author{
Aniketh Iyengar$^{1\ast}$ Jiaqi Han$^{1\ast}$ Pengwei Sun$^{1\ast}$ Mingjian Jiang$^1$  Jianwen Xie$^2$ Stefano Ermon$^1$\\
\textsuperscript{1} Stanford University \quad\textsuperscript{2} Lambda, Inc
}

\iclrfinalcopy 
\begin{document}

\maketitle

\begin{abstract}
Generating molecular dynamics (MD) trajectories using deep generative models has attracted increasing attention, yet remains inherently challenging due to the limited availability of MD data and the complexities involved in modeling high-dimensional MD distributions. To overcome these challenges, we propose a novel framework that leverages structure pretraining for MD trajectory generation. Specifically, we first train a diffusion-based structure generation model on a large-scale conformer dataset, on top of which we introduce an interpolator module trained on MD trajectory data, designed to enforce temporal consistency among generated structures. Our approach effectively harnesses abundant \textcolor{black}{structural} data to mitigate the scarcity of MD trajectory data and effectively decomposes the intricate MD modeling task into two manageable subproblems: structural generation and temporal alignment. \textcolor{black}{We comprehensively evaluate our method on the QM9 and DRUGS small-molecule datasets across unconditional generation, forward simulation, and interpolation tasks, and further extend our framework and analysis to tetrapeptide and protein monomer systems.} Experimental results confirm that our approach excels in generating chemically realistic MD trajectories, as evidenced by remarkable improvements of accuracy in \textcolor{black}{geometric, dynamical, and energetic} measurements.
\end{abstract}

\def\thefootnote{$\ast$}\footnotetext{Equal contribution. Correspondence to~\url{jiaqihan@stanford.edu}. Code is available at~\url{https://github.com/ani11452/Align_Your_Structures}.} 
\def\thefootnote{\arabic{footnote}}

\section{Introduction}
\label{sec:intro}

Molecular Dynamics (MD) is a computational method used to model the physical motions of atoms and molecules over time~\citep{Alder1959, PhysRev.159.98}. Numerically integrating Newton’s equations of motion, MD simulates the temporal evolution of molecular systems at atomic resolution. It has become a widely adopted tool in biology~\citep{McCammon1977}, chemistry~\citep{PhysRev.136.A405}, and materials science~\citep{Antal_k_2024}. However, MD can be computationally demanding, often requiring long simulation times and many small integration steps, especially for physio-realistic dynamics. This cost has motivated extensive work on accelerating MD and improving sampling efficiency \citep{10.1145/1654059.1654126, f86aa8a1bf3349cdb709e26fc141646f, doi:10.1073/pnas.202427399}. Moreover, advances in biomolecular engineering increasingly leverage machine learning to design molecular systems~\citep{AlphaFold2021, Passaro2025.06.14.659707, Powers2025.05.10.653107}, highlighting its importance in drug discovery. In this context, deep generative models—especially diffusion models~\citep{noé2019boltzmanngeneratorssampling, jing2024alphafoldmeetsflowmatching, klein2023timewarptransferableaccelerationmolecular,song2021scorebased,ho2020denoising}—have emerged as effective surrogates for capturing the complex and diverse distributions observed in MD simulations.

Despite their promise, we identify a factor that poses remarkable limitations on their utility. The MD generative models are typically optimized on a single or limited number of molecular systems \citep{noé2019boltzmanngeneratorssampling, han2024geometric, jing2024generativemodelingmoleculardynamics}, making it a fundamental challenge for them to generalize across arbitrary molecules. Two main factors contribute to this issue.
\emph{Data scarcity}: Constructing large-scale, physio-realistic MD datasets spanning diverse molecular systems is prohibitively expensive due to the high computational cost of running MD simulations at scale. As a result, available training data is insufficient for capturing the full diversity of MD distributions.
\emph{Modeling complexity}: MD data extends the molecular structure space with an additional temporal dimension, making it inherently high-dimensional. This significantly increases modeling difficulty, especially when models must preserve both structural fidelity and realistic dynamical behavior.

In this work, we propose a novel approach named~\method that addresses the challenges  through \emph{structure pretraining}. Specifically, we decompose the MD modeling problem into two sequential subtasks. First, we train a conformer diffusion model to generate conformers—\emph{i.e.}, plausible molecular structures corresponding to frames along an MD trajectory—using large-scale conformer datasets. Building on this pretrained structure model, we then initialize additional temporal layers and integrate structural and temporal information through a novel module called the equivariant temporal interpolator. We theoretically show that the temporal interpolator implicitly models a transition from a temporally independent structural distribution to the fully correlated MD distribution. This formulation alleviates optimization difficulty by decoupling spatial and temporal learning, which enables (1) more efficient learning of dynamics from limited MD data through the temporal interpolator, and (2) generation of higher-fidelity, physically realistic molecular poses implicitly constrained by the pretrained structure module.

Our approach directly addresses three central challenges. First, it mitigates MD data scarcity by leveraging large-scale conformer datasets with diverse molecular structures, complementing small-scale MD data and improving generalization to unseen molecules. Second, it ensures structural and energetic fidelity by grounding trajectory generation in a pretrained conformer model, which provides a foundation for downstream dynamics. Third, the two-stage pipeline decomposes the complexity of modeling high-dimensional MD distributions into two manageable tasks: learning the distribution of independent frames and subsequently capturing their temporal dependencies.

\textbf{Contributions.} \textbf{1.} We identify key challenges in the generalization of MD diffusion models and propose structure pretraining as a remedy. \textbf{2.} We develop a principled training framework based on structure pretraining and validate it on small molecular systems. \textbf{3.} We introduce the equivariant temporal interpolator, a module for learning temporal dependencies across frames. \textbf{4.} We evaluate our framework on unconditional generation, forward simulation, and interpolation, showing accurate modeling of MD distributions while preserving conformer generation quality \textcolor{black}{across small molecules, tetrapeptide, and protein monomer systems}.

\begin{figure}[t!]
\vskip -0.25in
    \centering
    \includegraphics[width=0.75\linewidth]{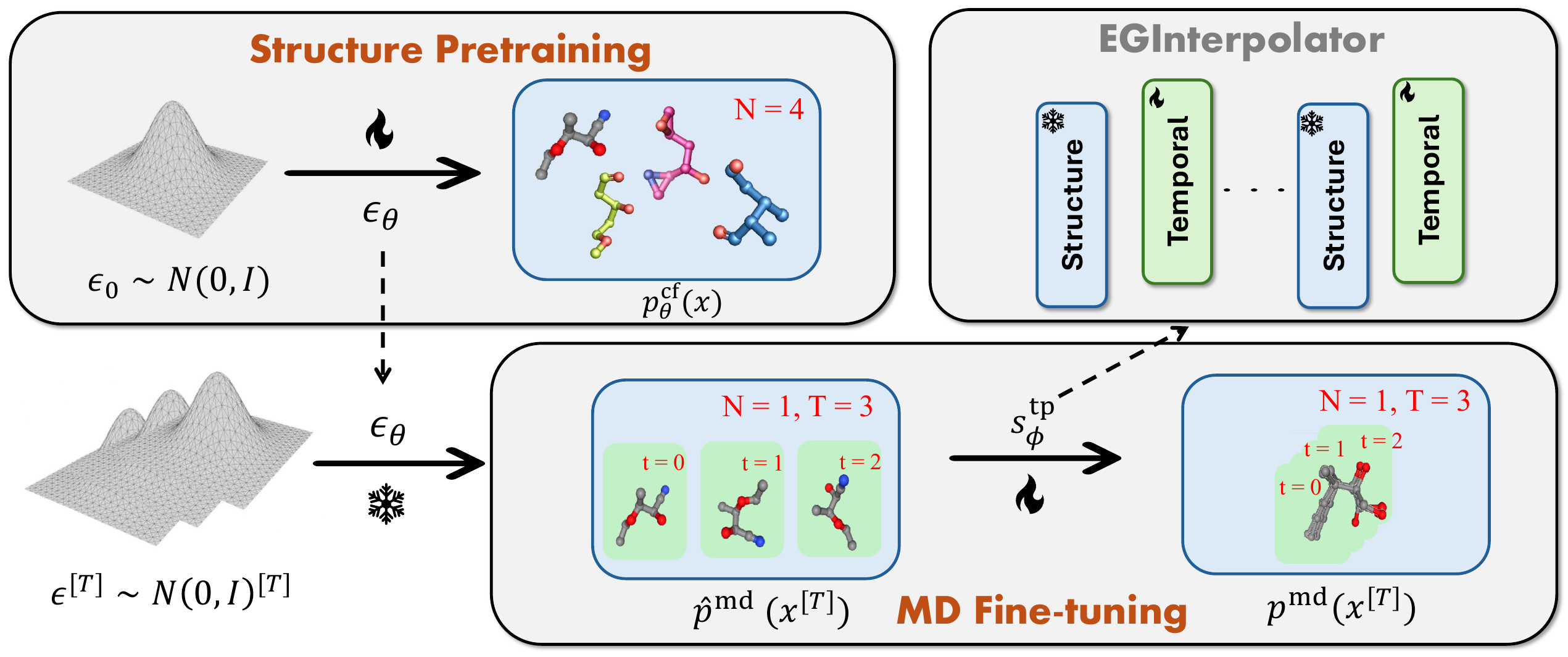}
    \caption{The overall two-stage framework of~\method. \emph{Structure pretraining:} We first pretrain a conformer model $\bm\epsilon_\theta$ on a large-scale conformer dataset. \emph{MD fine-tuning:} The model is then combined with additional temporal interpolator $\rvs_\phi^\mathrm{tp}$ to approach the MD distribution $p^\mathrm{md}(\rvx^{[T]})$.}
    \vskip -0.2in
    \label{fig:enter-label}
\end{figure}

\section{Related Work}
\label{sec:related-work}

\textbf{Geometric diffusion models.} Generative models for geometric data have garnered increasing attention across multiple domains. In molecular generation, GeoDiff \citep{xu2022geodiff} pioneered for conformer generation while EDM \citep{hoogeboom2022equivariantdiffusionmoleculegeneration}
operates on both continuous coordinates and categorical
atom types. Subsequent works \citep{xu2023geometric,xu2024geometric} introduced structured latent spaces to enhance scalability and controllability. For larger molecules, GCDM \citep{morehead2024geometrycompletediffusion3dmolecule} incorporated geometry-complete local frames and chirality-sensitive features into SE(3)-equivariant networks. EBD \citep{park2024equivariantblurringdiffusionhierarchical} performs hierarchically by first sampling scaffolds before refining atom positions through blurring-based denoising. Yet, they only model static structures while in this work we study the problem of their temporal correlation in MD.

\textbf{Molecular Structure Datasets \& Sampling.} Large-scale structural datasets are central to molecular modeling. Some, like the Protein Data Bank (PDB) \citep{10.1093/nar/28.1.235}, archive experimentally resolved biomolecular structures, while others, such as GEOM (QM9 and Drugs) \citep{axelrod2022geomenergyannotatedmolecularconformations} and OMol \citep{levine2025openmolecules2025omol25}, provide computationally derived conformer ensembles at scale. The latter can utilize accelerated sampling strategies that emphasize structural diversity while reducing computational cost. For instance, OMol reports many protein–ligand simulations at elevated temperatures, while GEOM employs CREST \citep{10.1093/nar/28.1.235}, coupling the semiempirical GFN2-xTB method \citep{doi:10.1021/acs.jctc.8b01176} with metadynamics and geometry optimization. Such approaches broaden structural coverage but trade dynamic accuracy for diversity, highlighting the complementary role of generative models in capturing physio-realistic dynamics.

\textbf{ML-based Molecular Dynamics.} Modeling molecular dynamics is challenging due to complex multi-body interactions, data scarcity, and high-dimensional state spaces. Equivariant architectures such as EGNN \citep{satorras2021en} and SE(3)-Transformer \citep{fuchs2020se3transformers3drototranslationequivariant} improve generalization by embedding physical symmetries \citep{brandstetter2022geometricphysicalquantitiesimprove, xu2024equivariant}, while autoregressive approaches like Timewarp \citep{klein2023timewarptransferableaccelerationmolecular} and EquiJump \citep{costa2024equijumpproteindynamicssimulation} capture temporal transitions but suffer from error compounding and limited design flexibility. Diffusion-based methods address these issues by modeling trajectories holistically: GeoTDM \citep{han2024geometric} enforces equivariance but requires molecule-specific training, and MDGen \citep{jing2024generative} extends to peptide torsions with flow-based modeling but relies on key-frame conditioning. In contrast, our method generalizes more readily across arbitrary molecular systems.

\textbf{Video Generation from Image Models.} \cite{blattmann2023alignlatentshighresolutionvideo} highlighted extending image diffusion models to videos by adding temporal layers, an idea motivating our spatial–temporal decoupling. Related work in latent image diffusion \citep{rombach2021high} and holistic video generation \citep{videoworldsimulators2024} further demonstrate the scalability of spatiotemporal diffusion.

\section{Preliminaries}

\textbf{Geometric representation of molecular dynamics.} In this work, we represent each molecular dynamics trajectory as a collection of \emph{static structures}, or equivalently \emph{conformers} that evolve through time. Each frame of conformer at timestep $t$ is viewed as a geometric graph $\gG^{(t)}\coloneqq(\rvh,\rvx^{(t)},\gE)$ where each row $\rvh_i\in\sR^H$ is the node feature of atom $i$ such as its atomic number, $\rvx_i^{(t)}\in\sR^3$ is the Euclidean coordinate of atom $i$ at timestep $t$, and $\gE$ is the set of edges induced by the chemical bonds between atoms. The trajectory with length $T$ is correspondingly represented as $\rvx^{[T]}\coloneqq \rvx^{(0:T-1)}\in\sR^{T\times N\times 3}$.

\textbf{Geometric diffusion model for conformer generation.} Geometric diffusion models~\citep{xu2022geodiff,hoogeboom2022equivariant,xu2023geometric} are a family of diffusion-based generative models~\citep{sohl2015deep,ho2020denoising,song2019generative,song2021scorebased} dedicated to capture the distribution of static conformer structures $p(\rvx|\rvh,\gE)$, given the configuration of the molecular graph specified by the node feature $\rvh$ and edge connectivity $\gE$. Inheriting the framework of diffusion models, they feature a Markovian forward noising process that gradually perturbs $\rvx_0$ toward $\rvx_\gT$ through $\gT$ diffusion steps, with the Gaussian transition kernel $q(\rvx_\tau|\rvx_{\tau-1})=\gN(\rvx_\tau;\sqrt{1-\beta_\tau}\rvx_{\tau-1},\beta_\tau\rmI)$, where $\beta_\tau$ is the noise schedule such that $\rvx_\gT$ is close to the Gaussian prior $\gN(\bm{0},\rmI)$. The reverse process denoises toward the clean data using $p_\theta(\rvx_{\tau-1}|\rvx_\tau)=\gN(\rvx_{\tau-1};\bm\mu_\theta(\rvx_\tau;\tau),\sigma^2_\tau\rmI)$. The model is optimized via~\citet{ho2020denoising}:
\begin{align}
\label{eq:loss-conformer}
    \gL_\mathrm{conf}=\E_{\rvx_0\sim\gD_\mathrm{conf},\tau\sim\mathrm{Unif}(1,\gT),\bm\epsilon\sim\gN(\bm{0},\rmI)}\|\bm\epsilon-\bm\epsilon_\theta(\rvx_\tau,\tau)  \|_2^2,
\end{align}
where $\gD_\mathrm{conf}$ is the conformer dataset, $\rvx_\tau=\sqrt{\bar{\alpha}_\tau}\rvx_0 + \sqrt{1-\bar\alpha_\tau}\bm\epsilon$ with $\bar{\alpha}_\tau$ being certain noise schedule and $\bm\epsilon_\theta$ parameterizes the mean by $\bm\mu_{\bm\theta}(\rvx_\tau,\tau)=\frac{1}{\sqrt{\alpha_\tau}} (\rvx_\tau - \frac{\beta_\tau}{\sqrt{1-\bar\alpha_\tau}}\bm\epsilon_{\bm\theta}(\rvx_\tau,\tau) )$.
A critical property of geometric diffusion models lies in the SE$(3)$-invariance of their marginal\footnote{For conciseness we henceforth omit the conditions $\rvh,\gE$ in $p(\rvx_0|\rvh,\gE)$ unless otherwise specified.}, \emph{i.e.}, $p_\theta(\rvx_0)=g\cdot p_\theta(\rvx_0), g\in\text{SE}(3)$,
where $g$ is an arbitrary group action in SE$(3)$ that consists of all 3D rotations and translations, and $p_\theta(\rvx_0)=p(\rvx_\gT)\prod_{\tau=1}^\gT p_\theta(\rvx_{\tau-1}|\rvx_\tau)$. This is achieved by parameterizing $\bm\epsilon_\theta$ with an equivariant graph neural network~\citep{satorras2021en,satorras2021enf} such that $g\cdot\bm\epsilon_\theta(\rvx_\tau,\tau) = \bm\epsilon_\theta(g\cdot\rvx_\tau,\tau)$ which guarantees the SE$(3)$-equivariance of the transition kernel $p_\theta(\rvx_{\tau-1}|\rvx_\tau)$ at each step $\tau$.

\textbf{Problem definition.} In this work, we seek to design a diffusion model that captures the distribution of molecular dynamics $p^\mathrm{md}(\rvx^{[T]})$ given node features $\rvh$ and edges $\gE$. Based on this goal, we are additionally interested in two relevant subtasks, namely \emph{forward simulation}, which models the conditional distribution $p^\mathrm{md}(\rvx^{(1:T-1)}|\rvx^{(0)})$ given the initial structure $\rvx^{(0)}$, and \emph{interpolation}, which models $p^\mathrm{md}(\rvx^{(1:T-2)}|\rvx^{(0)},\rvx^{(T-1)})$ given both the initial frame $\rvx^{(0)}$ and final frame $\rvx^{(T-1)}$.

\section{Method}
\label{sec:method}

In this section, we present our approach for generating MD trajectories by temporally aligning structural distributions. \textsection~\ref{sec:overall-framework} introduces the overall framework of conformer pretraining and temporal alignment; \textsection~\ref{sec:temporal-interpolator} describes the temporal interpolator that couples conformer and temporal layers; and \textsection~\ref{sec:detailed-instantiation} details the implementation of \method.

\subsection{Trajectory Generation by Aligning Structure Model}
\label{sec:overall-framework}

\textbf{Motivation.} While substantial research has advanced the modeling of \textcolor{black}{empirical conformer data distribution} $p^\mathrm{cf}(\mathbf{x})$, generalizing this paradigm to molecular dynamics trajectories remains inherently challenging for two primary reasons.
\textbf{1.} \textit{Data scarcity.} Unlike conformer modeling, which benefits from extensive datasets~\citep{ramakrishnan2014quantum,axelrod2022geomenergyannotatedmolecularconformations}, molecular dynamics simulations incur prohibitive computational costs. Consequently, existing MD datasets \citep{chmiela2017machine, VanderMeersche2024} are typically constrained to limited molecular classes, significantly restricting generalizeability across more arbitrarily defined molecular types.
\textbf{2.} \textit{Modeling complexity.} MD trajectories inhabit high-dimensional spaces with an additional temporal dimension. The inherent complexity of the joint distribution $p^\mathrm{md}(\mathbf{x}^{[T]})$ is further exacerbated by data scarcity, as insufficient training samples create greater sparsity in the high-dimensional data support, thereby complicating accurate density estimation.

\textbf{Our solution.} We propose to leverage a pretrained conformer diffusion model and transform it into an MD generation model, by stacking additional trainable temporal layers to enforce temporal consistency along each MD trajectory. Formally, given a pretrained conformer diffusion model $\bm\epsilon_\theta$ inducing the marginal $p_\theta^\mathrm{cf}(\rvx)$, we devise $\bm\epsilon^\mathrm{md}_{\theta,\phi}$ for modeling the MD distribution $p_{\theta,\phi}^\mathrm{md}(\rvx^{[T]})$, where $\phi$ represents parameters in the additional temporal layers, indicating that the MD generative model with parameter set $\{\theta,\phi\}$ is partially initialized from the pretrained structure model $\theta$. The MD diffusion model is then optimized on the MD trajectory dataset with the diffusion loss
\begin{align}
\label{eq:traj-loss}
    \gL_\mathrm{md}=\E_{\rvx_0^{[T]}\sim\gD_\mathrm{md},\tau\sim\mathrm{Unif}(1,\gT),\bm\epsilon^{[T]}\sim\gN(\bm{0},\rmI)}\|\bm\epsilon^{[T]}-\bm\epsilon^\mathrm{md}_{\theta,\phi}(\rvx^{[T]}_\tau,\tau)  \|_2^2,
\end{align}
where $\rvx^{[T]}_\tau=\sqrt{\bar{\alpha}_\tau}\rvx^{[T]}_0 + \sqrt{1-\bar\alpha_\tau}\bm\epsilon^{[T]}$ and $\bm\epsilon^{[T]}\in\sR^{T\times N\times 3}$ is the Gaussian noise and $\gD_\mathrm{md}$ is the MD dataset.
Our proposal effectively addresses the core challenges. We mitigate MD data scarcity by initializing with a conformer model trained on large-scale conformer datasets, transferring generalization capability to unseen molecules. Furthermore, our two-stage pipeline decomposes the complex modeling of $p^\mathrm{md}(\rvx^{[T]})$ into manageable subproblems: conformer pretraining first models each frame independently, yielding an intermediate trajectory-level distribution $\hat{p}^\mathrm{md}_\theta(\rvx^{[T]})\coloneqq\prod_{t=0}^{T-1} p^\mathrm{cf}_\theta(\rvx^{(t)})$ that does not incorporate any temporal correlation. The second stage introduces additional parameters $\phi$ to capture the
temporal dependency across different frames, leading to the joint distribution $p_{\theta,\phi}^\mathrm{md}(\rvx^{[T]})$. This approach efficiently offloads the complexity by using $\hat{p}^\mathrm{md}_\theta(\rvx^{[T]})$ as an anchor.
The flowchart of our proposed framework is depicted in Fig.~\ref{fig:enter-label}.

\subsection{Temporal Interpolator}
\label{sec:temporal-interpolator}

With the proposed framework, it is still yet unrevealed how to allocate the additional parameters $\phi$ to capture the temporal dependency across frames for aligning the structures into an MD trajectory. To this end, we introduce a novel temporal interpolator module that entangles the pretrained structure denoiser $\bm\epsilon_\theta^\mathrm{cf}$ with the additional temporal network $\bm\epsilon_\phi^\mathrm{tp}$ through a linear interpolation:
\begin{align}
\label{eq:interpolator}
 \bm\epsilon^\mathrm{md}_{\theta,\phi}(\rvx_\tau^{[T]},\tau)=\alpha \hat{\bm\epsilon}^\mathrm{md}  + (1-\alpha) \bm\epsilon^\mathrm{tp}_\phi(\rvx_\tau^{[T]}, \hat{\bm\epsilon}^\mathrm{md},\tau),\quad\quad \text{s.t.}\ \ \ \hat{\bm\epsilon}^\mathrm{md}= [\bm\epsilon_\theta^\mathrm{cf}(\rvx_\tau^{(t)},\tau)]_{t=0}^{T-1},
\end{align}

where $\alpha\in\sR$ is the interpolation coefficient, and $[\bm\epsilon_\theta(\rvx_\tau^{(t)},\tau)]_{t=0}^{T-1}$ is the concatenation along the temporal axis for the outputs $\bm\epsilon^\mathrm{cf}_\theta(\rvx_\tau^{(t)})$ at frames $0\leq t\leq T-1$, and $\bm\epsilon_\phi^\mathrm{tp}(\rvx_\tau^{[T]},\hat{\bm\epsilon}^\mathrm{md},\tau) = \rvs^\mathrm{tp}_\phi(\rvx_\tau^{[T]}+\hat{\bm\epsilon}^\mathrm{md},\tau) - \rvx_\tau^{[T]}$ where $\rvs_\phi^\mathrm{tp}$ is an equivariant temporal attention network~\citep{han2024geometric}.

Intuitively, Eq.~\ref{eq:interpolator} mixes the output from the structure model $\bm\epsilon_\theta^\mathrm{cf}$ together with the the temporal model $\bm\epsilon_\phi^\mathrm{tp}$ as the final output $\bm\epsilon^\mathrm{md}_{\theta,\phi}$, making it both structural and temporal-aware. Notably, compared with other mixing strategies, our design has several unique benefits, as we analyzed below.

We start by showing that the interpolation mechanism in Eq.~\ref{eq:interpolator} implicitly induces an intermediate distribution for the temporal network to learn. We reveal such insight in the following theorem.
\begin{theorem}
\label{theorem:distribution}
    Suppose $\bm\epsilon^\mathrm{cf}_\theta$ perfectly models $p^\mathrm{cf}(\rvx)$ and $\bm\epsilon^\mathrm{md}_{\theta,\phi}$ perfectly models $p^\mathrm{md}(\rvx^{[T]})$, then the interpolation in Eq.~\ref{eq:interpolator} implicitly induces the distribution $\tilde{p}^\mathrm{md}(\rvx^{[T]})\propto p^\mathrm{md}(\rvx^{[T]})^{\beta}\hat{p}^\mathrm{md}(\rvx^{[T]})^{1-\beta}$ for $\bm\epsilon_\phi$, where $\beta=\frac{1}{1-\alpha}$ and $\hat{p}^\mathrm{md}=\prod_{t=0}^{T-1} p^\mathrm{cf}(\rvx^{(t)})$.
\end{theorem}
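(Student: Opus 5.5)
The plan is to translate each noise predictor into a score via the standard identity $\nabla_{\rvx_\tau}\log p_\tau(\rvx_\tau) = -\bm\epsilon(\rvx_\tau,\tau)/\sqrt{1-\bar\alpha_\tau}$, and then solve the interpolation in Eq.~\ref{eq:interpolator} for the temporal branch. Because the identity is linear in $\bm\epsilon$, linear relations among predictors become linear relations among scores, and hence log-linear (geometric) relations among densities.

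\textbf{Step 1: identify the scores of the two known predictors.} By the perfect-modeling assumption on $\bm\epsilon^\mathrm{cf}_\theta$, each frame output $\bm\epsilon^\mathrm{cf}_\theta(\rvx^{(t)}_\tau,\tau)$ equals $-\sqrt{1-\bar\alpha_\tau}\,\nabla_{\rvx^{(t)}_\tau}\log p^\mathrm{cf}_\tau(\rvx^{(t)}_\tau)$. The forward noising acts independently on each frame, so the diffused version of $\hat p^\mathrm{md}=\prod_t p^\mathrm{cf}(\rvx^{(t)})$ is $\prod_t p^\mathrm{cf}_\tau(\rvx^{(t)}_\tau)$. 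Its score is the concatenation of the per-frame scores, so $\hat{\bm\epsilon}^\mathrm{md}=-\sqrt{1-\bar\alpha_\tau}\,\nabla\log\hat p^\mathrm{md}_\tau$. In the same way, perfect modeling by $\bm\epsilon^\mathrm{md}_{\theta,\phi}$ gives $\bm\epsilon^\mathrm{md}_{\theta,\phi}=-\sqrt{1-\bar\alpha_\tau}\,\nabla\log p^\mathrm{md}_\tau$.

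\textbf{Step 2: solve Eq.~\ref{eq:interpolator} for the temporal branch.} Rearranging gives
\begin{align*}
\bm\epsilon^\mathrm{tp}_\phi = \tfrac{1}{1-\alpha}\bm\epsilon^\mathrm{md}_{\theta,\phi} - \tfrac{\alpha}{1-\alpha}\hat{\bm\epsilon}^\mathrm{md} = \beta\,\bm\epsilon^\mathrm{md}_{\theta,\phi} + (1-\beta)\,\hat{\bm\epsilon}^\mathrm{md},
\end{align*}
where $\beta=\frac{1}{1-\alpha}$, so that $1-\beta=-\frac{\alpha}{1-\alpha}$. Substituting the Step 1 identities yields
\begin{align*}
\bm\epsilon^\mathrm{tp}_\phi = -\sqrt{1-\bar\alpha_\tau}\,\nabla\log\bigl(p^\mathrm{md}_\tau{}^{\beta}\,\hat p^\mathrm{md}_\tau{}^{1-\beta}\bigr).
\end{align*}
Hence $\bm\epsilon^\mathrm{tp}_\phi$ is exactly the noise predictor associated with the unnormalized density $p^\mathrm{md}_\tau{}^{\beta}\hat p^\mathrm{md}_\tau{}^{1-\beta}$. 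The gradient of the log-normalizer vanishes, so normalization is irrelevant. Evaluating at $\tau=0$ (the clean-data limit) gives $\tilde p^\mathrm{md}\propto (p^\mathrm{md})^\beta(\hat p^\mathrm{md})^{1-\beta}$, which is the claim.

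\textbf{Main obstacle: interpreting ``induces''.} A geometric mixture of diffused densities is generally not the diffusion of the geometric mixture of clean densities. So $\bm\epsilon^\mathrm{tp}_\phi$ is not literally the denoiser of the forward-noised $\tilde p^\mathrm{md}$ at intermediate $\tau$. I would handle this by stating the result at the level of scores. At each noise level, the temporal network is implicitly trained to represent the score of $\tilde p^\mathrm{md}_\tau\propto p^\mathrm{md}_\tau{}^\beta\hat p^\mathrm{md}_\tau{}^{1-\beta}$, and this family reduces to $\tilde p^\mathrm{md}$ at $\tau=0$. This is the standard reading used in classifier-free-guidance-type arguments, and I would adopt it explicitly.

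Two smaller points need a remark. Integrability of $\tilde p^\mathrm{md}$ must be assumed, since $1-\beta$ may be negative when $\alpha\in(0,1)$. The condition $\alpha\neq 1$ must also hold, so that $\beta$ is defined.
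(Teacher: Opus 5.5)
Your proposal is correct and follows the paper's proof essentially step for step: convert both noise predictors into scaled scores, solve Eq.~\ref{eq:interpolator} for the temporal branch as $\beta\bm\epsilon^\mathrm{md}_{\theta,\phi}+(1-\beta)\hat{\bm\epsilon}^\mathrm{md}$, and recognize the result as $-\sqrt{1-\bar\alpha_\tau}$ times the score of the geometric mixture. Indexing the densities by noise level $\tau$ and adding the integrability caveat are refinements the paper glosses over; the caveat is real, since $\alpha=\sigma(k)\in(0,1)$ forces $1-\beta<0$. The paper, in turn, adds a uniqueness remark you leave implicit: a score determines its normalized density, so $\tilde{p}^\mathrm{md}$ is the only distribution for which $\bm\epsilon_\phi$ can serve as the score network.
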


\textbf{Temporal interpolator reduces training overhead.} Instead of directly matching the highly complex MD distribution $p^\mathrm{md}(\rvx^{[T]})$, the temporal network is now expected to model an intermediate transition between the frame-independent distribution $\hat{p}^\mathrm{md}(\rvx^{[T]})$ obtained from the structure model and the target MD distribution $p^\mathrm{md}(\rvx^{[T]})$, with $\beta=\frac{1}{1-\alpha}$ defining the weight. 
By this means, we relieve from the optimization difficulty for learning the MD distribution by leveraging the interpolation $\tilde{p}^\mathrm{md}(\rvx^{[T]})$ as the stepping stone, while also effectively taking advantage from the conformer pretraining by incorporating $p^\mathrm{cf}(\rvx^{(t)})$ using  $\hat{p}^\mathrm{md}(\rvx^{[T]})$ as the bridge. \textcolor{black}{The effectiveness of our design is also supported by the ablation study in Sec.~\ref{sec:ablations} which shows clear advantage of our approach compared against a naive two-stage separate training.}

\textbf{The parameterization of $\bm\epsilon_\phi^\mathrm{tp}$.}
Another core design lies in that we inherit the output from the structure model, $\hat{\bm\epsilon}^\mathrm{md}$, as the input to the temporal model, instead of only feeding in the original noised trajectory $\rvx_\tau^{[T]}$. This is beneficial in terms of facilitates the optimization for $\bm\epsilon^\mathrm{tp}_\phi$. Consider the extreme case that the frame-independent distribution is close to the MD distribution, $\hat{p}^\mathrm{md}(\rvx^{[T]})\approx p^\mathrm{md}(\rvx^{[T]})$. According to Theorem~\ref{theorem:distribution}, we have that the implicit distribution for the temporal model to approach would be $\tilde{p}^\mathrm{md}(\rvx^{[T]})\approx \hat{p}^\mathrm{md}(\rvx^{[T]})$. Therefore, equivalently the temporal model only needs to satisfy $\bm\epsilon_\phi^\mathrm{tp}(\rvx_\tau^{[T]},\hat{\bm\epsilon}^\mathrm{md},\tau) \approx \hat{\bm\epsilon}^\mathrm{md}$, which can be simply realized by $\rvs^\mathrm{tp}_\phi$ being an identity mapping, according to Eq.~\ref{eq:interpolator}. Therefore, negligible optimization effort is required for $\rvs_\phi^\mathrm{tp}$.

\textbf{Interpolation coefficient $\alpha$.} To further enhance thr training flexibility, empirically we adopt the parameterization of $\alpha=\sigma(k)$ where $\sigma(\cdot)$ is the Sigmoid function to ensure a smooth interpolation, where $k$ is a \emph{learnable} parameter optimized during training.

\textbf{Temporal interpolator enables flexible inference.} Our design enables two inference modes. Setting $\alpha=1$ suppresses the temporal network, reducing output to $\hat{\bm\epsilon}^\mathrm{md}$, equivalent to independent conformer generation for each frame with batch size $T$ and preserving conformer capability. Using the learned $\alpha^\star$ restores the full dynamics sampler. Shown in Appendix \ref{sec:a_pert}, perturbations of $\alpha$ between these modes also yield meaningful inference behaviors, underscoring the flexibility of our approach.

\begin{wrapfigure}[12]{r}{0.2\textwidth}
  \begin{center}
  \vskip -0.3in
    \includegraphics[width=0.2\textwidth]{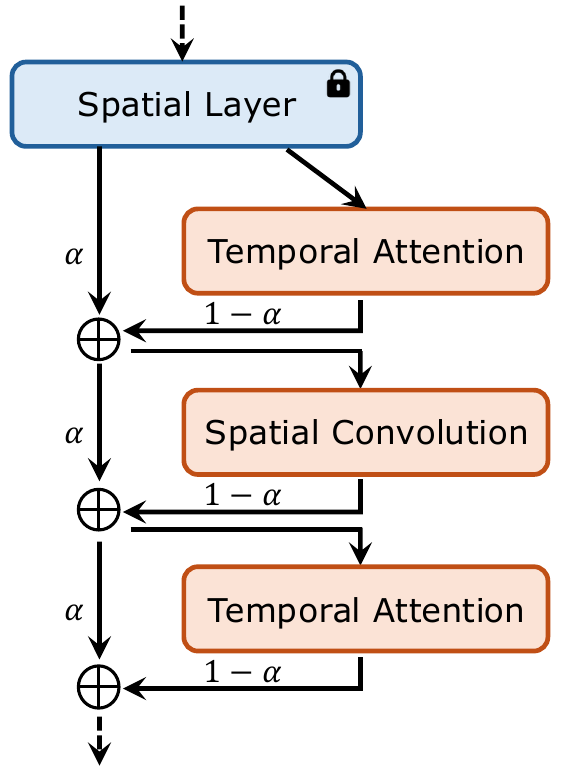}
  \end{center}
  \vskip-0.2in
  \caption{Cascaded temporal interpolator block.}
  \label{fig:interp}
\end{wrapfigure}
\textbf{Temporal interpolator preserves equivariance.} Importantly, the linear interpolation rule for our temporal interpolator preserves the SE$(3)$-equivariance (proof in Appendix~\ref{sec:proof-equiv}), given the SE$(3)$-equivariance of both the structure and the temporal models. This property is vital for ensuring the SE$(3)$-invariance of the marginal, a critical inductive bias to promote data efficiency.

\looseness=-1
\textbf{Cascaded temporal interpolator.} Given the justifications for the interpolator, we further explore an extension of our approach by performing such operation in a \emph{block-wise manner}, enabling more expressive information fusion between the pretrained structure model and the additional temporal module. Specifically, we perform the interpolation for the output from the structure and temporal model at the $l$-th block with $\alpha^{(l)}\in\sR$ being the coefficient. Furthermore, we also incorporate the interpolation between each layer in the temporal block and the output from the structure block. Detailed flowchart can be found in Fig.~\ref{fig:interp}.
Such design inherits the benefits of the interpolator while permitting a much denser information flow between the network that evidently improves optimization. We henceforth coin the original design  \textsc{Simple} and the cascaded version \textsc{Casc}.

\subsection{Instantiation of~\method}
\label{sec:detailed-instantiation}

Based on the dedicated design of the temporal interpolator in~\textsection~\ref{sec:temporal-interpolator}, we describe the overall instantiation of our framework following the paradigm depicted in~\textsection~\ref{sec:overall-framework}.

\textbf{Conformer pretrainings stage.} The first stage of our pipeline is the structure pretraining using the large scale conformer dataset $\gD_\mathrm{cf}$. For the conformer model $\bm\epsilon^\mathrm{cf}_\theta$, we resort to Equivariant Graph Convolution Layer (EGCL)~\citep{satorras2021en} as the basic building block with the update:
\begin{align}
    \rvx',\rvh'=f_\mathrm{ES}(\rvx,\rvh,\gE),
\end{align}
where $\mathrm{ES}$ is shorthand for Equivariant Structure layer. The denoiser $\bm\epsilon_\theta$ consists of $L$ layers of $f_\mathrm{ES}$ stacked sequentially, and is optimized using the loss in Eq.~\ref{eq:loss-conformer} for structure pretraining.

\textbf{MD training stage.} With the pretrained conformer model, we conduct the second stage, the MD training stage with the limited-size MD dataset $\gD_\mathrm{md}$, with the additionally initialized temporal network parameterized by $\rvs_\phi^\mathrm{md}$. For the temporal network, we utilize the Equivariant Temporal Attention Layer introduced in~\cite{han2024geometric} to capture the temporal dependency with attention:
\begin{align}
    \rvx'^{[T]},\rvh'^{[T]}=f_\mathrm{ET}(\rvx^{[T]},\rvh^{[T]},\gE),
\end{align}
where $\mathrm{ET}$ refers to Equivariant Temporal layer. Each temporal block is a stack of three layers—ET at the top and bottom, with an ES layer in the middle—a design that promotes dense entanglement of structural and temporal features. For every ES layer in the pretrained model, we initialize one temporal block; together, these form $L$ interpolator blocks. The model is trained with the trajectory denoising loss (Eq.~\ref{eq:traj-loss}), freezing the pretrained ES layers. This yields a performant MD generative model without degrading conformer generation performance—an assurance not achieved in prior work. Appendix \ref{sec:temporal_cont} details the contribution of the temporal module and MD training, while Appendix \ref{sec:alphas}, \ref{sec:more_alpha} interpret the learned $\alpha$ values.




\textbf{Forward simulation and interpolation.} Our model naturally supports structure-conditioned MD generation: forward simulation conditions on the first frame $\rvx^{(0)}$, and interpolation on both $\rvx^{(0)}$ and $\rvx^{(T-1)}$. Conditioning frames are treated as control signals, passed with noisy frames through the interpolator, and removed before loss computation to ensure the loss is applied only to noisy frames.

\section{Experiments}

\begin{figure}[t]
  \centering
  \begin{subfigure}[t]{0.7\textwidth}
    \centering
    \caption*{\textbf{A.} Coverage and Matching Results on QM9 and GEOM‑Drugs}
    \resizebox{\textwidth}{!}{%
    \begin{tabular}{r|l!{\color{gray}}cccccccc}
      \toprule[1.5pt]
      & \multirow{2}{*}{\textbf{Method}} 
        & \multicolumn{2}{c}{\textbf{COV‑R (\%) $\uparrow$}}
        & \multicolumn{2}{c}{\textbf{MAT‑R (\AA) $\downarrow$}}
        & \multicolumn{2}{c}{\textbf{COV‑P (\%) $\uparrow$}}
        & \multicolumn{2}{c}{\textbf{MAT‑P (\AA) $\downarrow$}} \\
      \cmidrule(lr){3-4} \cmidrule(lr){5-6} \cmidrule(lr){7-8} \cmidrule(lr){9-10}
      &  & Mean & Med. & Mean & Med. & Mean & Med. & Mean & Med. \\
      \midrule[1pt]
      \multirow{3}{*}{\rotatebox{90}{\textbf{QM9}}}
        & \cellcolor{rowgray}\textcolor{gray}{\textsc{ConfGF}} & \cellcolor{rowgray}\textcolor{gray}{88.49} & \cellcolor{rowgray}\textcolor{gray}{94.31}
        & \cellcolor{rowgray}\textcolor{gray}{0.2673} & \cellcolor{rowgray}\textcolor{gray}{0.2685}
        & \cellcolor{rowgray}\textcolor{gray}{46.43} & \cellcolor{rowgray}\textcolor{gray}{43.41}
        & \cellcolor{rowgray}\textcolor{gray}{0.5224} & \cellcolor{rowgray}\textcolor{gray}{0.5124} \\
        & \cellcolor{rowgray}\textcolor{gray}{\textsc{GeoDiff‑A}} & \cellcolor{rowgray}\textcolor{gray}{90.54} & \cellcolor{rowgray}\textcolor{gray}{94.61}
        & \cellcolor{rowgray}\textcolor{gray}{0.2104} & \cellcolor{rowgray}\textcolor{gray}{0.2021}
        & \cellcolor{rowgray}\textcolor{gray}{52.35} & \cellcolor{rowgray}\textcolor{gray}{50.10}
        & \cellcolor{rowgray}\textcolor{gray}{0.4539} & \cellcolor{rowgray}\textcolor{gray}{0.4399} \\
        & \textsc{BasicES} & 87.62 & 92.03 & 0.2574 & 0.2613 & 58.12 & 53.24 & 0.4451 & 0.4445 \\
      \midrule[1pt]
      \multirow{3}{*}{\rotatebox{90}{\textbf{Drugs}}}
        & \cellcolor{rowgray}\textcolor{gray}{\textsc{ConfGF}} & \cellcolor{rowgray}\textcolor{gray}{62.15} & \cellcolor{rowgray}\textcolor{gray}{70.93}
        & \cellcolor{rowgray}\textcolor{gray}{1.1629} & \cellcolor{rowgray}\textcolor{gray}{1.1596}
        & \cellcolor{rowgray}\textcolor{gray}{23.42} & \cellcolor{rowgray}\textcolor{gray}{15.52}
        & \cellcolor{rowgray}\textcolor{gray}{1.7219} & \cellcolor{rowgray}\textcolor{gray}{1.6863} \\
        & \cellcolor{rowgray}\textcolor{gray}{\textsc{GeoDiff‑A}} & \cellcolor{rowgray}\textcolor{gray}{88.36} & \cellcolor{rowgray}\textcolor{gray}{96.09}
        & \cellcolor{rowgray}\textcolor{gray}{0.8704} & \cellcolor{rowgray}\textcolor{gray}{0.8628}
        & \cellcolor{rowgray}\textcolor{gray}{60.14} & \cellcolor{rowgray}\textcolor{gray}{61.25}
        & \cellcolor{rowgray}\textcolor{gray}{1.1864} & \cellcolor{rowgray}\textcolor{gray}{1.1391} \\
        & \textsc{BasicES} & 92.35 & 100.00 & 0.8340 & 0.8245 & 65.59 & 70.87 & 1.1389 & 1.0973 \\
      \bottomrule[1.5pt]
    \end{tabular}}
  \end{subfigure}%
  \hfill
  \begin{subfigure}[t]{0.25\textwidth}
    \centering
    \caption*{\textbf{B.} Generated Conformers}
    \includegraphics[width=\linewidth]{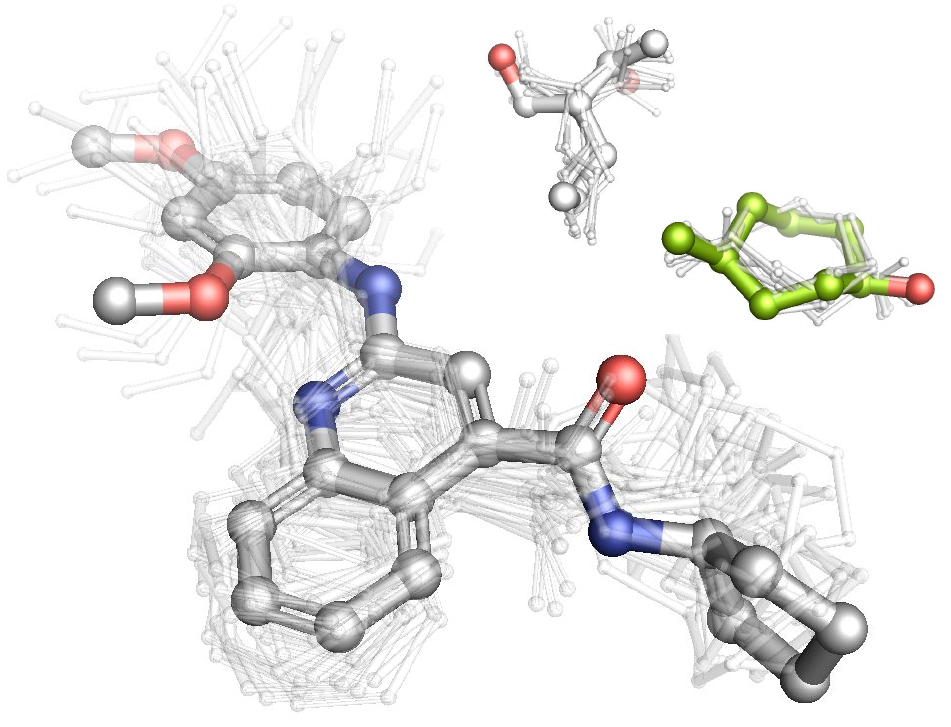}
  \end{subfigure}
  \caption{(\textbf{A}) reports performance of \textsc{BasicES} with borrowed numbers from \citet{xu2022geodiff} on SOTA baselines; (\textbf{B}) Example conformers from \textsc{BasicES} on both QM9 \& Drugs}
  \label{fig:pretrain_results}
    \vskip -0.2in
\end{figure}

We refer to our framework as \textsc{EGInterpolator} and evaluate its ability to generate realistic MD trajectories for unseen molecules under practical data constraints with limited MD simulations and diverse static structural data. We focus first on small organic molecules and then further extend our analysis and framework to tetrapeptides and protein monomers.

\label{sec:experiments}{}

\subsection{Conformer Pretraining}
\label{exp:pretraining}

\textbf{Datasets.} We use GEOM-QM9 \citep{ramakrishnan2014quantum} and GEOM-Drugs \citep{axelrod2022geomenergyannotatedmolecularconformations} following prior work in conformer generation \citep{xu2022geodiff, ganea2021geomoltorsionalgeometricgeneration}. Our spatial model is pretrained separately on each dataset, using the same train/validation splits as \citep{xu2022geodiff} and a preprocessing pipeline similar to \citep{ganea2021geomoltorsionalgeometricgeneration} (Appendix \ref{sec:preprocess}). This results in 37.7K/4.7K training/validation molecules with 188.6K/23.7K conformers for QM9 and 38.0K/4.8K training/validation molecules with 190.0K/23.7K conformers for Drugs. We then use the same test sets from \citep{xu2022geodiff, shi2021learning}, consisting of 200 distinct molecules, with 22.4K conformers for QM9 and 14.3K for Drugs.

\textbf{Experimental Setup \& Baselines} We train our base \textsc{BasicES} model on this conformer generation task up to 800K steps for both QM9 and Drugs, learning 1000 denoising steps over only heavy atom coordinates. We compare the performance of our pretrained spatial models to that reported in \citep{xu2022geodiff}, namely \textsc{GeoDiff-A} as well as \textsc{ConFGF} \citep{shi2021learning}.

\textbf{Metrics.} 
Per prior work in the space, we utilize the \textbf{Cov}erage and $\textbf{Mat}$ching metrics \citep{ganea2021geomoltorsionalgeometricgeneration, xu2022geodiff} (Appendix \ref{sec:conf_eval}). We report both the Recall (R) to measure diversity and Precision (P) to measure accuracy. We use default $\delta$ \textbf{Cov}erage values, $0.5$\AA\ / $1.25$\AA\ (QM9/Drugs).

\textbf{Results \& Discussion.}
Results are summarized in \Cref{fig:pretrain_results}. Our pretrained \textsc{BasicES} model performs competitively with prior SOTA methods. For QM9, we prioritize precision-based metrics relevant to MD pretraining, which leads to slightly lower COV/MAT-R scores but superior fidelity in conformer bond angle and bond length distributions (see Appendix \ref{sec:cov_p}).

\subsection{Molecular Dynamics Finetuning}
\label{exp:md_finetuning}
To generate MD data for diverse organic and drug-like molecules, we subsample from GEOM, resulting in 1109/1018/240 train/validation/test splits for QM9 and 1137/1044/100 for Drugs.  We then perform five, all-atom (including hydrogens), explicit-solvent simulations of 5~ns per molecule. In the test set, four trajectories are used as reference data and the fifth serves as an oracle baseline (\textsc{MD Oracle}). Full simulation and force field details are provided in the Appendix \ref{sec:MD}.


\textbf{Experimental Setup \& Baselines.} Unless otherwise noted, all models are trained with trajectory time-steps $\Delta t=5.2$~ps. We learn across heavy atoms and use 1000 denoising steps. We compare our \textsc{EGInterpolator} framework against several representative approaches. First, we evaluate against \textsc{GeoTDM} \citep{han2024geometric}, a recent all-atom trajectory diffusion model. We also implement Markovian autoregressive baselines using EGNN \citep{hoogeboom2022equivariant} and the Equivariant Transformer \citep{thölke2022torchmdnetequivarianttransformersneural} as push-forward networks, denoted \textsc{AR + EGNN} and \textsc{AR + ET}, respectively. Finally, inspired by \cite{costa2024equijumpproteindynamicssimulation}, we include a autoregressive diffusion baseline that adopts GeoTDM’s architecture, denoted \textsc{AR + GeoTDM}.

\begin{table*}[!t]
\centering
\vskip -0.2in
\caption{Performance Comparison on QM9 Unconditional Generation and Drugs Forward Simulation.}
\vskip -0.1in
\label{tab:performance_qm9_drugs}
\normalsize
\renewcommand{\arraystretch}{1.3}
\setlength{\tabcolsep}{6pt}
\resizebox{0.9\textwidth}{!}{%
\begin{tabular}{r|l|*{5}{cc}}
  \toprule[1.5pt]
  & \multirow{2}{*}{\textbf{Method}}
    & \multicolumn{10}{c}{\textbf{JSD} (Mean | Median) ($\downarrow$)} \\
  \cmidrule(lr){3-12}
  & & \multicolumn{2}{c}{Bond Angle} 
    & \multicolumn{2}{c}{Bond Length}
    & \multicolumn{2}{c}{Torsion}
    & \multicolumn{2}{c}{TICA\_0}
    & \multicolumn{2}{c}{TICA\_0,1} \\
  \midrule[1pt]
  \rowcolor{rowgray}
  \multirow{7}{*}{\rotatebox{90}{\textbf{QM9}}}
    & \textcolor{gray}{\textsc{MD Oracle}}
        & \textcolor{gray}{0.042} & \textcolor{gray}{0.028}
        & \textcolor{gray}{0.032} & \textcolor{gray}{0.031}
        & \textcolor{gray}{0.192} & \textcolor{gray}{0.134}
        & \textcolor{gray}{0.318} & \textcolor{gray}{0.291}
        & \textcolor{gray}{0.413} & \textcolor{gray}{0.394} \\
    & \textsc{AR + EGNN}
        & 0.702 & 0.677
        & 0.770 & 0.780
        & 0.702 & 0.761
        & 0.770 & 0.788
        & 0.820 & 0.824 \\
    & \textsc{AR + ET}
        & 0.705 & 0.746
        & 0.680 & 0.721
        & 0.553 & 0.586
        & 0.568 & 0.562
        & 0.783 & 0.786 \\
    & \textsc{AR + GeoTDM}
        & 0.752 & 0.746
        & 0.699 & 0.694
        & 0.466 & 0.506
        & 0.456 & 0.463
        & 0.714 & 0.719 \\
    & \textsc{GeoTDM}
        & 0.691 & 0.690
        & 0.676 & 0.670
        & 0.489 & 0.527
        & 0.449 & 0.453
        & 0.691 & 0.694 \\
    & \textbf{\textsc{EGInterpolator-Simple}}
        & 0.357 & 0.350 
        & 0.263 & 0.246 
        & 0.381 & 0.405 
        & 0.426 & 0.423
        & 0.652 & 0.655 \\
    & \textbf{\textsc{EGInterpolator-Casc}}
        & \textbf{0.305} & \textbf{0.292}
        & \textbf{0.210} & \textbf{0.188}
        & \textbf{0.363} & \textbf{0.380}
        & \textbf{0.417} & \textbf{0.406}
        & \textbf{0.636} & \textbf{0.642} \\
  \midrule[1pt]
  \rowcolor{rowgray}
  \multirow{7}{*}{\rotatebox{90}{\textbf{Drugs}}}
    & \textcolor{gray}{\textsc{MD Oracle}}
        & \textcolor{gray}{0.036} & \textcolor{gray}{0.023}
        & \textcolor{gray}{0.030} & \textcolor{gray}{0.028}
        & \textcolor{gray}{0.215} & \textcolor{gray}{0.131}
        & \textcolor{gray}{0.484} & \textcolor{gray}{0.494}
        & \textcolor{gray}{0.610} & \textcolor{gray}{0.630} \\
    & \textsc{AR + EGNN}
        & 0.663 & 0.655 
        & 0.748 & 0.784
        & 0.723 & 0.741
        & 0.716 & 0.731
        & 0.806 & 0.821 \\
    & \textsc{AR + ET}
        & 0.765 & 0.766
        & 0.733 & 0.745
        & 0.526 & 0.533
        & 0.565 & 0.558
        & 0.791 & 0.795 \\
    & \textsc{AR + GeoTDM}
        & 0.608 & 0.611
        & 0.613 & 0.613
        & 0.509 & 0.497
        & 0.504 & 0.505
        & 0.727 & 0.725 \\
    & \textsc{GeoTDM}
        & 0.640 & 0.645
        & 0.643 & 0.645
        & 0.498 & 0.503
        & 0.531 & 0.550
        & 0.712 & 0.720 \\
    & \textbf{\textsc{EGInterpolator-Simple}}
        & 0.208 & 0.192
        & 0.258 & 0.244
        & 0.385 & 0.399
        & 0.462 & 0.465
        & 0.660 & 0.662 \\
    & \textbf{\textsc{EGInterpolator-Casc}}
        & \textbf{0.173} & \textbf{0.153}
        & \textbf{0.1419} & \textbf{0.112}
        & \textbf{0.377} & \textbf{0.388}
        & \textbf{0.454} & \textbf{0.441}
        & \textbf{0.650} & \textbf{0.644} \\
  \bottomrule[1.5pt]
\end{tabular}%
}
\vskip -0.3in
\end{table*}

\begin{figure}[t!]
\vskip -0.1in
  \centering
  \includegraphics[width=0.9\linewidth]{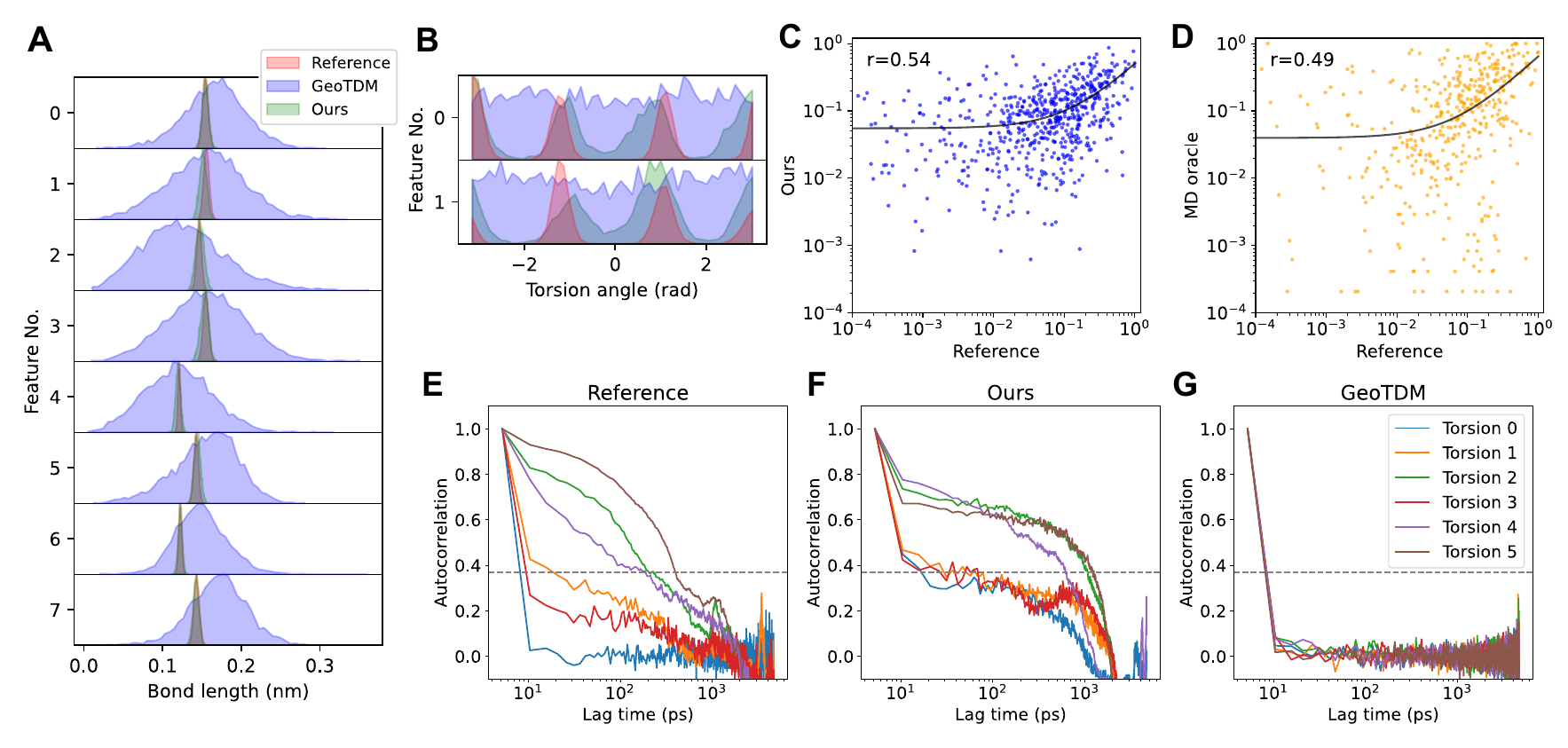}
  \caption{(\textbf{A}) Bond length and (\textbf{B}) torsion angle distributions from reference (\textcolor{black}{red}), our generations (\textcolor{teal}{green}), and GeoTDM (\textcolor{blue}{blue}). MSM occupancies from reference versus (\textbf{C}) our generations and (\textbf{D}) MD oracles. Autocorrelations of torsion angles for an example molecule from (\textbf{E}) reference, (\textbf{F}) our generations, and (\textbf{G}) GeoTDM. Gray dashed line marks the 1/e decorrelation threshold. }
  \vskip -0.3in
  \label{fig:rollouts}
\end{figure}

\subsection{Unconditional Generation}
\label{exp:unconditional_generation}

In the \textit{unconditional generation} setting, we train models to generate $2.6$~ns trajectories with no reliance on a reference frame. For evaluation, we sample ten unconditional generations per molecule, resulting in 26~ns of generated trajectories. We focus on QM9 for this setting given the smaller memory footprint of these molecules. In Appendix \ref{sec:drug_uncond}, we also highlight block diffusion roll-outs for GEOM-Drugs in an unconditional manner. 

\textbf{Distributional \textcolor{black}{\& Energetic} Results.}
\textcolor{black}{A prerequisite to good molecular dynamics,} we evaluate similarity between generated and reference trajectories using average Jensen–Shannon divergence (JSD) across key collective variable distributions: bond lengths and angles (energetically constrained features), as well as torsions. As shown in \Cref{tab:performance_qm9_drugs}, \textsc{EGInterpolator} consistently outperforms baselines, with the \textsc{Casc} variant further improving over \textsc{Simple}. \Cref{fig:rollouts}A,B examples illustrate gains over GeoTDM \citep{han2024geometric} \textcolor{black}{, with near-perfect alignment to ground-truth bond-length distributions, closely matched tri-modal torsion profiles, and similar trends across additional collective variables in Figure \ref{fig:qm9_contrast_egtn}}. Moreover, important potential energy analyses are reported in Appendix \ref{sec:energy}, \ref{sec:more_energy} \textcolor{black}{, where \textsc{EGInterpolator} shows markedly improved agreement over \textsc{GeoTDM}}.

\subsection{Forward Simulation}
\label{exp:forward_simulation}

In the \textit{forward simulation} setting, models are trained to generate $1.3$~ns trajectories conditioned on a reference frame. We then extend these to $5.2$~ns using successive block diffusion roll-outs, sampling five such trajectories per molecule. This setting focuses on GEOM-Drugs, targeting larger molecules. 

\textbf{Distributional \textcolor{black}{\& Energetic} Results.}
Across all metrics in \Cref{tab:performance_qm9_drugs}, \textsc{EGInterpolator} outperforms baselines and approaches the distributional fidelity of the replicate \textsc{MD Oracle} on torsions. We once again see that the \textsc{Casc} variant further improves \textsc{Simple}. Additionally, complementary potential energy analyses, \textcolor{black}{including quantifying error propagation in short (4-block) and long (16-block) diffusion roll-outs}, are reported in Appendix \ref{sec:energy}, \ref{sec:more_energy} and further support our methods.

\textbf{Dynamical Results.}
\textcolor{black}{Assessing the dynamical consistency of our model, \Cref{tab:performance_qm9_drugs} shows that our method outperforms baselines and approaches the MD oracle in the distribution of the leading \textit{time-lagged independent component analysis} (TICA) components, which capture the system’s slow dynamics.} We evaluate torsional dynamics via decorrelation time and find that \textsc{EGInterpolator} better captures distinct relaxation behaviors within molecules compared to GeoTDM (Fig. \ref{fig:rollouts}E,F,G), \textcolor{black}{although certain fast relaxations seem to be a challenge}. Furthermore, by constructing Markov State Models (MSMs) from torsion angles and clustering into 10 metastates, we observe agreement in metastate occupancy between generated and reference trajectories (Fig. \ref{fig:rollouts}C). Our model even surpasses MD oracle baselines in capturing coarse-grained dynamical distributions (Fig. \ref{fig:rollouts}D).

\begin{figure}[t!]
  \centering
  \includegraphics[width=0.9\linewidth]{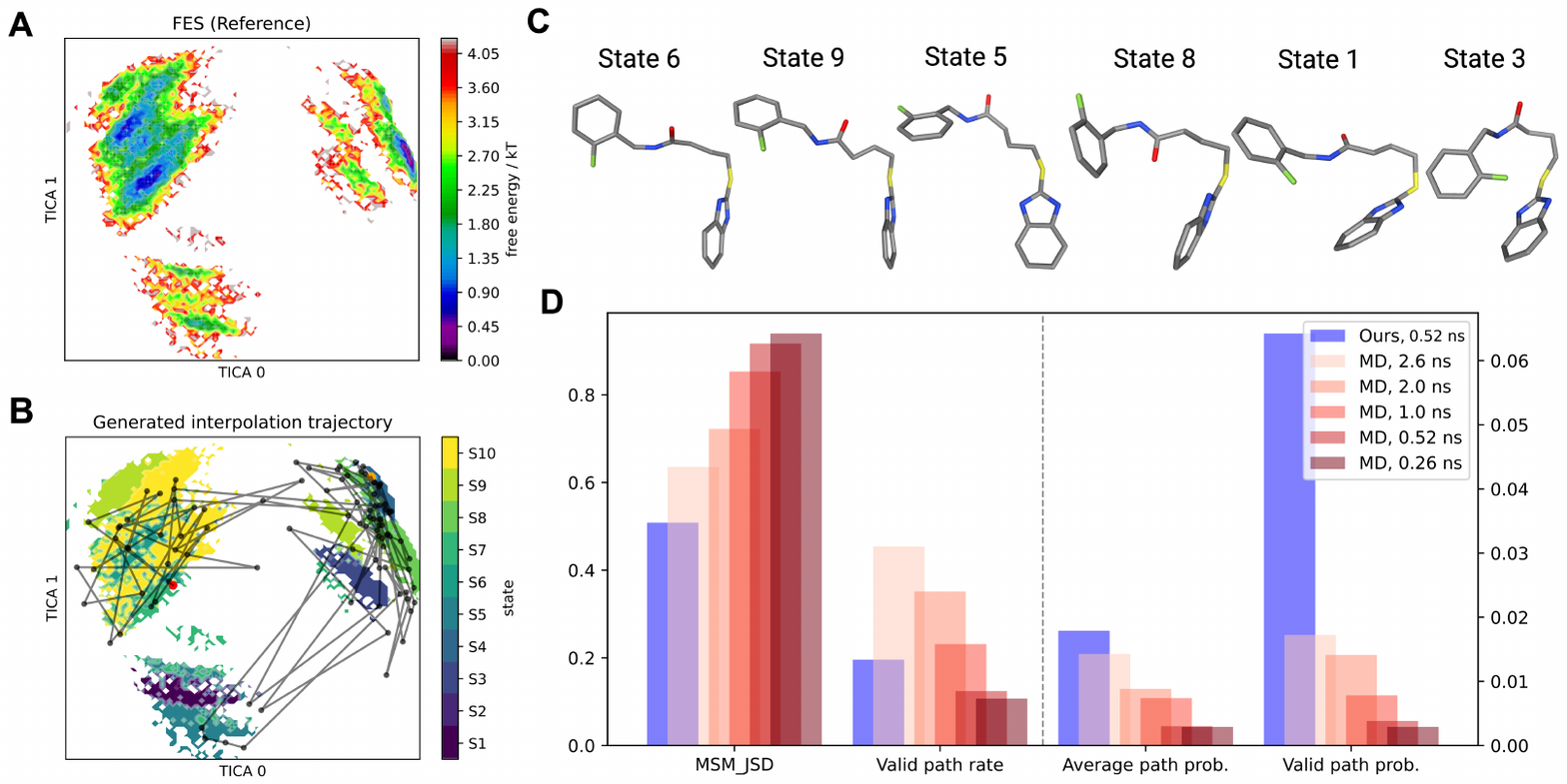}
  \caption{(\textbf{A}) Reference free energy surface along the top two TICA components. (\textbf{B}) Generated interpolation trajectory projected onto the reference surface (red = start, orange = end). Surface is colored by metastate assignment. (\textbf{C}) Key frames from intermediate metastates. (\textbf{D}) Statistics comparing JSD, valid path rate, average path probability, and valid path probability for generated trajectories and replicate MD oracles.}
  \label{fig:interpolation}
  \vskip -0.23in
\end{figure}

\subsection{Interpolation}
\label{exp:interpolation}

In the \textit{interpolation} (or \textit{transition path sampling}) task, models generate $0.52$~ns trajectories conditioned on both start and end frames. As this setting requires endpoint conditioning, we compare only to the ML baseline GeoTDM \citep{han2024geometric}. Results are reported for Drugs (QM9 in Appendix \ref{sec:qm9_inter}), using the MSM pipeline from \cite{jing2024generativemodelingmoleculardynamics} to benchmark against MD oracles of varying lengths. Given prior stronger empirical performance, we use the \textsc{Casc} variant for this task.

\textbf{Evaluation.}
Following \cite{jing2024generativemodelingmoleculardynamics}, we frame interpolation as transition path sampling. An MSM built from reference trajectories defines two distant metastates as start and end states, from which we sample 900 frame pairs. Our model generates 900 corresponding trajectories, evaluated against reference and MD oracles using JSD over metastate occupancies. Owing to the high barrier and rare transitions, we also report valid path rate, average path probability, and valid path probability.

\textbf{Results.}
As shown in Fig.~\ref{fig:interpolation}D, our $0.52$~ns trajectories yield the lowest JSD and highest average path probability, outperforming MD oracles of equal length and matching longer ones in path quality. Although long oracles achieve higher valid path rates, our model excels at generating high-probability valid transitions. Fig.~\ref{fig:interpolation}A,B further show a generated trajectory traversing key metastates on the reference FES, efficiently reaching the target end states.

\subsection{Ablation Study}
\begin{wraptable}[8]{r}{0.6\textwidth} 
\vskip -0.2in
\caption{Ablation on QM9 Unconditional Generation and Drugs Forward Simulation.}
\label{tab:naive_abalation}
\vspace{-\baselineskip} 
\centering
\footnotesize
\setlength{\tabcolsep}{2.5pt}
\renewcommand{\arraystretch}{1.0}
\resizebox{\linewidth}{!}{%
\begin{tabular}{r|l|*{4}{cc}}
  \toprule[1.2pt]
  & \multirow{2}{*}{\textbf{Method}}
    & \multicolumn{8}{c}{\textbf{JSD} (Mean\,|\,Median)\,(↓)} \\
  \cmidrule(lr){3-10}
  &  & \multicolumn{2}{c}{Bond Angle}
       & \multicolumn{2}{c}{Bond Length}
       & \multicolumn{2}{c}{Torsion}
       & \multicolumn{2}{c}{TICA\_0,1} \\
  \cmidrule(lr){3-4}\cmidrule(lr){5-6}\cmidrule(lr){7-8}\cmidrule(lr){9-10}
  &  & Mean & Median & Mean & Median & Mean & Median & Mean & Median \\
  \midrule
  \multirow{2}{*}{\rotatebox{90}{\textbf{QM9}}}
    & \textsc{EGInterpolator-N}
        & 0.538 & 0.538 & 0.583 & 0.580 & 0.441 & 0.494 & 0.680 & 0.685 \\
    & \textbf{\textsc{EGInterpolator}}
        & \textbf{0.305} & \textbf{0.292} & \textbf{0.210} & \textbf{0.188} & \textbf{0.363} & \textbf{0.380} & \textbf{0.636} & \textbf{0.642} \\
  \midrule
  \multirow{2}{*}{\rotatebox{90}{\textbf{Drugs}}}
    & \textsc{EGInterpolator-S}
        & 0.325 & 0.330 & 0.330 & 0.321 & 0.414 & 0.419 & 0.673 & 0.672 \\
    & \textsc{EGInterpolator-N}
        & 0.332 & 0.332 & 0.386 & 0.383 & 0.455 & 0.466 & 0.698 & 0.703 \\
    & \textbf{\textsc{EGInterpolator}}
        & \textbf{0.173} & \textbf{0.153} & \textbf{0.142} & \textbf{0.112} & \textbf{0.377} & \textbf{0.388} & \textbf{0.650} & \textbf{0.644} \\
  \bottomrule[1.2pt]
\end{tabular}%
}
\vspace{-\baselineskip} 
\end{wraptable}

We present main ablations here, with additional in Appendix ~\ref{sec:ablations}.

\textbf{Structural Pretraining.}
We evaluate \textsc{EGInterpolator-\textbf{N}}aive, trained without conformer pretraining. In Table \ref{tab:naive_abalation}, this yields degraded bond length, angle, torsion fidelity, and diminished TICA\_0,1. 

\textbf{Interpolation and Architecture}
\textcolor{black}{\textsc{EGInterpolator-\textbf{Stack}} removes (1) our cascaded layer design and (2) interpolation, using a residual stack of temporal modules atop pretrained spatial layers as a finetuned head. In Table \ref{tab:naive_abalation}, this variant underperforms \textsc{EGInterpolator}, underscoring the importance of our interpolation architecture.}

\begin{figure}[t!]
  \centering
  \includegraphics[width=0.94\linewidth]{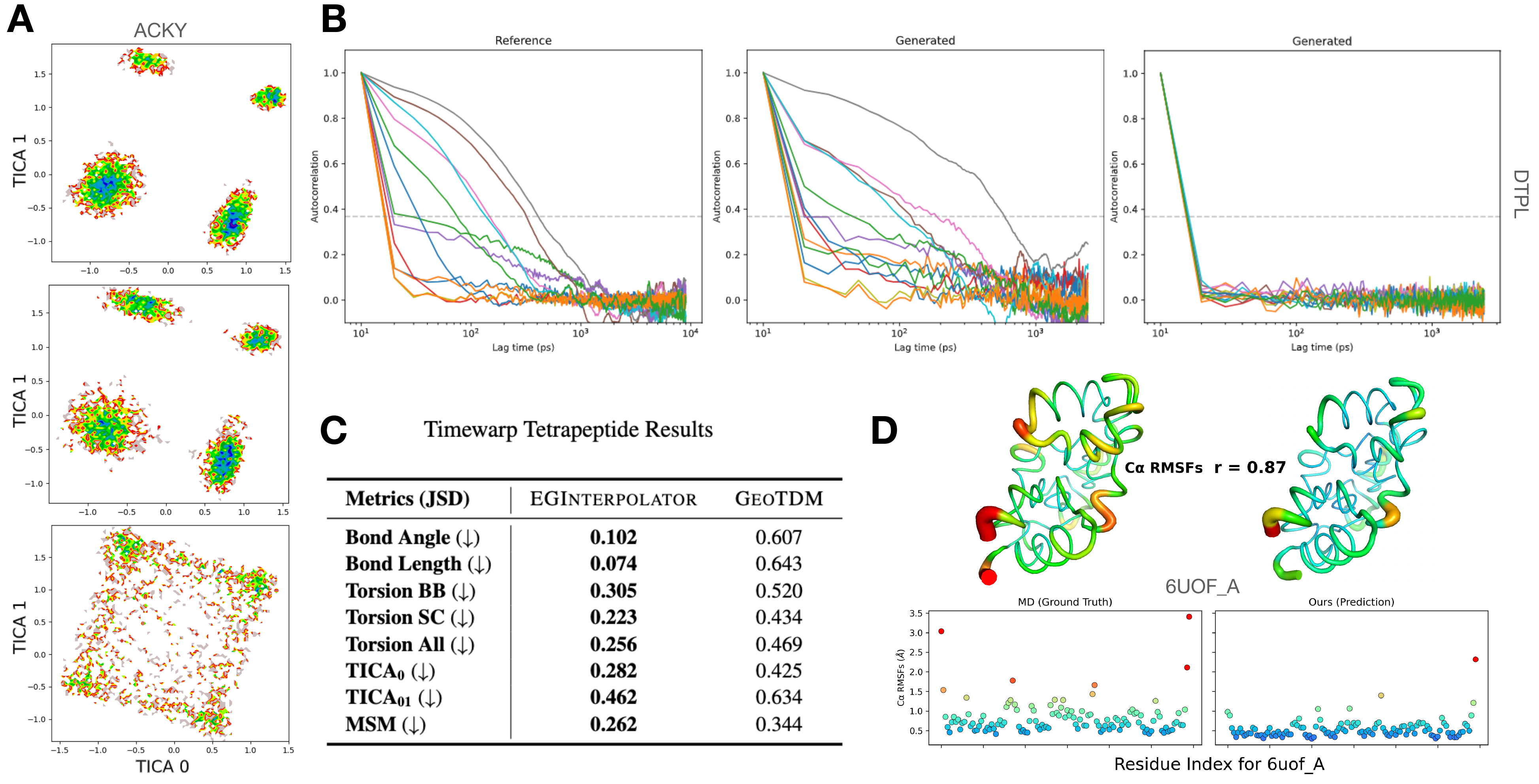}
  \vskip -0.1in
  \caption{(\textbf{A}) Free energy surface along top two TICA components for a reference (\textbf{top}), ours (\textbf{top}), and GeoTDM (\textbf{bottom}) tetrapeptide trajectory. (\textbf{B}) Torsion auto-correlations from ours reference (\textbf{left}), ours (\textbf{middle}), and GeoTDM (\textbf{right}) (\textbf{C}) Key collective variable distribution JSD metrics. (\textbf{D}) C-$\alpha$ RMSF analysis and visualization for selected protein dynamics generation.}
  \label{fig:peptide_prot}
\vskip -0.23in
\end{figure}

\subsection{Tetrapeptides}
\textcolor{black}{
We extend our evaluation to tetrapeptides using the Timewarp dataset \citep{klein2023timewarptransferableaccelerationmolecular}, which comprises 1500/400/433 train/validation/test sequences simulated for up to 50 ns (train) and 500 ns (validation/test) under all-atom implicit-solvent MD. Models are trained with $\Delta t = 10$ ps and generate 2.5 ns rollouts conditioned on a reference frame, which are iteratively composed into 10 ns trajectories, with five samples per peptide. We compare \textsc{EGInterpolator} against \textsc{GeoTDM}, evaluating all methods against 50 ns reference simulations from the test set. As no conformer dataset exists for tetrapeptides, we construct one directly from the Timewarp training frames, detailed in Appendix \ref{sec:preprocess}. This yields 1057/200 training and validation peptides (10.5K/2.7K conformers).
}

\subsubsection{Results}

 



\textcolor{black}{
As shown in Figure \ref{fig:peptide_prot}C, our method significantly lowers the JSD of both backbone and side-chain torsions relative to GeoTDM. This advantage is reflected in the free-energy landscapes (Figure \ref{fig:peptide_prot}A), where \textsc{EGInterpolator} exhibits sharper, better-resolved basins, while GeoTDM remains diffuse and unstructured. These gains also carry over to potential-energy metrics reported in Appendix~\ref{sec:energy}. Beyond per-frame fidelity, our model attains markedly improved dynamical consistency, achieving lower JSD in the leading TICA components and MSM occupancies (Figure \ref{fig:peptide_prot}C), as well as well-aligned de-correlation times (Figure \ref{fig:peptide_prot}B).
}

\subsection{Protein Simulation}
\textcolor{black}{
We extend our framework to protein monomer dynamics using the ATLAS dataset \citep{VanderMeersche2024}, training a forward-simulation model and following the data splits of \cite{jing2024alphafoldmeetsflowmatching}. Building on Boltz1 \citep{wohlwend2024boltz1}, we incorporate a temporal module—pair-biased sliding-window spatial attention, per Boltz1, combined with interleaved RoPE-based temporal attention layers \citep{su2023roformerenhancedtransformerrotary} across atoms/tokens—to enable trajectory generation. During training, we apply random rigid-body augmentations and superpose trajectories to a zero-reference frame. Our experiments trained on 200 proteins, with 30/50 for validation/testing, generating 250-frame segments at 100 ps and composing four such blocks for 100 ns rollouts. Preliminary results on the example protein from \citet{jing2024generative} are shown in \Cref{fig:peptide_prot}. Using 100 diffusion steps, generations take 0.12 (s) per frame.
}

\section{Conclusion}
\label{sec:conclusion}
We have introduced a diffusion model for modeling MD distributions by pretraining a structure model on conformer dataset and then finetuning on trajectory dataset. At the core of our approach is an module named \textsc{EGInterpolator} that mixes the output from the pretrained structure model and the temporal model to captures the temporal dependency. Our approach demonstrates strong performance in terms of producing realistic MD trajectories on diverse benchmarks and tasks.

\section*{Acknowledgment}
We thank the anonymous reviewers for their  feedback on improving the manuscript. This work was supported by ARO
(W911NF-21-1-0125), ONR (N00014-23-1-2159), and the CZ Biohub.

\appendix
\section{Experiments Continued}
\subsection{Comparison to MDGen on the Tetrapeptide Dataset}
\textcolor{black}{
We benchmark our model against MDGen \citep{jing2024generative}, which parameterizes tetrapeptide conformations explicitly through backbone and sidechain torsional angles. Using the subsampling procedure described in Appendix B.1.1, we convert MDGen training trajectories into a peptide conformer dataset and augment it with a pruned TimeWarp-derived conformer set to avoid data leakage. We then fine-tune our conformer model on this combined dataset, initializing from GEOM-DRUGS pretrained weights. As shown in Table~\ref{tab:mdgen_comparison}, our downstream peptide \textsc{EGInterpolator} preserves fine-grained structural information and superior fidelity in bond lengths and bond angles, which are essential for accurate all-heavy-atom molecular dynamics and modeling small molecules. However, it underperforms MDGen on torsional distributions and torsion-derived dynamical metrics. This gap suggests that MDGen’s torsion-centric representation confers an advantage in capturing peptide rotational behavior, where our model’s strengths are geometric consistency at an atomic level.
}

\begin{table}[ht!]
\centering
\small
\caption{Results on MDGen Tetrapeptide Dataset}
\label{tab:mdgen_comparison}
\setlength{\tabcolsep}{8pt}
\renewcommand{\arraystretch}{1.15}

\begin{tabular}{l|cc}
  \toprule[1.2pt]
  \textbf{Metrics (JSD)} & \textsc{EGInterpolator} & \textsc{MDGen} \\
  \midrule
  \textbf{Bond Angle} (↓)      & 0.092 &    N/A \\
  \textbf{Bond Length} (↓)     & 0.056 &    N/A \\
  \textbf{Torsion BB} (↓)      & 0.378 &   0.130  \\
  \textbf{Torsion SC} (↓)      & 0.189 &   0.093 \\
  \textbf{Torsion All} (↓)     & 0.265 &   0.109 \\
  \textbf{TICA\textsubscript{0}} (↓)     & 0.409 & 0.230 \\
  \textbf{TICA\textsubscript{01}} (↓)    & 0.568 &  0.316 \\
  \textbf{MSM} (↓)             & 0.312 &  0.235 \\
  \bottomrule[1.2pt]
\end{tabular}

\end{table}
\subsection{Optimizing for Conformer Precision Metrics}
\label{sec:cov_p}
As discussed in Section~5.1, we prioritize precision‐based conformer quality metrics when selecting our base structure model. While this may come at the cost of lower COV/MAT-R scores, we observe superior fidelity in bond length, bond angle, and torsion angle distributions—an aspect we consider more critical for a pretrained structure module. 

\begin{table}[ht]
\centering
\caption{Conformer metrics on QM9 compared between two checkpoints.}
\footnotesize
\renewcommand{\arraystretch}{1.15}
\resizebox{0.85\linewidth}{!}{%
\begin{tabular}{lcccccccc}
  \toprule
  \textbf{Checkpoint}
    & \multicolumn{2}{c}{\textbf{COV-R (\%) $\uparrow$}}
    & \multicolumn{2}{c}{\textbf{MAT-R (\AA) $\downarrow$}}
    & \multicolumn{2}{c}{\textbf{COV-P (\%) $\uparrow$}}
    & \multicolumn{2}{c}{\textbf{MAT-P (\AA) $\downarrow$}} \\
  \cmidrule(lr){2-3} \cmidrule(lr){4-5} \cmidrule(lr){6-7} \cmidrule(lr){8-9}
  & Mean & Med.\  & Mean & Med.\  & Mean & Med.\  & Mean & Med.\  \\
  \midrule
  \textit{99}    & \textbf{90.18} & \textbf{94.59} & 0.2969 & 0.3049 & 55.23 & 51.36 & 0.4932 & 0.4823 \\
  \textit{539}   & 87.62 & 92.03 & \textbf{0.2574} & \textbf{0.2613} & \textbf{58.12} & \textbf{53.24} & \textbf{0.4451} & \textbf{0.4445} \\
  \bottomrule
\end{tabular}
}
\label{tab:conf_comparison}
\end{table}

We highlight this point using two checkpoints of the \textsc{BasicES} model trained on QM9. In \Cref{tab:conf_comparison} we can see that while \textit{\textit{539}} lacks in COV-R, it does substantially better than \textit{99} in COV/MAT-P metrics. In \Cref{fig:conf_dist}, we then see that \textit{539} reflects high quality bond angle, length, and torsion distributions, as compared to \textit{99}. We select checkpoint \textit{539} for the conformer results reported in Section~5.1 and for training the downstream trajectory models.

\subsection{QM9 Interpolation}
\label{sec:qm9_inter}

\begin{figure}[ht]
  \centering
  \includegraphics[width=0.8\linewidth]{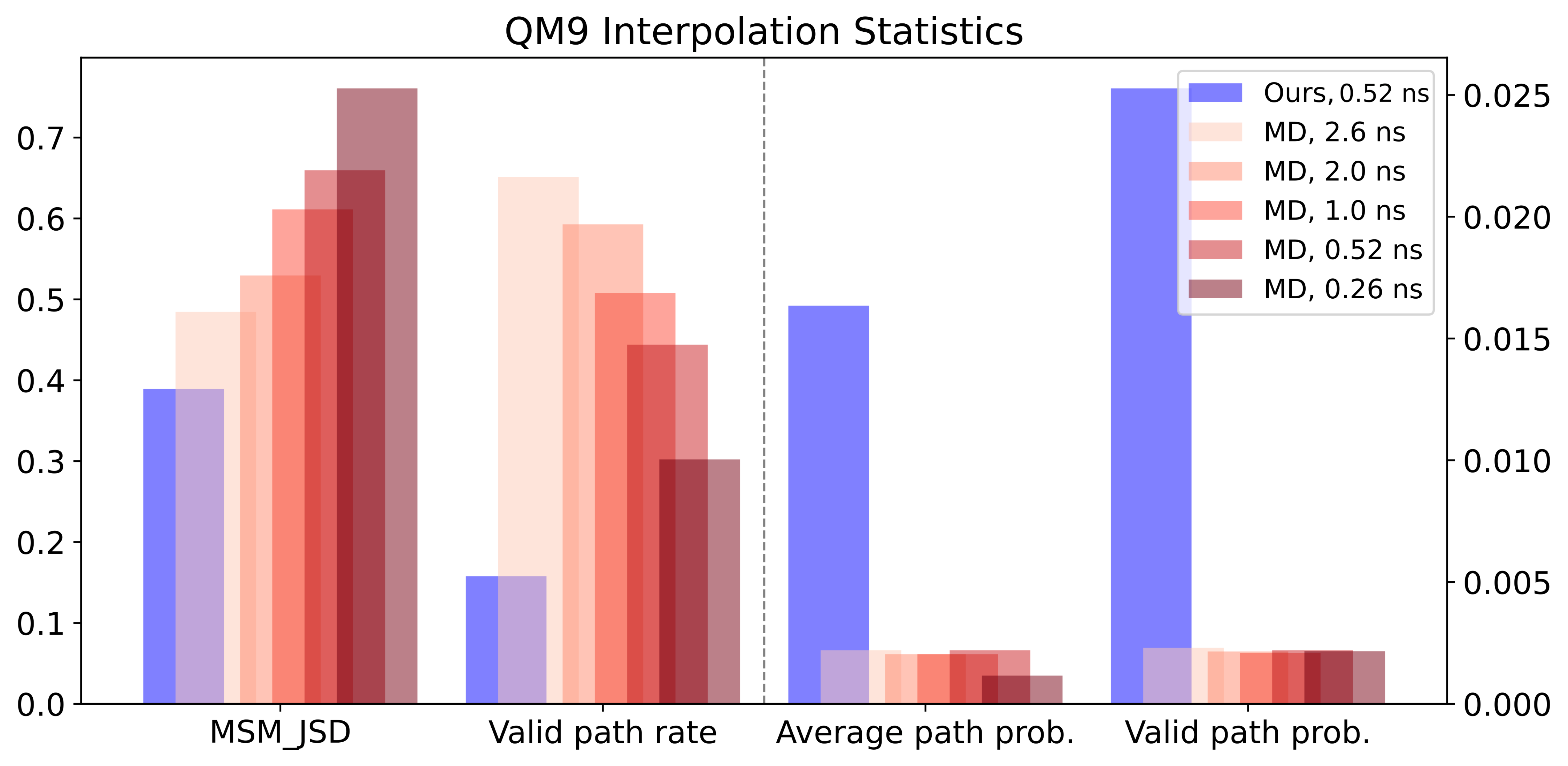} 
  \caption{Statistics evaluating the JSD with the
reference trajectories, valid path rate, average path probability, and valid path probability of our
generated trajectories and replicate MD oracles.}
  \label{fig:qm9_inter_stats}
\end{figure}

For the interpolation task on QM9 dataset, as shown in \Cref{fig:qm9_inter_stats}, our 0.52~ns trajectories from \textsc{Casc} consistently achieve the lowest Jensen-Shannon Divergence (JSD) and the highest average path probability, outperforming MD oracles of the same duration. It reveals that our method can samples transition paths between far metastates more efficiently. While the MD oracles exhibit higher valid path rates in this setting, our model still performs competitively in generating high-probability valid transitions. 

\Cref{fig:qm9_interpolation} illustrates several free energy surfaces (FES) and corresponding metastate assignments for representative molecules. We observe that the generated trajectories successfully traverse key intermediate states and reach the appropriate end states, demonstrating the model’s ability to perform efficient and meaningful transition path sampling.

\begin{table}[ht]
\centering
\caption{Performance comparison on Drugs Forward Simulation versus Unconditional Generation. Reported values are JSD (Mean | Median) $\downarrow$.}
\label{tab:drugs_uncond}
\footnotesize
\renewcommand{\arraystretch}{1.0}
\setlength{\tabcolsep}{6pt}
\begin{tabular}{lccccc}
\toprule
\textbf{Method} & \textbf{Bond Angle} & \textbf{Bond Length} & \textbf{Torsion} & \textbf{TICA$_0$} & \textbf{TICA$_{0,1}$} \\
\midrule
\textsc{GeoTDM} & 0.640 0.645 & 0.643  0.645 & 0.498 0.503 & 0.531 0.550 & 0.712 0.720 \\
\textsc{EGInterpolator-Simple} & 0.208 0.192 & 0.258 0.244 & 0.385 0.399 & 0.462 0.465 & 0.660 0.662 \\
\textsc{EGInterpolator-Casc} & 0.173 0.153 & 0.142 0.112 & 0.377 0.388 & 0.454 0.441 & 0.650 0.644 \\
\textsc{EGInterpolator-Casc-U} & 0.220 0.202 & 0.195 0.168 & 0.414 0.429 & 0.499 0.496 & 0.689 0.697 \\
\bottomrule
\end{tabular}
\end{table}

\subsection{Drugs Unconditional Generation}
\label{sec:drug_uncond}
Since the molecules in the Drugs dataset are more challenging systems than those in QM9, we further ablate the reliance on the starting reference frame by conducting an unconditional generation experiment (\textsc{U}). Specifically, we retain the same experimental set-up but remove conditioning of the first block on a ground-truth frame, and retrain a new unconditional generation model. As shown in Table \ref{tab:drugs_uncond}, while performance does not match our \textsc{EGInterpolator-Casc} trained with forward simulation, the unconditional variant still surpasses \textsc{GeoTDM} trained with forward simulation by a significant margin in terms of bond angle, bond length, and torsion distribution fidelity.

\subsection{Drugs Long Simulation}
\label{sec:drug_uncond}
\textcolor{black}{
To more rigorously evaluate generation quality, we repeat the forward-simulation experiments on DRUGS using a long 16-block roll-out of 20.8 ns, generating one trajectory per molecule. Although this extended roll-out exhibits some degradation relative to the parallelized version—likely due to accumulated error propagation quantified in Section \ref{sec:energy}—our method still outperforms all baselines, including their parallel 4-block configurations, shown in Table \ref{tab:add_performance_qm9_drugs}.}

\subsection{Multitask/Modal Learning}
\label{sec:mutli}
\textcolor{black}{
To further demonstrate our framework’s ability to generalize across molecular dynamics regimes, we first pre-train a conformer model and then train a dynamics interpolator jointly on QM9 and DRUGs for both forward simulation and unconditional generation. We benchmark this unified model against single-task counterparts on each dataset. As shown in Table \ref{tab:add_performance_qm9_drugs}, the unified model consistently outperforms all baselines, and notably achieves improved performance on QM9—indicating that pretraining on more diverse and chemically complex systems can enhance dynamics generation quality even on previously unseen molecules.}

\begin{table}[ht]
\centering
\caption{\textcolor{black}{Additional Performance Comparisons on QM9 Unconditional Generation and Drugs Forward Simulation.}}
\label{tab:add_performance_qm9_drugs}
\renewcommand{\arraystretch}{1.3}
\setlength{\tabcolsep}{6pt}
\resizebox{0.85\textwidth}{!}{%
\begin{tabular}{r|l|*{5}{cc}}
  \toprule[1.5pt]
  & \multirow{2}{*}{\textbf{Method}}
    & \multicolumn{10}{c}{\textbf{JSD} (Mean | Median) ($\downarrow$)} \\
  \cmidrule(lr){3-12}
  & & \multicolumn{2}{c}{Bond Angle} 
    & \multicolumn{2}{c}{Bond Length}
    & \multicolumn{2}{c}{Torsion}
    & \multicolumn{2}{c}{TICA\textsubscript{0}}
    & \multicolumn{2}{c}{TICA\textsubscript{0,1}} \\
  \midrule[1pt]
  \rowcolor{rowgray}
  \multirow{4}{*}{\rotatebox{90}{\textbf{QM9}}}
    & \textcolor{gray}{\textsc{MD Oracle}}
        & \textcolor{gray}{0.042} & \textcolor{gray}{0.028}
        & \textcolor{gray}{0.032} & \textcolor{gray}{0.031}
        & \textcolor{gray}{0.192} & \textcolor{gray}{0.134}
        & \textcolor{gray}{0.318} & \textcolor{gray}{0.291}
        & \textcolor{gray}{0.413} & \textcolor{gray}{0.394} \\
    & \textsc{GeoTDM}
        & 0.691 & 0.690
        & 0.676 & 0.670
        & 0.489 & 0.527
        & 0.449 & 0.453
        & 0.691 & 0.694 \\
    & \textbf{\textsc{EGInterpolator}}
        & 0.305 & 0.292
        & 0.210 & 0.188
        & 0.363 & 0.380
        & 0.417 & 0.406
        & 0.636 & 0.642 \\
    & \textbf{\textsc{EGInterpolator-Both}}
        & \textbf{0.231} & \textbf{0.219}
        & \textbf{0.168} & \textbf{0.158}
        & \textbf{0.348} & \textbf{0.367}
        & \textbf{0.393} & \textbf{0.390}
        & \textbf{0.623} & \textbf{0.631} \\
  \midrule[1pt]
  \rowcolor{rowgray}
  \multirow{5}{*}{\rotatebox{90}{\textbf{Drugs}}}
    & \textcolor{gray}{\textsc{MD Oracle}}
        & \textcolor{gray}{0.036} & \textcolor{gray}{0.023}
        & \textcolor{gray}{0.030} & \textcolor{gray}{0.028}
        & \textcolor{gray}{0.215} & \textcolor{gray}{0.131}
        & \textcolor{gray}{0.484} & \textcolor{gray}{0.494}
        & \textcolor{gray}{0.610} & \textcolor{gray}{0.630} \\
    & \textsc{GeoTDM}
        & 0.640 & 0.645
        & 0.643 & 0.645
        & 0.498 & 0.503
        & 0.531 & 0.550
        & 0.712 & 0.720 \\
    & \textbf{\textsc{EGInterpolator}}
        & \textbf{0.173} & \textbf{0.153}
        & \textbf{0.142} & \textbf{0.112}
        & \textbf{0.377} & \textbf{0.388}
        & \textbf{0.454} & \textbf{0.441}
        & \textbf{0.650} & \textbf{0.644} \\
    & \textbf{\textsc{EGInterpolator-Both}}
        & 0.212 & 0.197
        & 0.216 & 0.195
        & 0.417 & 0.434
        & 0.488 & 0.506
        & 0.681 & 0.679 \\
    & \textbf{\textsc{EGInterpolator-Long}}
        & 0.180 & 0.155
        & 0.147 & 0.116
        & 0.404 & 0.411
        & 0.484 & 0.484
        & 0.685 & 0.680 \\
  \bottomrule[2.5pt]
\end{tabular}%
}
\vskip 0.2in
\end{table}

\subsection{Energy-based Analysis}
\label{sec:energy}
In addition to evaluating collective variable distributions and MSM metrics as measures of trajectory fidelity, we further assess model rigor by examining the energy profiles of generated trajectories. Per-frame energies are estimated using TorchANI2x \citep{doi:10.1021/acs.jcim.0c00451} and reported in Hartrees. Alongside the results presented in this section, we also provide energy comparisons to ground truth trajectories for representative molecules from both datasets in Table \ref{tab:energy_means_stds}. 
\begin{table}[ht]
\centering
\caption{%
\textbf{Top:} Average Wasserstein-1 (W1) distance between predicted and ground-truth (GT) energy profiles for \textsc{EGInterpolator-Casc} and \textsc{GeoTDM} across dataset test sets. 
\textbf{Bottom:} Per-block W1 analysis in forward simulation roll-outs for Drugs.
}
\footnotesize
\renewcommand{\arraystretch}{1.15}

\begin{tabular}{lcc}
  \toprule
  \textbf{Dataset} & \textbf{EGInterpolator vs GT W1} $\downarrow$ & \textbf{GeoTDM vs GT W1} $\downarrow$ \\
  \midrule
  QM9   & \textbf{0.8127} & 2.9201 \\
  Drugs & \textbf{0.7728} & 12.7664 \\
  \bottomrule
\end{tabular}

\vspace{0.75em}

\begin{tabular}{lcc}
  \toprule
  \textbf{Block} & \textbf{EGInterpolator vs GT W1} $\downarrow$ & \textbf{GeoTDM vs GT W1} $\downarrow$ \\
  \midrule
  1 & \textbf{0.2454} & 11.2398 \\
  2 & \textbf{0.3654} & 12.8999 \\
  3 & \textbf{0.3656} & 13.0270 \\
  4 & \textbf{0.3702} & 13.1235 \\
  \bottomrule
\end{tabular}

\label{tab:w1_energy_diff}
\end{table}

\begin{table}[ht]
\centering
\caption{%
\textcolor{black}{\textbf{Top:} Average Wasserstein-1 (W1) distance between predicted and ground-truth (GT) energy profiles for \textsc{EGInterpolator} and \textsc{GeoTDM} across tetrapeptides. 
\textbf{Bottom:} Per-block W1 analysis of forward simulation roll-outs.
}}
\footnotesize
\renewcommand{\arraystretch}{1.15}

\begin{tabular}{lc}
  \toprule
  \textbf{Metric} & \textbf{\textsc{EGInterpolator} vs GT W1} $\downarrow$ \\
  \midrule
  Overall Energy W1 & \textbf{0.3806} \\
  \textsc{GeoTDM} Energy W1 & 12.8636 \\
  \bottomrule
\end{tabular}

\vspace{0.9em}

\begin{tabular}{lcccc}
  \toprule
  \textbf{Block} & \textbf{1} & \textbf{2} & \textbf{3} & \textbf{4} \\
  \midrule
  \textsc{EGInterpolator} W1 & \textbf{0.2638} & \textbf{0.3912} & \textbf{0.4196} & \textbf{0.4494} \\
  \textsc{GeoTDM} W1  & 12.4955 & 12.9417 & 13.0007 & 13.0681 \\
  \bottomrule
\end{tabular}

\label{tab:tetrapeptide_w1_energy}
\end{table}

\subsubsection{Overall Results}
In Table~\ref{tab:w1_energy_diff}, ~\ref{tab:tetrapeptide_w1_energy}, we report the Wasserstein-1 (W1) distance between the energy distributions of generated trajectories and the ground-truth (GT) trajectories, averaged across the test sets of all datasets. Our framework achieves substantially lower W1 distances than the \textsc{GeoTDM} baseline, demonstrating much closer correspondence to the GT energy profiles.

\subsubsection{Block Diffusion Deterioration}
In Tables~\ref{tab:w1_energy_diff} and \ref{tab:tetrapeptide_w1_energy}, we address a central concern in forward roll-outs using block diffusion: the potential for error accumulation and degradation in sample fidelity over time. To quantify this, we conduct a block-wise analysis of the generated trajectories and observe that our framework remains well aligned with ground-truth energy distributions, exhibiting only mild deterioration—most notably between Blocks 1 and 2. \textcolor{black}{Extending this analysis to longer roll-outs, we find that unlike GeoTDM, our model does not collapse to degenerate energy states, though errors begin to compound beyond approximately 8 frames. Mitigating this effect is a promising direction for future work, where incorporating force or energy-based guidance during training or inference may further improve long-horizon stability.}

\begin{table}[t]
\centering
\caption{\textcolor{black}{\textbf{Block-wise Wasserstein-1 Progression.} Mean W1 error across 250-frame blocks during forward simulation.}}
\label{tab:w1_block_progression}
\small
\setlength{\tabcolsep}{10pt}
\renewcommand{\arraystretch}{1.15}
\begin{tabular}{c|c|c}
\toprule[1.2pt]
\textbf{Block} & \textbf{Frame Range} & \textbf{Mean W1} $(\downarrow)$ \\
\midrule
1  & 1--250     & 0.2303 \\
2  & 251--500   & 0.2778 \\
3  & 501--750   & 0.3693 \\
4  & 751--1000  & 0.2769 \\
5  & 1001--1250 & 0.4541 \\
6  & 1251--1500 & 0.3962 \\
7  & 1501--1750 & 0.3830 \\
8  & 1751--2000 & 0.3805 \\
9  & 2001--2250 & 0.5765 \\
10 & 2251--2500 & 0.5861 \\
11 & 2501--2750 & 0.6666 \\
12 & 2751--3000 & 0.3979 \\
13 & 3001--3250 & 0.4276 \\
14 & 3251--3500 & 0.5328 \\
15 & 3501--3750 & 0.4830 \\
16 & 3751--4000 & 0.5192 \\
\bottomrule[1.2pt]
\end{tabular}

\vspace{6pt}
\setlength{\tabcolsep}{6pt}
\renewcommand{\arraystretch}{1.1}
\begin{tabular}{l|c}
\toprule[1.2pt]
\textbf{Temporal Region} & \textbf{Mean W1} $(\downarrow)$ \\
\midrule
Early (Blocks 1--4; Frames 1--1000)      & 0.3460 \\
Mid (Blocks 5--12; Frames 1001--3000)    & 0.5568 \\
Late (Blocks 13--16; Frames 3001--4000)  & 0.4906 \\
\bottomrule[1.2pt]
\end{tabular}
\vskip -0.15in
\end{table}

\subsection{Trajectory Model Ablations}
\label{sec:ablations}

\subsubsection{Frozen \textsc{BasicES}}
As mentioned in Section 5.6, we assess the benefit of fine-tuning the frozen spatial encoder by training a fully end-to-end version of \textsc{EGInterpolator}, called \textsc{EGInterpolator-\textbf{F}}, on the Drugs forward simulation task. In \Cref{fig:drugs_finetune_v_our}, we see that performance remains largely unchanged across metrics, indicating that the pretrained spatial model generalizes well without task-specific tuning, while the temporal layers effectively capture the necessary dynamic information.

\subsubsection{Generalization to an Extended Test Set}
To further assess the robustness of our QM9 unconditional generation model, we evaluate performance on an extended test set of 959 molecules, which includes the original test set from Section~5.2. As shown in \Cref{tab:enlarged}, we compare \textsc{GeoTDM}~\citep{han2024geometric}, \textsc{EGInterpolator-\textbf{N}} (without structure pretraining), and our full \textsc{EGInterpolator} model. While all models perform comparably on this larger evaluation set, \textsc{EGInterpolator} consistently outperforms the baselines, underscoring its strong generalization and the value of structural pretraining.

\begin{figure}[ht]
  \centering
  \resizebox{0.8\linewidth}{!}{%
    \includegraphics{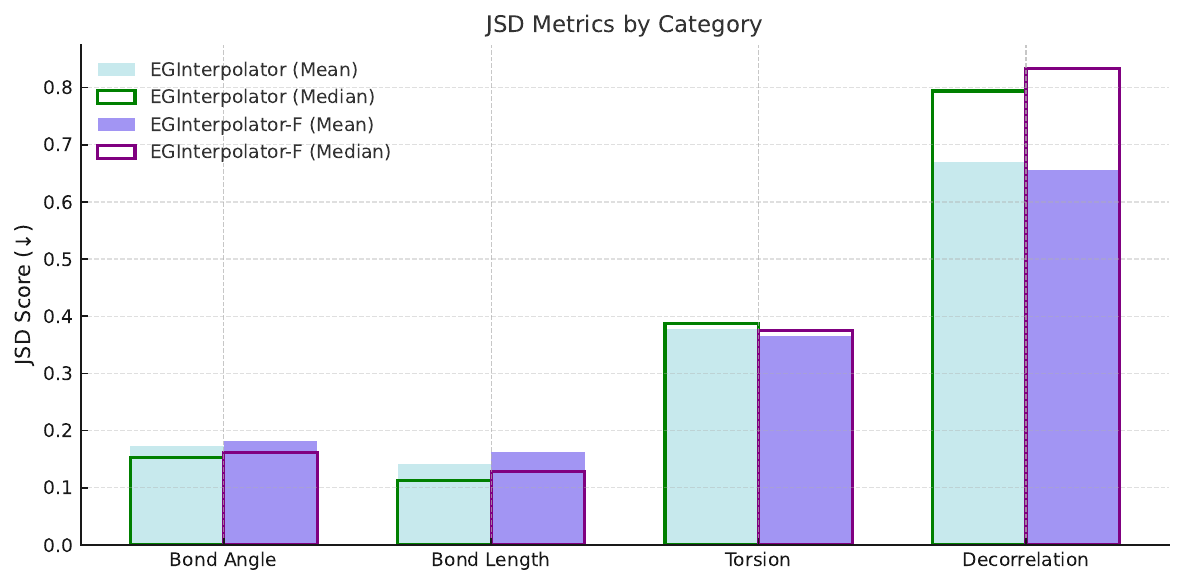}
  }
  \caption{JSD metrics computed for Bond Angles, Bond Lengths, Torsions, and Decorrelation Times. Compared between \textsc{EGInterpolator} (green) and \textsc{EGInterpolator-\textbf{F}} (purple).}
  \label{fig:drugs_finetune_v_our}
\end{figure}

\begin{table}[H]
\centering
\caption{JSD Metric ($\downarrow$) for QM9 Unconditional Generation. Top: Trained on \textbf{Standard} Train, evaluated on \textbf{Enlarged} Test. Bottom: Trained on \textbf{Enlarged} Train, evaluated on \textbf{Standard} Test.}
\small
\renewcommand{\arraystretch}{1.2}
\setlength{\tabcolsep}{4pt}
\resizebox{0.9\linewidth}{!}{%
\begin{tabular}{llcccccccccc}
  \toprule
  \textbf{Train $\rightarrow$ Test} & \textbf{Method}
    & \multicolumn{2}{c}{Bond Angle}
    & \multicolumn{2}{c}{Bond Length}
    & \multicolumn{2}{c}{Torsion}
    & \multicolumn{2}{c}{TICA\_0}
    & \multicolumn{2}{c}{TICA\_0,1} \\
  \cmidrule(lr){3-4} \cmidrule(lr){5-6} \cmidrule(lr){7-8}
  \cmidrule(lr){9-10} \cmidrule(lr){11-12}
  & & Mean & Med. & Mean & Med. & Mean & Med. & Mean & Med. & Mean & Med. \\
  \midrule
  \multirow{3}{*}{\textbf{Standard $\rightarrow$ Enlarged}} 
    & \textsc{GeoTDM}
      & 0.690 & 0.690 & 0.674 & 0.668 & 0.488 & 0.529 & 0.452 & 0.451 & 0.695 & 0.699 \\
  & \textsc{EGInterpolator-N}
      & 0.539 & 0.538 & 0.584 & 0.582 & 0.447 & 0.492 & 0.438 & 0.440 & 0.678 & 0.685 \\
  & \textbf{\textsc{EGInterpolator}}
      & \textbf{0.307} & \textbf{0.293} & \textbf{0.214} & \textbf{0.194}
      & \textbf{0.361} & \textbf{0.385} & \textbf{0.416} & \textbf{0.409} & \textbf{0.633} & \textbf{0.639} \\
  \midrule
  \multirow{3}{*}{\textbf{Enlarged $\rightarrow$ Standard}} 
    & \textsc{GeoTDM}
      & 0.757 & 0.757 & 0.782 & 0.793 & 0.488 & 0.533 & 0.454 & 0.453 & 0.697 & 0.703 \\
  & \textsc{EGInterpolator-N}
      & 0.470 & 0.460 & 0.540 & 0.544 & 0.433 & 0.481 & 0.443 & 0.440 & 0.681 & 0.691 \\
  & \textbf{\textsc{EGInterpolator}}
      & \textbf{0.296} & \textbf{0.286} & \textbf{0.261} & \textbf{0.247}
      & \textbf{0.370} & \textbf{0.388} & \textbf{0.405} & \textbf{0.394} & \textbf{0.636} & \textbf{0.638} \\
  \bottomrule
\end{tabular}
}
\label{tab:enlarged}
\end{table}

\subsubsection{Contribution of an Extended Train Set}
While our framework is motivated by the scarcity of trajectory data, we also evaluate model performance under increased supervision. We train on an enlarged dataset—4$\times$ larger than the original—comprising 4437 molecules, with the original split from Section~5.2 as a subset. As shown in \Cref{tab:enlarged}, while \textsc{EGInterpolator-\textbf{N}} and \textsc{EGInterpolator} interestingly do not improve substantially with more data, the latter maintains a clear advantage. This highlights the continued value of structural pretraining even in higher-data regimes.

\subsubsection{Contributions of the Temporal Module to Non-Trivial Dynamics}
\label{sec:temporal_cont}

To assess the contribution of our temporal module in learning non-trivial dynamics---specifically the fast torsional processes observed in organic small molecules---we compare our framework run with and without the temporal component. We generate trajectories for both QM9 and Drugs with $\alpha = 1$ (i.i.d.\ conformers, i.e., no temporal interpolation). Additionally, we shuffle the frames of both GT trajectories and our original model generations to establish baselines corresponding to random frame orderings. We then computed torsional decorrelation times for all conditions. While our method does not fully match GT torsional decorrelation times on QM9, we see that it clearly avoids the trivial 5.2 ps baseline (the MD frame rate). This supports that the temporal module learns non-trivial dynamical properties essential for modeling diverse molecule dynamics.

\subsubsection{Contribution of the Temporal Module to Structure Learning}
\textcolor{black}{
To asses if our temporal module's spatial update layers refine structural predictions during trajectory generation, we compare the collective variable JSD distributions between the normal and $\alpha$ = 1 setting. As demonstrated in Table \ref{tab:eginterp_jsd}, the full interpolator improves bond lengths, bond angles, and torsions, indicating that the system can correct imperfections in the conformer prior rather than inherit them.}

\begin{table}[t]
\centering
\small
\caption{JSD metrics for bond angle, bond length, and torsion across QM9 and DRUGS datasets with and without temporal layers.}
\label{tab:eginterp_jsd}
\setlength{\tabcolsep}{8pt}
\renewcommand{\arraystretch}{1.15}
\begin{tabular}{l|ccc}
  \toprule[1.2pt]
  \textbf{Model} & \textbf{Bond Angle} (↓) & \textbf{Bond Length} (↓) & \textbf{Torsion} (↓) \\
  \midrule
  \textsc{EGInterpolator Normal (QM9)}        & 0.305 / 0.292 & 0.210 / 0.188 & 0.363 / 0.380 \\
  \textsc{EGInterpolator $\alpha=1$ (QM9)}    & 0.398 / 0.391 & 0.618 / 0.613 & 0.358 / 0.372 \\
  \textsc{EGInterpolator Normal (DRUGS)}      & 0.173 / 0.153 & 0.142 / 0.112 & 0.377 / 0.388 \\
  \textsc{EGInterpolator $\alpha=1$ (DRUGS)}  & 0.435 / 0.445 & 0.580 / 0.091 & 0.378 / 0.382 \\
  \bottomrule[1.2pt]
\end{tabular}
\end{table}

\begin{table}[ht]
\centering
\caption{Mean torsional decorrelation times (ps) across test sets, comparing GT MD data, our original generations, i.i.d.\ conformer generations ($\alpha=1$), and shuffled variants. Shuffled data collapse to the frame rate of 5.2 ps, reflecting a lack of temporal structure.}
\footnotesize
\renewcommand{\arraystretch}{1.2}
\resizebox{0.95\linewidth}{!}{%
\begin{tabular}{lccccc}
\toprule
\textbf{Dataset} & \textbf{GT MD} & \textbf{Original Gen.} & \textbf{$\alpha=1$ Gen.} & \textbf{Shuffled GT} & \textbf{Shuffled Gen.} \\
\midrule
QM9 Test   & 101.0   & 13.59   & 5.2 & 5.2 & 5.2 \\
Drugs Test & 130.1   & 185.64  & 5.2 & 5.2 & 5.2 \\
\bottomrule
\end{tabular}
}
\label{tab:torsion_decorrelation}
\end{table}

\subsection{$\alpha$ Mixing Parameters: Interpretation \& Contribution}
\label{sec:alphas}

\begin{figure}[ht]
  \centering
  \resizebox{0.85\linewidth}{!}{%
    \includegraphics{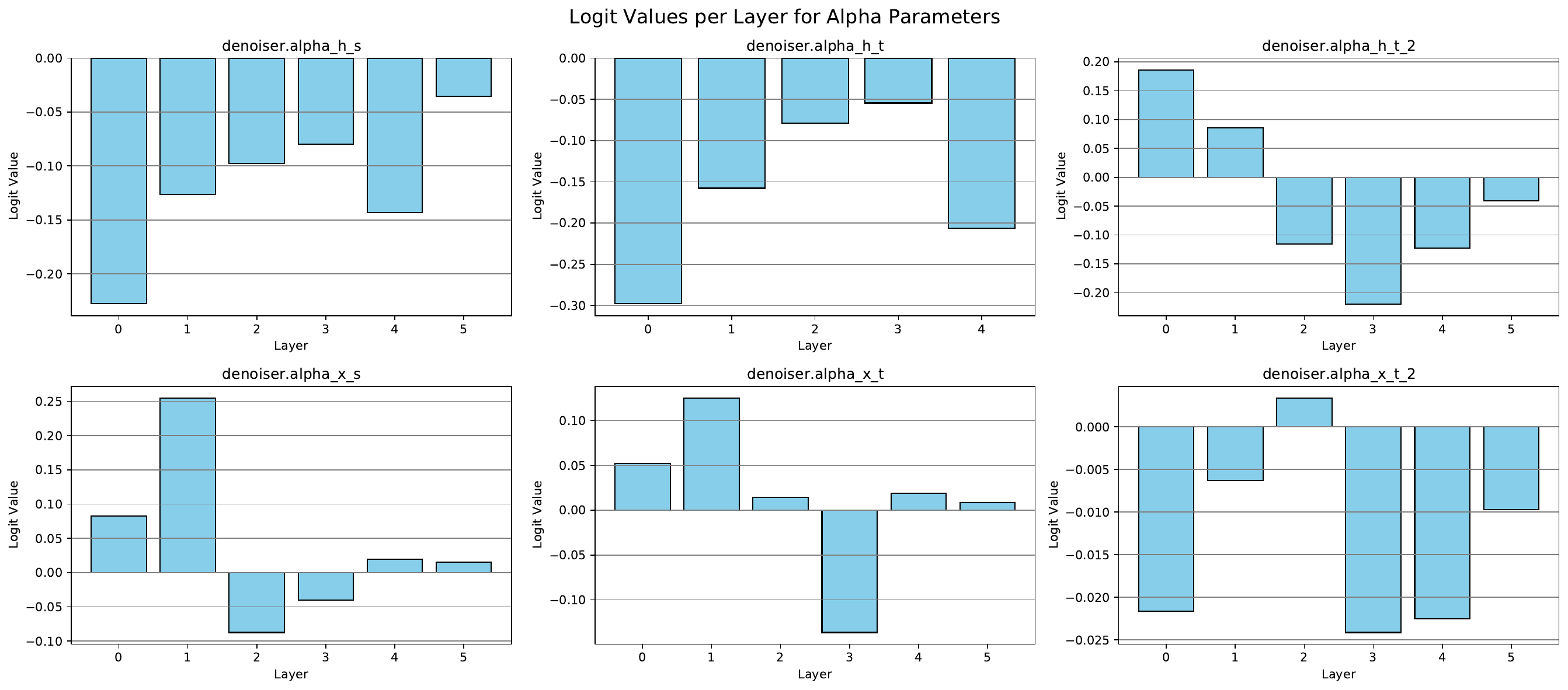}
  }
  \resizebox{0.85\linewidth}{!}{%
    \includegraphics{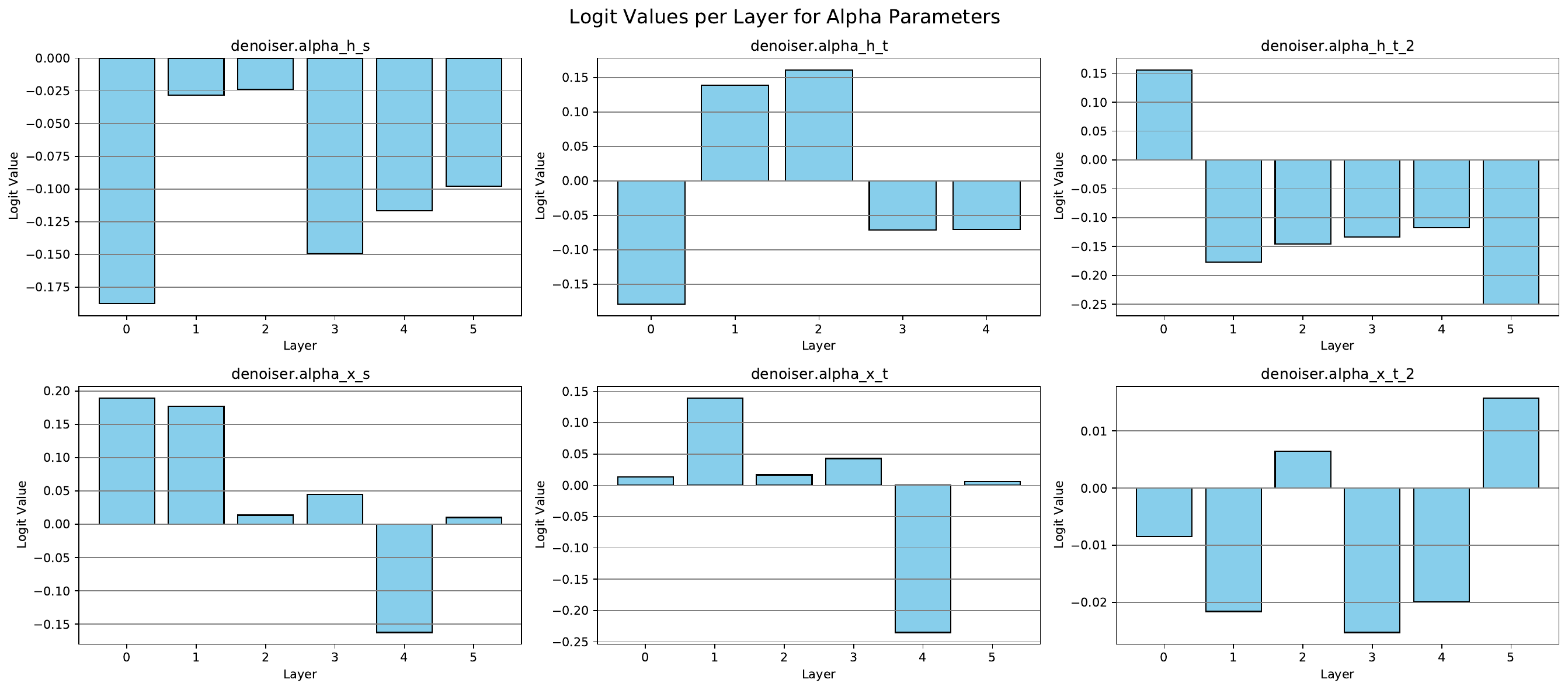}
  }
  \caption{\textbf{Top:} Logits of $\alpha$ for each spatial and temporal layer after convergence on the QM9 unconditional generation task. 
           \textbf{Bottom:} Logits of $\alpha$ for each spatial and temporal layer after convergence on the DRUGS forward simulation task. 
           \textbf{Both:} Results obtained with \textsc{EGInterpolator-Casc}.}
  \label{fig:alpha_logits}
\end{figure}

\subsubsection{Empirically Learned Values}
We analyze the ranges of alpha values learned during training and in order to identify consistent patterns and interpretable behaviors in Figure \ref{fig:alpha_logits} and Figures \ref{fig:alpha_logits_inter}, \ref{fig:alpha_logits_simple}. As context:  
(1) Positive alpha values assign greater weight to the pretrained spatial model, while negative values emphasize the temporal component;  
(2) alpha\_h/x\_s correspond to the pretrained spatial layer and the spatial layer in the temporal module, where h and x denote mixing coefficients for invariant and vector features, respectively;  
(3) Layer 5 does not include an alpha\_h\_t term, as this output is never used.  

Overall, alpha values generally fall within $[-0.25, 0.25]$. From Figure \ref{fig:alpha_logits}, we observe some exciting trends: in the first temporal block (alpha\_x\_t) and spatial block (alpha\_x\_s) of the temporal module, earlier layers prefer pretrained information, while later layers favor temporal module information. For the final temporal block (alpha\_x\_t\_2), the model generally relies on newly trained information across layers. This supports our design choices: early layers focus on structural integrity, while later layers prioritize dynamics, with the last temporal block reinforcing dynamic updates.  
  
\begin{table}[ht]
\centering
\caption{QM9 results across $\lambda$ for \textsc{Casc} and \textsc{Simple}.}
\footnotesize
\renewcommand{\arraystretch}{1.15}
\resizebox{0.75\linewidth}{!}{%
\begin{tabular}{lcccccccc}
  \toprule
  \textbf{$\lambda$}
    & \multicolumn{2}{c}{\textbf{COV-R (\%) $\uparrow$}}
    & \multicolumn{2}{c}{\textbf{MAT-R (\AA) $\downarrow$}}
    & \multicolumn{2}{c}{\textbf{COV-P (\%) $\uparrow$}}
    & \multicolumn{2}{c}{\textbf{MAT-P (\AA) $\downarrow$}} \\
  \cmidrule(lr){2-3} \cmidrule(lr){4-5} \cmidrule(lr){6-7} \cmidrule(lr){8-9}
  & Mean & Med.\  & Mean & Med.\  & Mean & Med.\  & Mean & Med.\  \\
  \midrule
  \multicolumn{9}{c}{\textbf{\textsc{Casc}}} \\
  0.000 & 87.99 & 91.98 & 0.2539 & 0.2600 & 58.30 & 53.62 & \textbf{0.4430} & \textbf{0.4397} \\
  0.025 & 87.82 & 92.98 & 0.2570 & 0.2598 & 57.70 & 53.17 & 0.4470 & 0.4396 \\
  0.050 & 88.34 & 92.84 & 0.2568 & 0.2556 & 58.29 & 53.78 & 0.4460 & 0.4439 \\
  0.075 & 88.16 & 93.61 & 0.2577 & 0.2610 & 57.38 & 52.86 & 0.4490 & 0.4467 \\
  0.100 & 88.47 & 92.66 & 0.2588 & 0.2654 & 57.14 & 52.51 & 0.4531 & 0.4523 \\
  0.125 & 89.09 & 94.36 & 0.2579 & 0.2589 & 56.67 & 52.21 & 0.4581 & 0.4549 \\
  0.150 & 89.55 & 93.39 & 0.2580 & 0.2637 & 56.37 & 50.83 & 0.4618 & 0.4580 \\
  0.175 & 89.06 & 94.46 & 0.2621 & 0.2612 & 55.88 & 51.06 & 0.4669 & 0.4670 \\
  0.200 & \textbf{89.27} & \textbf{94.56} & 0.2633 & 0.2604 & 55.23 & 50.95 & 0.4697 & 0.4653 \\
  \midrule
  \multicolumn{9}{c}{\textbf{\textsc{Simple}}} \\
  0.000 & 88.11 & 92.47 & 0.2557 & 0.2553 & 59.03 & 54.52 & \textbf{0.4413} & \textbf{0.4439} \\
  0.025 & 88.54 & 91.28 & 0.2546 & 0.2540 & 58.27 & 53.72 & 0.4472 & 0.4419 \\
  0.050 & 89.71 & 94.13 & 0.2518 & 0.2577 & 58.20 & 54.24 & 0.4492 & 0.4397 \\
  0.075 & 90.34 & 94.20 & 0.2536 & 0.2589 & 57.79 & 53.51 & 0.4539 & 0.4496 \\
  0.100 & 91.11 & 95.36 & 0.2542 & 0.2589 & 57.14 & 52.23 & 0.4598 & 0.4542 \\
  0.125 & 92.11 & 96.36 & 0.2558 & 0.2638 & 57.19 & 53.04 & 0.4647 & 0.4582 \\
  0.150 & 92.05 & 96.07 & 0.2618 & 0.2665 & 57.14 & 54.30 & 0.4681 & 0.4630 \\
  0.175 & 92.21 & 96.39 & 0.2678 & 0.2737 & 55.58 & 51.75 & 0.4795 & 0.4675 \\
  0.200 & \textbf{92.63} & \textbf{96.08} & 0.2713 & 0.2783 & 54.94 & 50.63 & 0.4885 & 0.4850 \\
  \midrule
  \multicolumn{9}{c}{\textbf{GeoDiff-A}} \\
  --    & 90.54 & 94.61 & 0.2104 & 0.2021 & 52.35 & 50.10 & 0.4539 & 0.4399 \\
  \bottomrule
\end{tabular}
}
\label{tab:qm9_lambda_results}
\end{table}

\subsubsection{Contribution to Conformer Generation}
\label{sec:a_pert}
Although the endpoints $\alpha=0$ and $\alpha=1$ yield straightforward and well-defined inference dynamics, we also investigate the inference-time flexibility of this parameter by running \textsc{EGInterpolator} as a conformer generator on QM9 while perturbing $\alpha$. Specifically, we linearly interpolate the mixing parameter logits between $1$ and the learned value $\alpha^\star$ by introducing a new variable $\lambda \in [0,1]$, such that $\alpha' = \lambda \alpha + (1 - \lambda)$.

Across both the \textsc{Simple} and \textsc{Casc} variants, we observe a trade-off between precision and diversity metrics as summarized in Table~\ref{tab:qm9_lambda_results}. Notably, varying $\lambda$ allows us to recover and surpass the COV-R diversity scores reported by GeoDiff. The \textsc{Simple} variant exhibits a more favorable precision–diversity trade-off curve with respect to $\lambda$, which we attribute to its closer alignment with our theoretical formulation in Theorem \ref{theorem:distribution}. More broadly, these findings indicate that the temporal module captures aspects of conformational diversity beyond those provided by the pretrained conformer model, and that the $\alpha$ parameters offer a natural mechanism for controlling the balance between precision and conformational dynamics in the generated trajectories.

\section{Experimental Details}
\subsection{Conformer Pretraining}
\subsubsection{Data Preprocessing}
\label{sec:preprocess}
The datasets obtained from the \citep{xu2022geodiff, shi2021learning} codebase are provided as pickle files, each containing a list of PyTorch Geometric data objects representing individual conformers. We apply the following filtering steps to ensure data quality. First, we verify that the saved \texttt{RDMol} objects can be successfully sanitized using RDKit. Next, we remove any conformers exhibiting fragmentation in their \texttt{RDMol} representations. Following \cite{ganea2021geomoltorsionalgeometricgeneration}, we also account for conformers that may have reacted in the original data generation process. Namely, we compare the canonical SMILES strings derived from both the saved SMILES and the corresponding \texttt{RDMol}, and discard any conformers where the two do not match. We also exclude any molecules whose saved SMILES cannot be converted into a valid \texttt{RDMol} by RDKit. Lastly, specific to our method, we remove hydrogens from the molecules according to \texttt{rdkit.Chem.RemoveHs} \footnote{Note that \texttt{RemoveHs} does not eliminate all hydrogen atoms and may retain chemically relevant ones (see the \href{https://www.rdkit.org/docs/source/rdkit.Chem.rdmolops.html}{RDKit documentation}). Our method explicitly incorporates and models such retained hydrogens.} and retain heavy atoms. For QM9, this leaves [\textit{C, N, O, F}]. For Drugs, we have [\textit{C, N, O, S, P, F, Cl, Br, I, B, Si}].

\textcolor{black}{For each peptide in the Timewarp and MDen train/validation sets \citep{klein2023timewarptransferableaccelerationmolecular} \citep{jing2024generative}, we compute per-residue $\phi/\psi$ dihedral features, subsample up to 10{,}000 frames, cluster them in dihedral space using K-medoids, and select the lowest-energy member of each cluster as a representative conformer, yielding 10 / 20 conformers per peptide.}

\subsubsection{Training Details}
We train both the QM9, Drugs, \textcolor{black}{and Tetrapeptide conformer models} using 4 NVIDIA RTX A4000 GPUs, with an effective batch size of 128 (32 samples per GPU) and a learning rate of $1 \times 10^{-4}$. Training is carried out until convergence, typically around 800K steps. As described in Section 5.1, all models are trained using 1000 diffusion steps. We adopt a DDPM framework~\citep{ho2020denoisingdiffusionprobabilisticmodels} with a linear noise schedule. Additionally, we employ an equivariant loss function that leverages optimal Kabsch alignment \citep{kabsch1976solution}, with more details in Section C.4.

\subsubsection{Evaluation Details}
\label{sec:conf_eval}
We evaluate the quality of generated conformers using Coverage (COV-P) and Matching (MAT-P), both based on the root mean square deviation (RMSD) computed after Kabsch alignment~\citep{kabsch1976solution}.

Let $S_g$ and $S_r$ denote the sets of generated and reference conformers, respectively. The metrics are defined as:
\begin{align}
\text{COV-P}(S_g, S_r) &= \frac{1}{|S_g|} \left| \left\{ \hat{C} \in S_g \,\middle|\, \min_{C \in S_r} \text{RMSD}(\hat{C}, C) \leq \delta \right\} \right|, \\
\text{MAT-P}(S_g, S_r) &= \frac{1}{|S_g|} \sum_{\hat{C} \in S_g} \min_{C \in S_r} \text{RMSD}(\hat{C}, C),
\end{align}
where $\delta$ is a predefined threshold. COV-R and MAT-R, inspired by \textit{Recall}, are defined analogously by swapping $S_g$ and $S_r$.

Following~\cite{xu2022geodiff}, we set $|S_g| = 2 \times |S_r|$ per molecule. The results reported in Section 5.1 correspond to the average COV-*/MAT-* scores across all test molecules. COV-P reflects precision by measuring the fraction of generated conformers that are sufficiently close to the reference set (within threshold $\delta$), while MAT-P captures the mean deviation of each generated conformer from its closest reference match. High COV and low MAT scores indicate greater fidelity and precision in conformer generation.

\subsection{Molecular Dynamics for Small Molecules}
\label{sec:MD}
\subsubsection{Parameterization}
We run all-atom molecular dynamics simulations, including hydrogens, using OpenMM~\citep{10.1371/journal.pcbi.1005659} and employ \texttt{openmmforcefields} to apply small molecule force field parameterizations developed by the Open Force Field Initiative (OpenFF)~\citep{doi:10.1021/acs.jctc.3c00039}. We follow the setup guidelines provided in the \href{https://github.com/openmm/openmmforcefields?tab=readme-ov-file#using-the-open-force-field-initiative-smirnoff-small-molecule-force-fields}{\texttt{openmmforcefields} GitHub repository}. Specifically, we adopt the \texttt{openff-2.2.1} (Sage) \citep{mcisaac_2024_10553473} small molecule force field in conjunction with a base \texttt{amber/protein.ff14SB.xml} protein force field and a combination of \texttt{amber/tip3p\_standard.xml} and \texttt{amber/tip3p\_HFE\_multivalent.xml} for explicit solvent and ion parameters. Continuing with standard hyperparameters, we set the nonbonded cutoff to 0.9~nm and the switch distance to 0.8~nm. Hydrogen mass repartitioning (HMR) is applied with a mass of 1.5~amu, along with constraints on all hydrogen bonds. Long-range electrostatic interactions are computed using the Particle Mesh Ewald (PME) method under periodic boundary conditions. A padding of 1.5~nm is used for the explicit solvent box.

\subsubsection{Simulation}
All molecular dynamics simulations are performed using a friction coefficient of 1~ps$^{-1}$, a temperature of 300~K, and an integration timestep of 4~fs, employing the \texttt{LangevinMiddleIntegrator} \citep{doi:10.1021/acs.jpca.9b02771}. As described in Section 5.2, five independent trajectories are generated per molecule, each initialized from a conformer assigned to that molecule in the selected data subset. Each trajectory simulation begins with energy minimization, followed by 5000 steps of equilibration under constant volume and temperature (NVT) conditions. This is followed by a 5~ns production run under constant pressure and temperature (NPT) conditions, comprising a total of 1.25M integration steps. Trajectory simulation is parallelized across 32 NVIDIA RTX A4000 GPUs and saved with a frame rate of 400~fs/0.4~ps.

\subsection{Trajectory Finetuning}
\subsubsection{Dataset Preparation}
As mentioned in Section 5.2, we randomly sample a subset of the molecules from the GEOM-QM9 and Drugs conformer data to generate trajectory data from. As this is quite costly, for Drugs we generate simulations for the standard train/validation/test splits mentioned in Section 5.2. For QM9, we generate data for enlarged train/test sets along with the standard validation set. We then subsample 25\% of the enlarged splits to be the standard train/test sets. A summary of the dataset splits is provided below:
\begin{itemize}
    \item \textbf{Drugs:}
    \begin{itemize}
        \item \textit{Standard splits:} \texttt{1137/1044/100} train/validation/test molecules \\
              \hspace{3.5em}(\texttt{5682/5209/496} associated trajectories)
    \end{itemize}
    \item \textbf{QM9:}
    \begin{itemize}
        \item \textit{Standard splits:} \texttt{1109/1018/240} train/validation/test molecules \\
              \hspace{3.5em}(\texttt{5534/5080/1193} associated trajectories)
        \item \textit{Enlarged sets:} \texttt{4437/959} train/test molecules \\
              \hspace{3.5em}(\texttt{22132/4793} associated trajectories)
    \end{itemize}
\end{itemize}
As a note, out of the test trajectories, we select 1 out of 5 per molecule to be the \textsc{MD Oracle} baseline. Moreover, we filter out any molecules over 60 atoms in the Drugs dataset to reduce memory usage variance. Finally, the test set for the interpolation is a subset of the standard test sets mentioned above. We further define this process of selection in Section B.6 and B.3.3. 

\subsubsection{Training Protocol}
While the compute setup and batch size vary across datasets and generation settings, we consistently employ a DDPM framework with a linear noise schedule and train all models using 1000 diffusion steps. A fixed learning rate of $1 \times 10^{-4}$ is used and training is performed until convergence. Additionally, we adopt an equivariant loss function based on optimal global Kabsch alignment of trajectories, as detailed in Section C.4. Setting-specific training configurations are provided in Sections B.4-B.6.

\subsubsection{Evaluation Metrics}
\textbf{Jensen-Shannon Divergence. } 
We compute the JSD as implemented in \texttt{scipy}, where $m = (p + q)/2$:
\begin{equation}
\sqrt{ \frac{D(p \mid\mid m) + D(q \mid\mid m)}{2} }
\label{eq:jsd}
\end{equation}
\begin{itemize}
    \item Torsions: The 1D JSD is computed over a 100-bin histogram discretized across $[-\pi, \pi]$.
    \item Bond Angles: The 1D JSD is computed over a 100-bin histogram discretized across $[0, \pi]$.
    \item Bond Lengths: The 1D JSD is computed over a 100-bin histogram discretized across $[100, 220]$ pm.
    \item Torsion decorrelation: The 1D JSD is computed over 275-bin histogram discretized across $[5,1380]$ ps, which are corresponding to the minimum and maximum torsion decorrelation time of molecules across the dataset.
    \item TICA-0 and TICA-0,1: We reduce the dimensionality of the trajectory by time-lagged independent component analysis (TICA). Then 1D, 2D JSDs are computed over 100-bin histograms on the first TICA component (TICA-0) and the first two components (TICA-0,1), respectively. Since different molecules have totally different TICA projections and values, we use the minimum and maximum values from each molecule as its unique discretization range for TICA-0 and TICA-0,1. We use 10.4 ps (2 steps) lag time for QM9 and 20.8 ps (4 steps) for drugs.
\end{itemize}

\textbf{Markov State Models. } 
We intensively use Markov State Models (MSM) for interpolation tasks. We featurize reference trajectories with all torsion angles except for those within an aromatic ring. Then TICA is performed on the torsion-based trajectories. After dimensionality reduction, a k-means clustering algorithm is used to discretize the trajectories to 100 clusters. An MSM analysis is performed on the trajectories of 100 states and PCCA+ spectral clustering from PyEMMA package \citep{PyEMMA} is used to aggregate clusters to 10 coarse metastates. A second MSM analysis is done on the coarse trajectories. We use 52 ps (10 steps) lag time for QM9 and 104 ps (20 steps) for drugs.

To sample the start and end frames used in the interpolation task, we compute the flux matrix over the 10 metastates. To construct a high barrier and rare transition probability, we choose the two states with least flux between them as start and end states. Then we randomly sample 900 start and end frames from the corresponding states, and those frames are used as the conditions in the interpolation inference process. The generated trajectories undergo the same featurization process, and then projected on the TICA components defined by the reference trajectories. They are further discretized according to the reference metastate assignments, and a new MSM is performed on the discretized generation trajectories.

To compare the generation with reference trajectories, we compute the JSD over the metastate occupancy probabilites. To evaluate interpolation sampling quality, we compute the average path probability, valid path rate, and valid path probability as described in \cite{jing2024generativemodelingmoleculardynamics}. The average path probability is the average of all paths’ likelihood for transitioning from the start to the end. The valid path rate is the fraction of paths that successfully traverse from the start to the end. The valid path probability is the average of all valid paths' likelihood (excluding zero-probability paths). To fairly compare the generation and MD oracle, we truncate the MD oracle trajectories to varying time length, and sample 900 transition paths based on the MSM constructed from the metastates. With the sampled transition paths, we can compute the JSD over metastates, average path probability, valid path rate, and valid path probability of MD oracles.

\subsection{Unconditional Generation Details}
\textbf{Training.} Training is conducted by denoising randomly sampled 2.6~ns segments (500 frames) from the training trajectories. For QM9, we utilize 8 NVIDIA RTX A4000 GPUs with an effective batch size of 32 (4 samples per GPU), training the models for 400 epochs.

\textbf{Evaluation.} For each molecule in the test set, we generate ten independent 2.6~ns segments (500 frames each). Distributional histograms are then computed from these generated trajectories and compared against those derived from four reference 5~ns molecular dynamics (MD) trajectories. Results reported for this model setting for QM9 include both the standard test in Section 5.3 and enlarged test set in Section A.3.2-A.3.3.

\subsection{Forward Simulation Details}
\textbf{Training.} Training is conducted by randomly sampling 251-frame segments at a 5.2~ps frame rate and denoising the subsequent 250 frames (corresponding to 1.3~ns), conditioned on the initial frame-0. For the Drugs \textcolor{black}{and Timewarp} dataset, we utilize 8 NVIDIA RTX A4000 GPUs with an effective batch size of 32 (2 samples per GPU with 2 gradient accumulation steps), training the models for 400 epochs.

\textbf{Evaluation.} For each molecule in the test set, we generate five forward roll-outs of 5.2~ns (1,000 frames total), each conditioned on the first frame of a reference trajectory. Distributional histograms are then computed from the generated trajectories and compared against those obtained from four reference 5~ns molecular dynamics (MD) trajectories. For a fair comparison, we truncate our generation trajectories to the same length as the reference trajectories in evaluation. Results reported for this model setting for Drugs are based on the standard test set in Section 5.4.

\subsection{Interpolation Details}
\textbf{Training.} Training is conducted by randomly sampling 101-frame segments at a 5.2~ps frame rate and denoising the middle 99 frames (corresponding to $\approx$0.52~ns), conditioned on frame-0 and frame-100. For the QM9 dataset, we utilize 2 NVIDIA A100 GPUs with an effective batch size of 128 (64 samples per GPU), training the models for 300 epochs. For the Drugs dataset, we utilize 4 NVIDIA A100 GPUs with an effective batch size of 32 (8 samples per GPU), training the models for 400 epochs.

\textbf{Evaluation.} For each molecule in the test set, we perform featurization, dimensionality reduction, and clustering on the reference trajectories. We then construct an MSM on the discretized trajectories and retain only those test molecules for which all microstates from clustering are represented in the MSM. After filtering, this yields 124 QM9 and 36 Drug test molecules. Due to computational constraints, we subsample 80 QM9 molecules while using all 36 Drug molecules for inference and evaluation. For each selected test molecule, we generate 900 interpolation trajectories conditioned on 900 sampled start and end states. For each MD oracle length, we also sample 900 transition paths. We report the average results across all molecules successfully modeled by the MSM, as shown in Section 5.5, Figure 5, as well as Section A.2, Figure 6 (see details in Section~B.3.3).

\addtocontents{toc}{\protect\newpage}
\section{Method Details}
\subsection{Molecule Input Representation}
Throughout our framework, input molecules are represented as 2D heterogeneous graphs. The bonding network includes both the original bond types present in the molecule and additional higher-order edges that we incorporate. Specifically, we include edges up to third-order for both the QM9 and Drug datasets. Following the approach of \cite{shi2021learninggradientfieldsmolecular}, this augmentation is designed to facilitate more effective information transfer between atoms involved in bond angle and torsion angle interactions.
\begin{table}[ht]
\centering
\caption{Atom and bond embedding specifications.}
\footnotesize
\renewcommand{\arraystretch}{1.15}
\begin{tabular}{lcl}
  \toprule
  \textbf{Embedding Type} & \textbf{Input} & \textbf{Dimension} \\
  \midrule
  Atom Embedding & Atomic Number & 30 \\
  Bond Embedding & No Bond, Bond Type, 2nd/3rd-order edge & 4 \\
  \bottomrule
\end{tabular}
\label{tab:embed_dims}
\end{table}

We defined learned embeddings for atom type as well as bond type. Moreover, we also provide input node features per atom, largely based on \cite{ganea2021geomoltorsionalgeometricgeneration}. Below, we provide a table with these details. These two information sources, the learned embedding and input features, as combined in our embedding module as described in Section C.2.
\begin{table}[ht]
\centering
\caption{Node feature vector based on atom-level properties.}
\footnotesize
\renewcommand{\arraystretch}{1.15}
\begin{tabular}{llp{6.5cm}l}
  \toprule
  \multicolumn{4}{c}{\textbf{Atom Features}} \\
  \midrule
  \textbf{Indices} & \textbf{Description} & \textbf{Options} & \textbf{Type} \\
  \midrule
  0--1   & Aromaticity             & true, false & One-hot \\
  2--7   & Hybridization           & $sp$, $sp^2$, $sp^3$, $sp^3d$, $sp^3d^2$, other & One-hot \\
  8      & Partial charge          & $\mathbb{R}$ & Value \\
  9--16  & Implicit valence        & 0, 1, 2, 3, 4, 5, 6, other & One-hot \\
  17--24 & Degree                  & 0, 1, 2, 3, 4, 5, 6, other & One-hot \\
  25--28 & Formal charge           & -1, 0, 1, other & One-hot \\
  29--35 & In ring of size $x$     & 3, 4, 5, 6, 7, 8, other & k-hot \\
  36--39 & Number of rings         & 0, 1, 2, 3+ & One-hot \\
  40--42 & Chirality               & CHI\_TETRAHEDRAL\_CW, CHI\_TETRAHEDRAL\_CCW, unspecified/other & One-hot \\
  \bottomrule
\end{tabular}
\label{tab:atom_bond_features}
\end{table}
\subsection{Architectures}

\textbf{Embeddings.} Across all of our models—both conformer and trajectory—we use a hidden dimension of 128 and a diffusion timestep embedding dimension of 32. For molecular embeddings, we combine atom type embeddings and atom-level features via a single linear projection: $\mathbb{R}^{\texttt{node\_dim} + \texttt{ft\_dim}} \rightarrow \mathbb{R}^{\texttt{node\_dim}}$.

\textbf{\textsc{BasicES}.} As introduced in Section 4.3, our \textsc{BasicES} architecture consists of 6 Equivariant Graph Convolution (EGCL) layers, following the formulation in~\cite{satorras2021en}. To promote interaction between invariant and equivariant representations, we insert a Geometric Vector Perceptron (GVP)~\citep{jing2021learningproteinstructuregeometric} transition layer after each EGCL block. The full model contains approximately 918K parameters.

\textbf{\textsc{EGInterpolator}.} As described in Section 4.3, \textsc{EGInterpolator} extends \textsc{BasicES} by introducing temporal attention to model dependencies across trajectory frames. Specifically, we incorporate the Equivariant Temporal Attention Layer (ETLayer) from~\cite{han2024geometric} to capture temporal structure through attention mechanisms. The architecture is constructed by stacking an additional sequence of ETLayer + EGCL + ETLayer on top of each pretrained EGCL layer from \textsc{BasicES}, as illustrated in Figure 2. We retain the use of GVP-based transition layers and introduce LayerNorm \citep{ba2016layernormalization} at key interpolation steps to improve numerical stability. The resulting model comprises 6 layers and contains 3.3M parameters in total, with 2.3M trained during trajectory finetuning in the \textsc{EGInterpolator} framework.

\subsection{Conditional Generation}
We control conditional generation by setting appropriate entries of a conditioning mask $\mathbf{m}$ to either 1 or 0. Let $\mathbf{m}[t, a]$ denote the conditioning status for frame $t$ and atom $a$. We define mask:
\begin{itemize}
  \item Forward simulation:
  \[
  \mathbf{m}[t, :] = 
  \begin{cases}
    1 & t = 0 \\
    0 & \text{otherwise}
  \end{cases}
  \]
  
  \item Interpolation:
  \[
  \mathbf{m}[t, :] = 
  \begin{cases}
    1 & t \in \{0, M\}\text{ ($M$ is index of the final frame)} \\
    0 & \text{otherwise}
  \end{cases} \quad.
  \]
\end{itemize}
In the unconditional setting, we default to $\mathbf{m}[:, :] = 0$. To incorporate this conditioning information, we use a condition state embedding added to the invariant node features, with the same hidden dimension as the main model. The conditioning mask is also used to restrict the denoising process and loss computation to frames where $\mathbf{m}[t', :] = 0$.

\subsection{Kabsch Alignment}
\label{sec:kabsch-detail}
Inspired by~\cite{xu2022geodiff}, we propose to use trajectory-level Kabsch alignment to find the optimal rotation and translation between the noisy trajectory $\rvx^{[T]}_\tau$ and the input trajectory $\rvx^{[T]}_0$ at diffusion step $\tau$. This corresponds to the following optimization problem:
\begin{align}
   \rmR^\ast,\rvt^\ast=\argmin_{\rmR,\rvt}\| \rmR\rvx_\tau^{[T]}+\rvt-\rvx_0^{[T]} \|_2.
\end{align}
In practice, this can be realized by extending the original Kabsch algorithm~\citep{kabsch1976solution} on the set of points with the temporal dimension $T$ combined into the number of points dimension $N$, that forms a point cloud with effective number of points $T\times N$. Afterwards, we re-compute the target noise $\bar{\bm\epsilon}$ based on the aligned $\bar{\rvx}_\tau^{[T]}=\rmR^\ast\rvx_\tau^{[T]}+\rvt^\ast$ and the clean data $\rvx_0^{[T]}$ by the forward diffusion process, and then match the output of~\method towards re-computed noise $\bar{\bm\epsilon}$ after alignment.

\subsection{Baselines}

\textbf{Autoregressive Models.} In the autoregressive baseline setup, molecular dynamics trajectories are modeled under the Markov assumption, where the model—EGNN \citep{satorras2021en}, Equivariant Transformer \citep{thölke2022torchmdnetequivarianttransformersneural}, or GeoTDM \citep{han2024geometric}—learns the transition distribution $p(x_{t+1},|,x_t)$. To ensure fair comparison, we keep timestep intervals and frame counts consistent across all datasets during both training and inference, matching the settings used in our proposed methods. For EGNN and ET, we adopt identical configurations with six stacked EGCL or Equivariant Transformer blocks, respectively, to maintain experimental parity. For AR+GeoTDM, the model is trained as a two-frame diffusion process, with the first frame serving as conditioning, effectively reducing it to a next-step forward simulation model.

\textbf{\textsc{GeoTDM}.} The training setup and embedding configurations for our implementation of \textsc{GeoTDM} are aligned with those used in our proposed framework. Following the architecture described in~\cite{han2024geometric}, the model consists of 6 stacked layers of EGCL and ETLayer blocks, resulting in a total of 1.4M parameters.

\section{Proofs}

\subsection{Proof of Theorem 4.1}

\let\oldthetheorem\thetheorem
\renewcommand{\thetheorem}{4.\arabic{theorem}}
\setcounter{theorem}{0}
For better readability we restate Theorem 4.1 below.
\begin{theorem}
\label{theorem:distribution}
    Suppose $\bm\epsilon^\mathrm{cf}_\theta$ perfectly models $p^\mathrm{cf}(\rvx)$ and $\bm\epsilon^\mathrm{md}_{\theta,\phi}$ perfectly models $p^\mathrm{md}(\rvx^{[T]})$, then the interpolation in Eq.~\ref{eq:interpolator} implicitly induces the distribution $\tilde{p}^\mathrm{md}(\rvx^{[T]})\propto p^\mathrm{md}(\rvx^{[T]})^{\beta}\hat{p}^\mathrm{md}(\rvx^{[T]})^{1-\beta}$ for $\bm\epsilon_\phi$, where $\beta=\frac{1}{1-\alpha}$ and $\hat{p}^\mathrm{md}=\prod_{t=0}^{T-1} p^\mathrm{cf}(\rvx^{(t)})$.
\end{theorem}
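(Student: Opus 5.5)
We work with the score-function reading of the denoisers. Under DDPM parameterization, a network that perfectly fits the denoising objective at level $\tau$ satisfies $\bm\epsilon(\rvx_\tau,\tau)=-\sqrt{1-\bar\alpha_\tau}\,\nabla_{\rvx_\tau}\log p_\tau(\rvx_\tau)$, where $p_\tau$ is the diffused marginal. Following the paper's informal level of rigor, we identify each noise predictor with a scaled score of the distribution it models. We then read off which distribution the temporal network must encode.

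\textbf{Step 1: express the two perfectly trained pieces as scores.} Because $\bm\epsilon^\mathrm{cf}_\theta$ perfectly models $p^\mathrm{cf}$ and is applied framewise, the concatenation $\hat{\bm\epsilon}^\mathrm{md}=[\bm\epsilon^\mathrm{cf}_\theta(\rvx^{(t)}_\tau,\tau)]_{t}$ equals $-\sqrt{1-\bar\alpha_\tau}\,\nabla\log\prod_t p^\mathrm{cf}_\tau(\rvx^{(t)}_\tau)$. This uses the fact that the gradient of a sum of per-frame log-densities decouples across frames. Hence $\hat{\bm\epsilon}^\mathrm{md}$ is the scaled score of $\hat p^\mathrm{md}$. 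Similarly, since $\bm\epsilon^\mathrm{md}_{\theta,\phi}$ perfectly models $p^\mathrm{md}$, it equals $-\sqrt{1-\bar\alpha_\tau}\,\nabla\log p^\mathrm{md}_\tau$.

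\textbf{Step 2: solve the interpolation for $\bm\epsilon^\mathrm{tp}_\phi$.} Rearranging Eq.~\ref{eq:interpolator} gives
\begin{align*}
\bm\epsilon^\mathrm{tp}_\phi=\tfrac{1}{1-\alpha}\bm\epsilon^\mathrm{md}_{\theta,\phi}-\tfrac{\alpha}{1-\alpha}\hat{\bm\epsilon}^\mathrm{md}=\beta\,\bm\epsilon^\mathrm{md}_{\theta,\phi}+(1-\beta)\,\hat{\bm\epsilon}^\mathrm{md},
\end{align*}
with $\beta=\frac{1}{1-\alpha}$, using $1-\beta=-\frac{\alpha}{1-\alpha}$. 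This requires $\alpha\neq1$, which holds for $\alpha=\sigma(k)\in(0,1)$.

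\textbf{Step 3: combine by linearity of the gradient.} Substituting the scores from Step 1, $\bm\epsilon^\mathrm{tp}_\phi=-\sqrt{1-\bar\alpha_\tau}\,\nabla\log\big[(p^\mathrm{md}_\tau)^{\beta}(\hat p^\mathrm{md}_\tau)^{1-\beta}\big]$. This is exactly the scaled score of the normalized density $\tilde p_\tau\propto(p^\mathrm{md}_\tau)^\beta(\hat p^\mathrm{md}_\tau)^{1-\beta}$; the normalizing constant has zero gradient and drops out. At $\tau\to0$ this gives the claimed $\tilde p^\mathrm{md}$. We should check that $\tilde p$ is normalizable; since $\beta>1$, the factor $(\hat p^\mathrm{md})^{1-\beta}$ is a negative power, so normalizability needs an integrability assumption, which we state as a mild condition.

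\textbf{Main obstacle.} The delicate point is that a geometric mixture of diffused marginals is generally not the diffusion of the geometric mixture of clean distributions. So $\bm\epsilon^\mathrm{tp}_\phi$ is a valid score at every noise level, but of $\tilde p_\tau$, not necessarily of the forward-noised $\tilde p^\mathrm{md}$. We handle this as product-of-experts and classifier-free-guidance arguments do. The theorem's statement is about the \emph{induced} distribution, identified at the data level $\tau=0$, with the intermediate levels understood as the family $\tilde p_\tau$. We flag this as an interpretive approximation rather than attempt exact equality.
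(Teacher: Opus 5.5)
Your proposal is correct and follows the paper's proof essentially step for step. Both arguments identify each denoiser with a scaled score, note that the framewise concatenation $\hat{\bm\epsilon}^\mathrm{md}$ is the scaled score of $\hat p^\mathrm{md}$, solve Eq.~\ref{eq:interpolator} for $\bm\epsilon^\mathrm{tp}_\phi$, and recognize $\beta\nabla\log p^\mathrm{md}+(1-\beta)\nabla\log\hat p^\mathrm{md}$ as the score of the geometric mixture. Your two caveats are points the paper glosses over: the scores are really those of the noised marginals $p_\tau$, so the identity holds level by level rather than for a diffusion of $\tilde p^\mathrm{md}$, and $(\hat p^\mathrm{md})^{1-\beta}$ with $\beta>1$ needs an integrability assumption. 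The paper additionally includes a short uniqueness remark, via the Stein score, that you omit.
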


\let\thetheorem\oldthetheorem

\begin{proof}
    Upon perfect optimization, we have the connection between the denoiser and the score of the underlying distribution~\citep{song2019generative,song2021scorebased}:
    \begin{align}
        \label{eq:connection-1}
        \bm\epsilon^\mathrm{cf}_\theta(\rvx_\tau^{(t)},\tau) = -\sqrt{1-\bar\alpha_\tau}\nabla \log p^\mathrm{cf}(\rvx^{(t)}), \quad\forall 0\leq t\leq T-1, 0\leq \tau\leq \gT,
    \end{align}
    and similarly,
    \begin{align}
        \label{eq:connection-2}
        \bm\epsilon^\mathrm{md}_{\theta,\phi}(\rvx_\tau^{[T]},\tau)=-\sqrt{1-\bar\alpha_\tau}\nabla\log p^\mathrm{md}(\rvx^{[T]}), \quad\forall 0\leq \tau\leq\gT.
    \end{align}
    By leveraging Eq~\ref{eq:connection-1} for all frames $0\leq t\leq T-1$, we have
    \begin{align}
        \hat{\bm\epsilon}^\mathrm{md}=[\bm\epsilon^\mathrm{cf}_\theta(\rvx_\tau^{(t)},\tau)]_{t=0}^{T-1} = -\sqrt{1-\bar\alpha_\tau}\nabla\log \hat p^\mathrm{md}(\rvx^{[T]}),
    \end{align}
    where $\hat{p}^\mathrm{md}(\rvx^{[T]})$ is the joint of i.i.d. framewise distributions $p(\rvx)$. Combining with the interpolation rule in Eq.~3, we have
    \begin{align}
        \bm\epsilon_\phi&=\frac{1}{1-\alpha}\bm\epsilon^\mathrm{md}_{\theta,\phi} - \frac{\alpha}{1-\alpha}\hat{\bm\epsilon}^\mathrm{md},\\
        &=(-\sqrt{1-\bar\alpha_\tau})\left( \frac{1}{1-\alpha}\nabla\log p^\mathrm{md}(\rvx^{[T]}) - \frac{\alpha}{1-\alpha}\nabla\log\hat{p}^\mathrm{md}(\rvx^{[T]})  \right),\\
        &=(-\sqrt{1-\bar\alpha_\tau})\left( \beta\nabla\log p^\mathrm{md}(\rvx^{[T]}) + (1-\beta)\nabla\log\hat{p}^\mathrm{md}(\rvx^{[T]})  \right),
    \end{align}
    where $\beta=\frac{1}{1-\alpha}$.
    Now, consider the distribution $\tilde{p}^\mathrm{md}(\rvx^{[T]})\propto p^\mathrm{md}(\rvx^{[T]})^{\beta}\hat{p}^\mathrm{md}(\rvx^{[T]})^{1-\beta}$, we have
    \begin{align}
        \nabla\log\tilde{p}^\mathrm{md}(\rvx^{[T]})=\beta\nabla\log p^\mathrm{md}(\rvx^{[T]}) + (1-\beta)\nabla\log\hat{p}^\mathrm{md}(\rvx^{[T]}).
    \end{align}
    Therefore, $\bm\epsilon_\phi=-\sqrt{1-\bar\alpha_\tau}\nabla\log\tilde{p}^\mathrm{md}(\rvx^{[T]})$. This verifies that the interpolation rule implicitly induces the distribution $\tilde{p}^\mathrm{md}(\rvx^{[T]})$ with $\bm\epsilon_\phi$ as its score network. Furthermore, the induction is unique, since for any distribution $q(\rvx^{[T]})$ satisfying $\bm\epsilon_\phi=-\sqrt{1-\bar\alpha_\tau}\nabla\log q(\rvx^{[T]})$, we have that $\nabla\log\tilde{p}^\mathrm{md}(\rvx^{[T]})=\nabla\log q(\rvx^{[T]})$, which gives us $q(\rvx^{[T]})=\tilde{p}(\rvx^{[T]})$ due to the property of Stein score as demonstrated in~\cite{hyvarinen2005estimation,song2019generative}.

\end{proof}

\subsection{Proof of Equivariance}
\label{sec:proof-equiv}
\begin{theorem}
    \method is SO(3)-equivariant and translation-invariant. Namely, $\rmR f_\mathrm{EGI}(\rvx^{[T]})=f_\mathrm{EGI}(\rmR\rvx^{[T]}+\rvt)$, for all rotations $\rmR$ and translations $\rvt$ where $f_\mathrm{EGI}$ is the mapping defined per~\method.
\end{theorem}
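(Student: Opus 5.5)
The plan is to prove the statement compositionally. We first record that each building block of \method is SO(3)-equivariant and translation-invariant in the sense of the theorem: it maps $g\cdot$input to $\rmR\cdot$output when $g=(\rmR,\rvt)$. Then we show that every way the blocks are combined in Eq.~\ref{eq:interpolator} and in the cascaded variant preserves this property.

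\textbf{Building blocks.} The EGCL layer $f_\mathrm{ES}$ of \citet{satorras2021en} and the equivariant temporal attention layer $f_\mathrm{ET}$ of \citet{han2024geometric} satisfy $f(\rmR\rvx+\rvt,\rvh)=(\rmR\rvx'+\rvt,\rvh')$ with $\rvh'$ invariant. The GVP transition layers and the LayerNorm applied to invariant features also preserve this. Since the Kabsch-aligned training target and the noise prediction are differences of coordinates, we will work with the output $\bm\epsilon_\theta^\mathrm{cf}$ (and $\bm\epsilon^\mathrm{md}$). The denoiser output is formed as (final coordinates) minus (input coordinates), so the $\rvt$ cancels and we get $\bm\epsilon_\theta^\mathrm{cf}(\rmR\rvx+\rvt)=\rmR\,\bm\epsilon_\theta^\mathrm{cf}(\rvx)$. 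Applying this framewise, with the same $g$ for every frame, gives the same relation for $\hat{\bm\epsilon}^\mathrm{md}=[\bm\epsilon^\mathrm{cf}_\theta(\rvx^{(t)})]_t$.

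\textbf{Temporal branch.} The key step is $\bm\epsilon^\mathrm{tp}_\phi=\rvs^\mathrm{tp}_\phi(\rvx^{[T]}+\hat{\bm\epsilon}^\mathrm{md})-\rvx^{[T]}$. Under $g$, the argument becomes $\rmR\rvx^{[T]}+\rvt+\rmR\hat{\bm\epsilon}^\mathrm{md}=\rmR(\rvx^{[T]}+\hat{\bm\epsilon}^\mathrm{md})+\rvt$. So it transforms like a position, because adding a translation-invariant, rotation-equivariant vector to a position yields a position. Since $\rvs^\mathrm{tp}_\phi$ is E(3)-equivariant on positions, its output becomes $\rmR\,\rvs^\mathrm{tp}_\phi(\cdot)+\rvt$. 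Subtracting $\rmR\rvx^{[T]}+\rvt$ cancels $\rvt$, which gives $\bm\epsilon^\mathrm{tp}_\phi\mapsto\rmR\,\bm\epsilon^\mathrm{tp}_\phi$.

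\textbf{Interpolation.} The final combination $\alpha\hat{\bm\epsilon}^\mathrm{md}+(1-\alpha)\bm\epsilon^\mathrm{tp}_\phi$ is linear with a scalar (invariant, learned) $\alpha$, and is therefore rotation-equivariant and translation-invariant. For the \textsc{Casc} variant we argue by induction over the $L$ blocks. The invariant hypothesis is that at block $l$ the structure and temporal streams carry positions transforming as $\rmR\cdot+\rvt$ and invariant features $\rvh$. The block-wise interpolation $\alpha^{(l)}\rvx_a+(1-\alpha^{(l)})\rvx_b$ of two positions is an affine combination with weights summing to one, so it maps to $\rmR(\cdot)+\rvt$. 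The interpolation of invariant features stays invariant, and LayerNorm on $\rvh$ is invariant. Composing these steps preserves the hypothesis, and the final subtraction of the input yields the claim.

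The main obstacle is bookkeeping rather than any deep point: we must make sure every interpolation mixes quantities of the same type, either position with position (an affine combination) or displacement with displacement (a linear combination). Mixing a position with a displacement would break translation invariance. If any LayerNorm is applied to coordinates, we must check that it is a centered, norm-based normalization, which commutes with rotations. We also note that conditioning frames and the mask enter only as invariant node features, so they do not affect the argument.
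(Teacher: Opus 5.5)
Your proposal is correct and follows the paper's proof essentially step for step: framewise equivariance of $\bm\epsilon^\mathrm{cf}_\theta$ gives $\hat{\bm\epsilon}^\mathrm{md}\mapsto\rmR\hat{\bm\epsilon}^\mathrm{md}$; the argument $\rvx_\tau^{[T]}+\hat{\bm\epsilon}^\mathrm{md}$ transforms as a position, so SE(3)-equivariance of $\rvs^\mathrm{tp}_\phi$ followed by subtracting $\rmR\rvx_\tau^{[T]}+\rvt$ cancels the translation; and the scalar $\alpha$-mixing is linear. Your inductive treatment of the \textsc{Casc} variant goes beyond the paper, whose proof only covers the \textsc{Simple} interpolator of Eq.~\ref{eq:interpolator}, and it is sound given the type bookkeeping you flag (one small refinement: a centered normalization of coordinates commutes with rotations but returns a translation-invariant, displacement-type quantity, so it must be treated as a displacement rather than a position).
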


\begin{proof}

Recall the definition of the interpolator:
\begin{align}
 \bm\epsilon^\mathrm{md}_{\theta,\phi}(\rvx_\tau^{[T]},\tau)=\alpha \hat{\bm\epsilon}^\mathrm{md}  + (1-\alpha) \bm\epsilon^\mathrm{tp}_\phi(\rvx_\tau^{[T]}, \hat{\bm\epsilon}^\mathrm{md},\tau),\quad\quad \text{s.t.}\ \ \ \hat{\bm\epsilon}^\mathrm{md}= [\bm\epsilon_\theta^\mathrm{cf}(\rvx_\tau^{(t)},\tau)]_{t=0}^{T-1},
\end{align}

with the parameterization 
     $\bm\epsilon_\phi^\mathrm{tp}(\rvx_\tau^{[T]},\hat{\bm\epsilon}^\mathrm{md},\tau) = \rvs^\mathrm{tp}_\phi(\rvx_\tau^{[T]}+\hat{\bm\epsilon}^\mathrm{md},\tau) - \rvx_\tau^{[T]}$.
    It suffices to show that the temporal interpolator is rotation-equivariant and translation-invariant, since the equivariance of the structure model $\bm\epsilon^\mathrm{cf}_\theta$ directly follows the original work of~\cite{satorras2021en}.
    For any $g\coloneqq (\rmR,\rvt)\in\text{SE}(3)$, we have $[\bm\epsilon_\theta^\mathrm{cf}(\rmR\rvx_\tau^{(t)}+\rvt,\tau)]_{t=0}^{T-1}=\rmR[\bm\epsilon_\theta^\mathrm{cf}(\rvx_\tau^{(t)},\tau)]_{t=0}^{T-1}=\rmR\hat{\bm\epsilon}^\mathrm{md}$. By the proof in~\citet{han2024geometric}, we have that the temporal network $\rvs_\phi^\mathrm{tp}$ is SE(3)-equivariant, \emph{i.e.},
    \begin{align}
    \rvs^\mathrm{tp}_\phi(\rmR(\rvx_\tau^{[T]}+\hat{\bm\epsilon}^\mathrm{md})+\rvt,\tau) = \rmR\rvs^\mathrm{tp}_\phi(\rvx_\tau^{[T]}+\hat{\bm\epsilon}^\mathrm{md},\tau)+\rvt.
    \end{align}
    
    Therefore, we have
    \begin{align}
    \bm\epsilon_{\theta,\phi}^\mathrm{md}(\rmR\rvx_\tau^{[T]}+\rvt,\tau) &= \alpha \rmR\hat{\bm\epsilon}^\mathrm{md} + (1-\alpha) \left(\rvs^\mathrm{tp}_\phi(\rmR(\rvx_\tau^{[T]}+\hat{\bm\epsilon}^\mathrm{md})+\rvt,\tau) - \rmR\rvx_\tau^{{T}}-\rvt    \right),\\
    &=\alpha \rmR\hat{\bm\epsilon}^\mathrm{md} + (1-\alpha)\rmR \bm\epsilon^\mathrm{tp}_\phi(\rvx_\tau^{[T]}, \hat{\bm\epsilon}^\mathrm{md},\tau),\\
    &=\rmR  \bm\epsilon^\mathrm{md}_{\theta,\phi}(\rvx_\tau^{[T]},\tau),
    \end{align}
    which concludes the proof.
\end{proof}

\newpage
\section{Additional Results}
\subsection{Conformer Pretraining: QM9}
\begin{figure}[ht]
  \centering
  \includegraphics[width=0.7\linewidth]{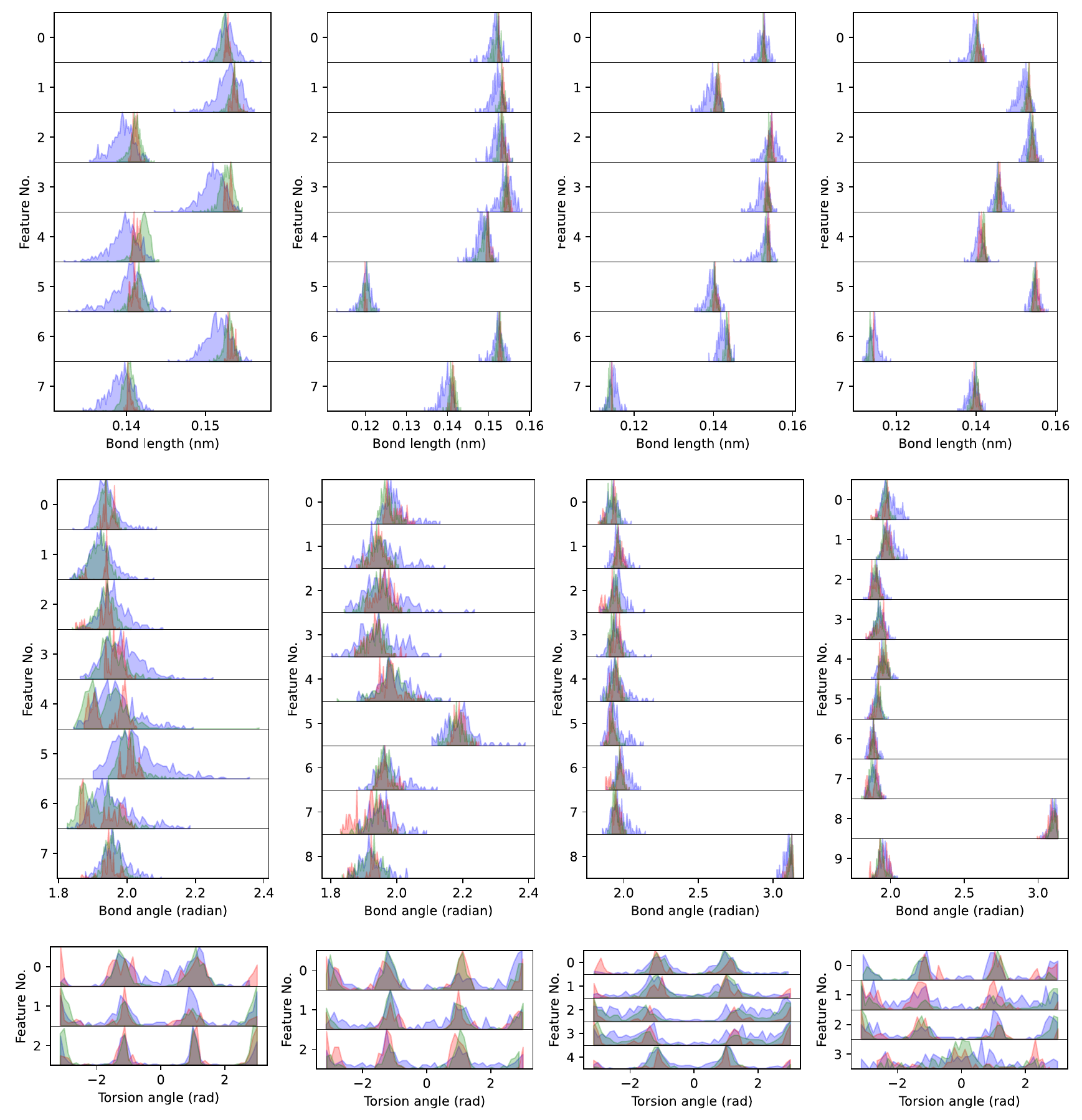}
  \caption{Distributions computed from reference conformers shown in red, Checkpoint 539 in green, and Checkpoint 99 in purple. We see that 539 aligns more closely with reference distributions across all collective variables and shows improved discretization of torsional states.}
  \label{fig:conf_dist}
\end{figure}

Above we show the additional plot associated with Section 5.1 and A.1. The plots above correspond to the following molecules (left to right):  
\begin{center}
\texttt{N\#C[C@](O)(CO)CCO},  
\texttt{C[C@@H](O)[C@@H](CO)CC\#N},  \\
\texttt{C[C@@H](O)CCOCCO},  
\texttt{CC[C@@H](CC=O)[C@@H](C)O}
\end{center}

\subsection{Speedup Analysis}
\begin{table}[ht]
\centering
\caption{Average time (s) taken to generate trajectory}
\footnotesize
\renewcommand{\arraystretch}{1.2}
\resizebox{0.7\linewidth}{!}{%
\begin{tabular}{lccccc}
\toprule
 Dataset \& Duration & OpenMM MD & 4x Block Diffusion & Full Diffusion  \\
\midrule
Drugs (5.2 ns)  &  584.52  & 201.70 & 161.08 \\
QM9 (2.6 ns)  &  151.38  & - & 60.08 \\
\bottomrule
\end{tabular}
}
\label{tab:torsion_decorrelation}
\end{table}

\newpage
\subsection{Unconditional Generation: QM9}
\begin{figure}[ht]
  \centering
  \includegraphics[width=\linewidth]{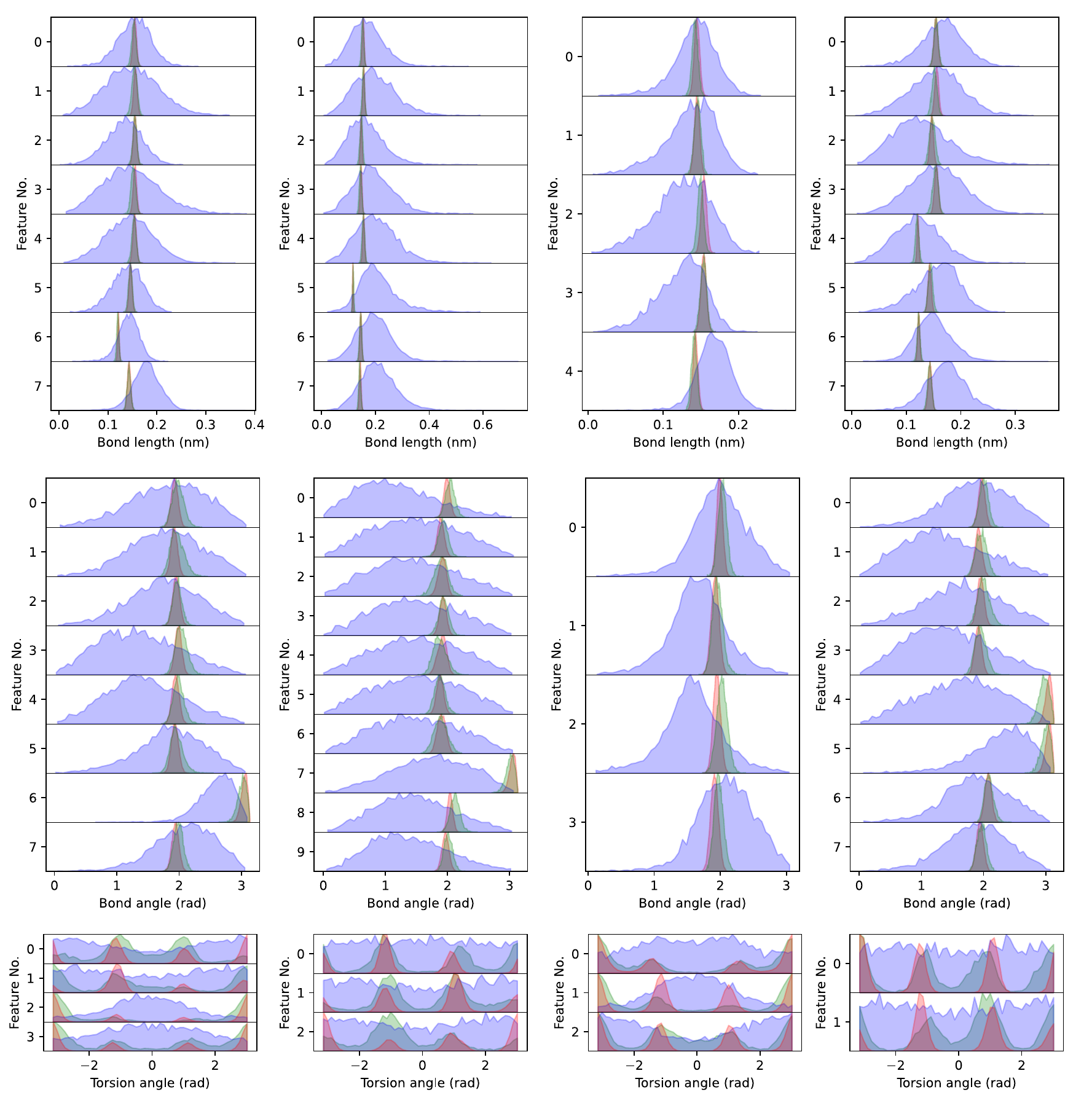}
  \caption{Distributions computed from reference QM9 trajectories (red), \textsc{EGInterpolator} (green), and GeoTDM (purple). Across all examples, our framework more closely matches the reference distributions across all collective variables and better captures torsional state discretizations than GeoTDM.}
  \label{fig:qm9_contrast_egtn}
\end{figure}

\vspace{0.5em}

\noindent
The figure above provides additional examples corresponding to the distributional analysis in Section~5.3. The molecule featured in the main paper in Figure 4A and 4B is:  
\begin{center}
\texttt{CC[C@H](C\#CC=O)CO}
\end{center}

\noindent
The plots above correspond to the following molecules (left to right):
\begin{center}
\texttt{C\#CCCC[C@@H](C)CO},\quad  
\texttt{CC[C@@](C\#N)(CO)OC},\\[0.3em]
\texttt{COCCCO},\quad  
\texttt{CC[C@H](C\#CC=O)CO}
\end{center}

\newpage
\subsection{Forward Simulation: Drugs}
\begin{figure}[ht]
  \centering
  \includegraphics[width=\linewidth]{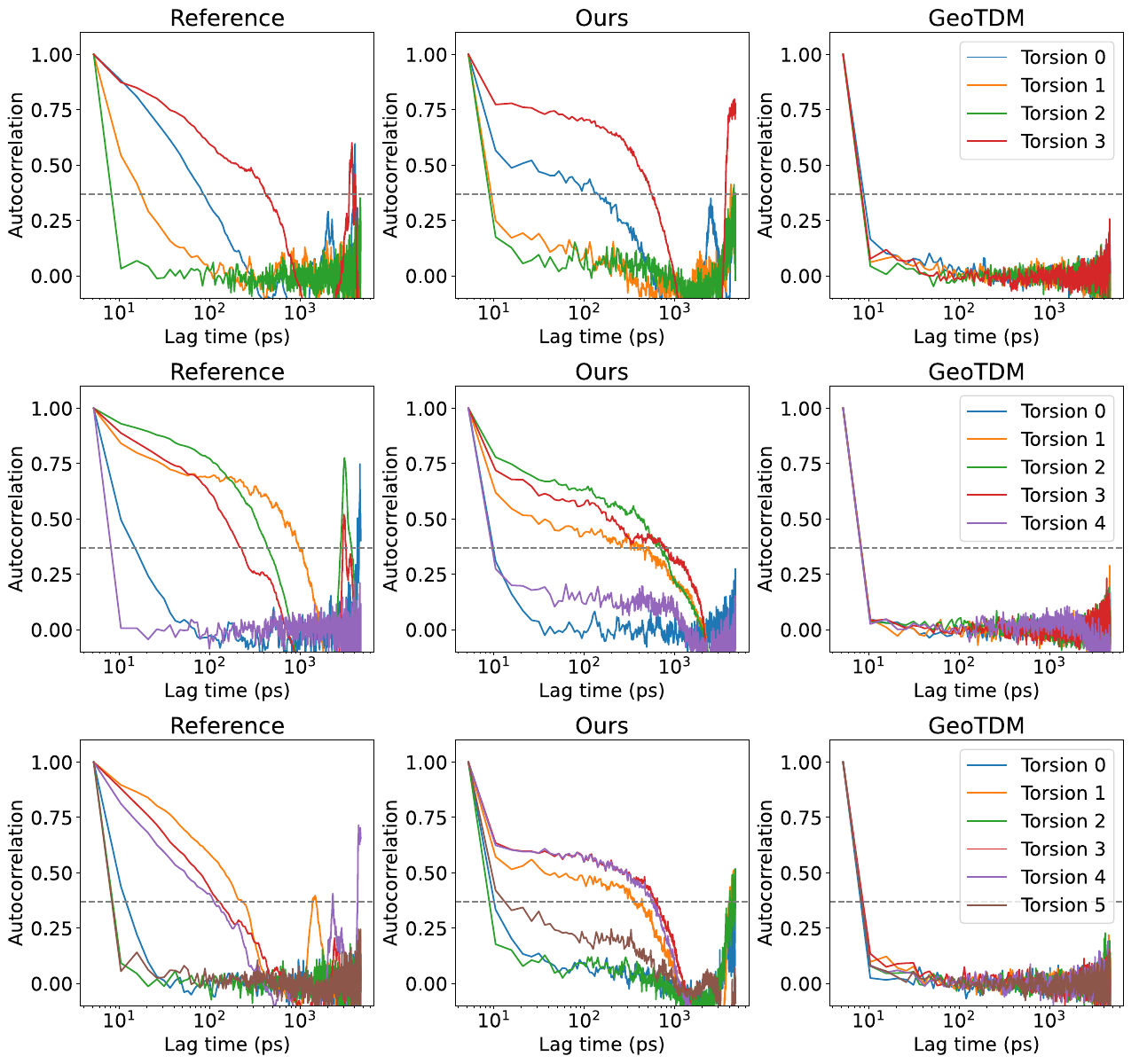}
  \caption{Autocorrelations of individual torsion angles for an example molecule, comparing reference trajectories with generations from \textsc{EGInterpolator} and GeoTDM. For the challenging task of capturing temporal de-correlation behavior, \textsc{EGInterpolator} closely follows the reference dynamics, whereas GeoTDM fails to model frame-to-frame correlations effectively.}
  \label{fig:drug_decorrelation}
\end{figure}

\vspace{0.5em}

\noindent
The figure above provides additional examples corresponding to the dynamical analysis in Section~5.4. The molecule featured in the main paper in Figure 4E-G is:
\begin{center}
\texttt{O=C(O)c1[nH]c2ccc(Cl)cc2c1CC(=O)N1CCN(c2ccccc2)CC1}
\end{center}

\noindent
The plots above correspond to the following molecules (left to right):
\begin{center}
\texttt{Cc1ccc(C)c(CN2C(=O)NC3(CCCCC3)C2=O)c1},\\[0.3em]
\texttt{COc1ccc(NS(=O)(=O)c2ccc3c(c2)Cc2ccccc2-3)cn1},\\[0.3em]
\texttt{COc1ccc(S(=O)(=O)Nc2c(C(=O)O)[nH]c3ccccc23)c(OC)c1}
\end{center}

\newpage
\begin{figure}[ht]
  \centering
  \includegraphics[width=\linewidth]{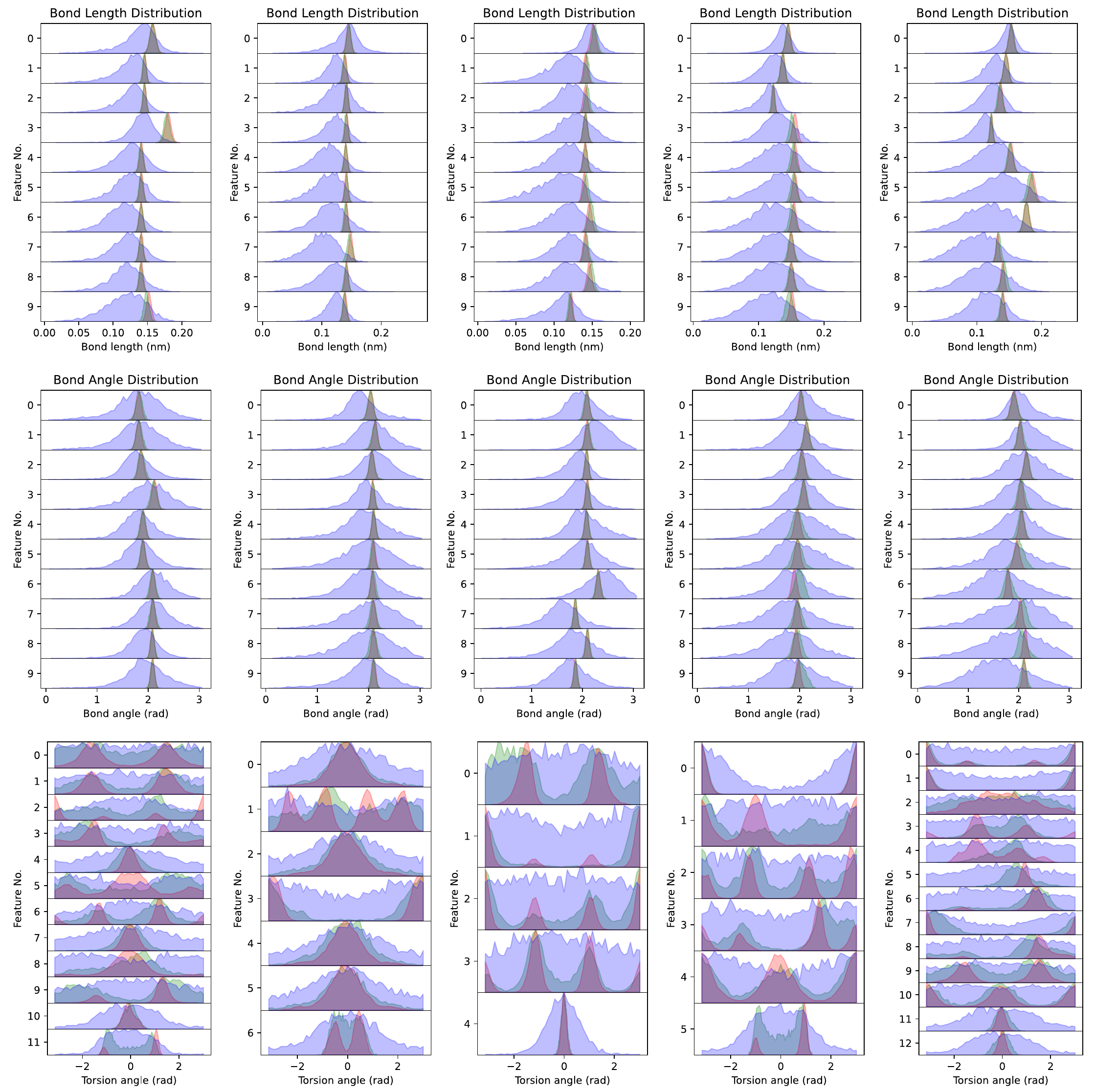}
  \caption{Distributions computed from reference Drugs trajectories (red), \textsc{EGInterpolator} (green), and GeoTDM (purple). Across all examples, our framework aligns closely with reference distributions across all collective variables and exhibits improved torsional state discretization compared to GeoTDM.}
  \label{fig:drug_contrast_egtn}
\end{figure}

\vspace{0.5em}

\noindent
The figure above provides additional examples related to the distributional analysis in Section~5.4.

\noindent
The plots above correspond to the following molecules (left to right):
\begin{center}
\texttt{NS(=O)(=O)c1ccc(CCNC(=O)COC(=O)CN2C(=O)[C@H]3CCCC[C@H]3C2=O)cc1},\\[0.3em]
\texttt{COc1ccc(C(=O)N2CCc3cc(OC)c(OC)cc3C2)cc1OC},\\[0.3em]
\texttt{Cc1ccc2c(c1)C(=O)N(CCCCO)C2=O},\\[0.3em]
\texttt{COC(=O)C1CCN(Cc2cc(=O)oc3cc(OC)ccc23)CC1},\\[0.3em]
\texttt{CCOC(=O)CSC1=Nc2ccccc2C2=N[C@H](CC(=O)NCc3ccc(OC)cc3)C(=O)N12}
\end{center}

\newpage
\subsection{Energy Examples: QM9 and Drugs}
\label{sec:more_energy}
\begin{table}[ht]
\centering
\caption{%
\textbf{Top:} Mean and standard deviation (Hartrees) of energies for selected QM9 test molecules, comparing ground-truth (GT), \textsc{EGInterpolator}, and \textsc{GeoTDM}. 
\textbf{Bottom:} Block-wise energy means and standard deviations for selected Drugs test molecules, showing how \textsc{EGInterpolator} tracks GT distributions across successive diffusion blocks.
}
\footnotesize
\renewcommand{\arraystretch}{1.2}

\resizebox{0.95\linewidth}{!}{%
\begin{tabular}{lccc}
  \toprule
  \textbf{SMILES} & \textbf{GT} & \textbf{EGInterpolator} & \textbf{GeoTDM} \\
  \midrule
  CC(CO)(CO)CC\#N & $-440.287 \pm 0.005$ & $-440.249 \pm 0.036$ & $-438.206 \pm 1.813$ \\
  COC[C@@]1(CO)N[C@H]1C & $-441.428 \pm 0.008$ & $-441.364 \pm 0.075$ & $-439.407 \pm 3.932$ \\
  C\#CCCC@HOCC & $-388.299 \pm 0.006$ & $-388.225 \pm 0.117$ & $-385.743 \pm 2.259$ \\
  CCOCCCN1CC1 & $-405.525 \pm 0.007$ & $-405.387 \pm 0.426$ & $-402.751 \pm 3.424$ \\
  CC(=O)C@HCCO & $-460.165 \pm 0.007$ & $-460.140 \pm 0.021$ & $-458.121 \pm 1.225$ \\
  CCCC@@(CC)OC & $-390.780 \pm 0.006$ & $-390.753 \pm 0.026$ & $-387.954 \pm 3.434$ \\
  CCC[C@@H]1C@HC[C@@H]1O & $-425.413 \pm 0.009$ & $-425.372 \pm 0.066$ & $-423.323 \pm 5.039$ \\
  CCO[C@H]1C@@H[C@H]1CO & $-425.364 \pm 0.008$ & $-425.326 \pm 0.045$ & $-423.153 \pm 4.070$ \\
  COCCC[C@H]1CN1C & $-405.507 \pm 0.008$ & $-405.463 \pm 0.047$ & $-403.101 \pm 4.087$ \\
  CCC@HCC(C)C & $-389.583 \pm 0.007$ & $-389.547 \pm 0.035$ & $-386.846 \pm 2.405$ \\
  \bottomrule
\end{tabular}
}

\vspace{1em}

\resizebox{0.95\linewidth}{!}{%
\begin{tabular}{llc}
  \toprule
  \textbf{SMILES} & \textbf{EGInterpolator Block} & \textbf{Energy (Hartrees)} \\
  \midrule
  \multirow{5}{*}{Cc1ccc(C)c(CN2C(=O)NC3(CCCCCC3)C2=O)c1} 
   & GT      & $-960.102 \pm 0.010$ \\
   & Block 1 & $-960.062 \pm 0.020$ \\
   & Block 2 & $-960.027 \pm 0.167$ \\
   & Block 3 & $-959.940 \pm 0.307$ \\
   & Block 4 & $-960.037 \pm 0.044$ \\
  \midrule
  \multirow{5}{*}{Cc1ccc(N[C@H]2CCCN(C(=O)c3ccc(-n4ccnc4)cc3)C2)cc1C} 
   & GT      & $-1185.987 \pm 0.012$ \\
   & Block 1 & $-1185.837 \pm 0.241$ \\
   & Block 2 & $-1185.846 \pm 0.168$ \\
   & Block 3 & $-1185.785 \pm 0.324$ \\
   & Block 4 & $-1185.854 \pm 0.133$ \\
  \midrule
  \multirow{5}{*}{CCOC(=O)[C@H]1C@HNC(=O)N[C@@]1(O)C(F)(F)F} 
   & GT      & $-2171.285 \pm 0.013$ \\
   & Block 1 & $-2171.224 \pm 0.062$ \\
   & Block 2 & $-2171.212 \pm 0.058$ \\
   & Block 3 & $-2171.195 \pm 0.060$ \\
   & Block 4 & $-2171.167 \pm 0.105$ \\
  \bottomrule
\end{tabular}
}

\label{tab:energy_means_stds}
\end{table}

\newpage
\subsection{Interpolation: QM9}
\begin{figure}[ht]
  \centering
  \includegraphics[width=\linewidth]{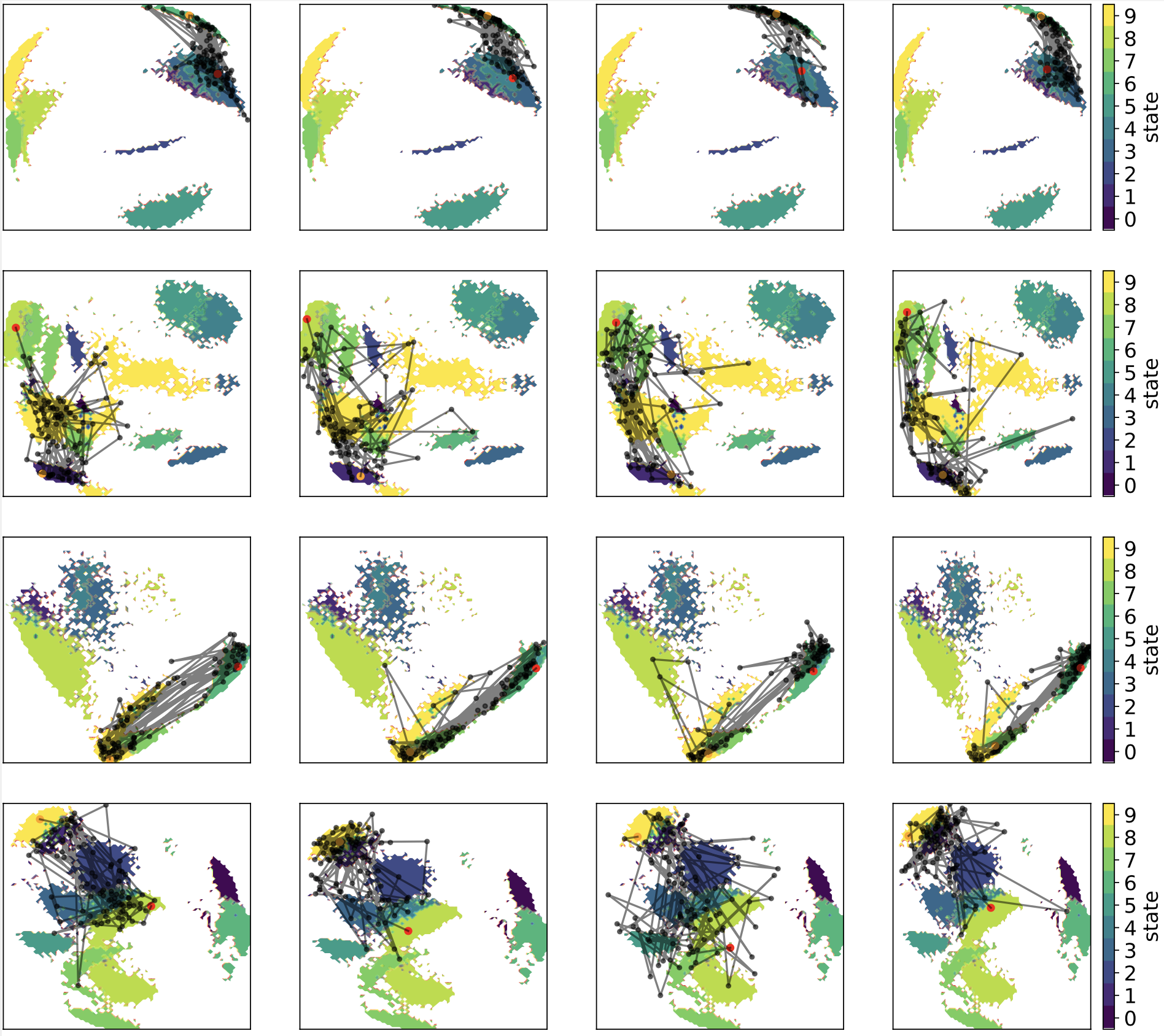}
  \caption{Generated QM9 interpolation trajectories from \textsc{EGInterpolator}, projected on the reference surface. The red point denotes the start frame, and the orange point denotes the end frame. The reference surface is colored by metastate assignment. Each row corresponds to a different molecule, and each column shows a generated interpolation. These examples illustrate the model’s ability to generate efficient and meaningful transition paths.}
  \label{fig:qm9_interpolation}
\end{figure}

\vspace{0.5em}

\noindent
The figure above provides additional examples related to the analysis in Section~A.2.

\noindent
The trajectories correspond to the following QM9 molecules (top to bottom):
\begin{center}
\texttt{C\#C[C@@](O)(CC)COC},
\texttt{N\#CC[C@H](O)CCCO},\\[0.3em]
\texttt{C[C@H](C=O)NCC=O},
\texttt{CCC[C@@H](O)CC\#N}
\end{center}

\newpage
\subsection{Interpolation: Drugs}
\begin{figure}[ht]
  \centering
  \includegraphics[width=\linewidth]{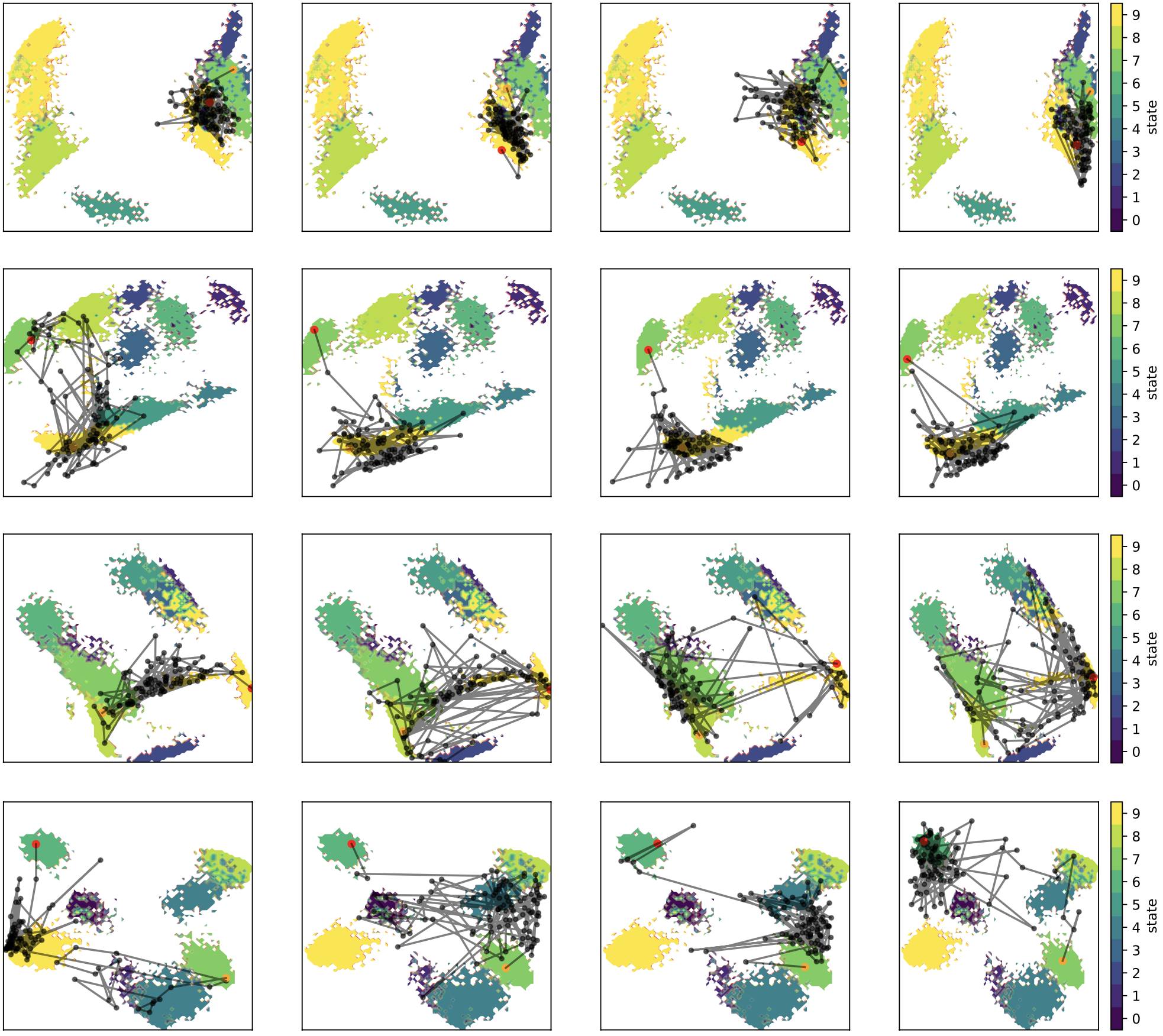}
  \caption{Generated Drug interpolation trajectories from \textsc{EGInterpolator}, projected onto the reference surface. The red point indicates the start frame, and the orange point indicates the end frame. The reference surface is colored by metastate assignment. Each row corresponds to a different molecule, and each column shows a generated interpolation. These examples highlight the model’s ability to generate efficient and meaningful transition paths.}
  \label{fig:drug_interpolation}
\end{figure}

\noindent
The figure above provides additional examples related to the analysis in Section~5.5. The molecule featured in the main paper in Figure 5B is:
\begin{center}
\texttt{O=C(CCCSc1nc2ccccc2[nH]1)NCc1ccccc1F}
\end{center}
\noindent
The trajectories above correspond to the following Drug molecules (top to bottom):
\begin{center}
\texttt{COc1ccc(S(=O)(=O)Nc2c(C(=O)O)[nH]c3ccccc23)c(OC)c1},\\[0.3em]
\texttt{Cn1c(C(=O)NCCN2CCOCC2)cc2c(=O)n(C)c3ccccc3c21},\\[0.3em]
\texttt{O=C(c1ccc(Br)o1)N1CCN(c2ccccc2F)CC1},\\[0.3em]
\texttt{CCOC(=O)c1c(C)[nH]c(C)c1C(=O)CSc1ncccn1}
\end{center}

\newpage
\subsection{$\alpha$ Mixing Parameters: Interpolation Results \& \textsc{EGInterpolator-Simple}}
\label{sec:more_alpha}

\begin{figure}[ht]
  \centering
  \resizebox{0.8\linewidth}{!}{%
    \includegraphics{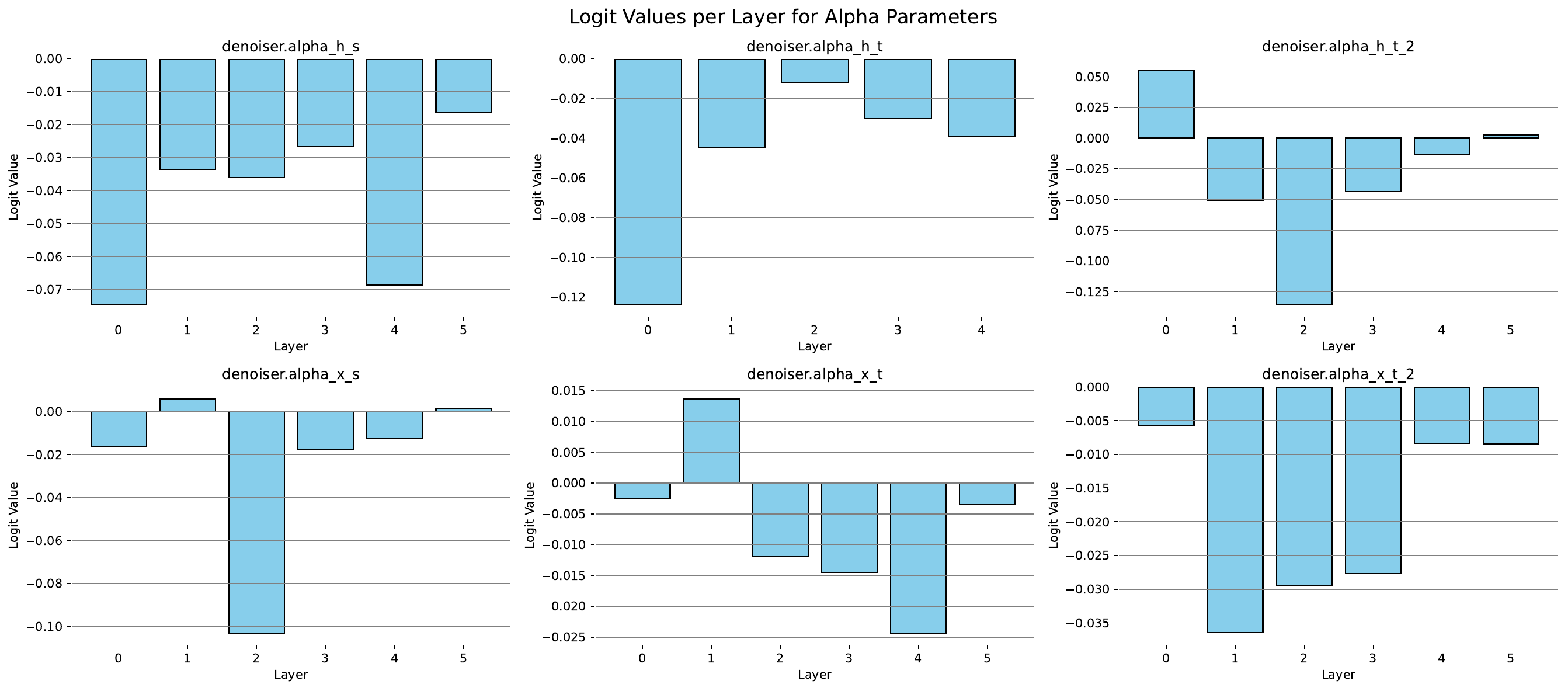}
  }
  \resizebox{0.8\linewidth}{!}{%
    \includegraphics{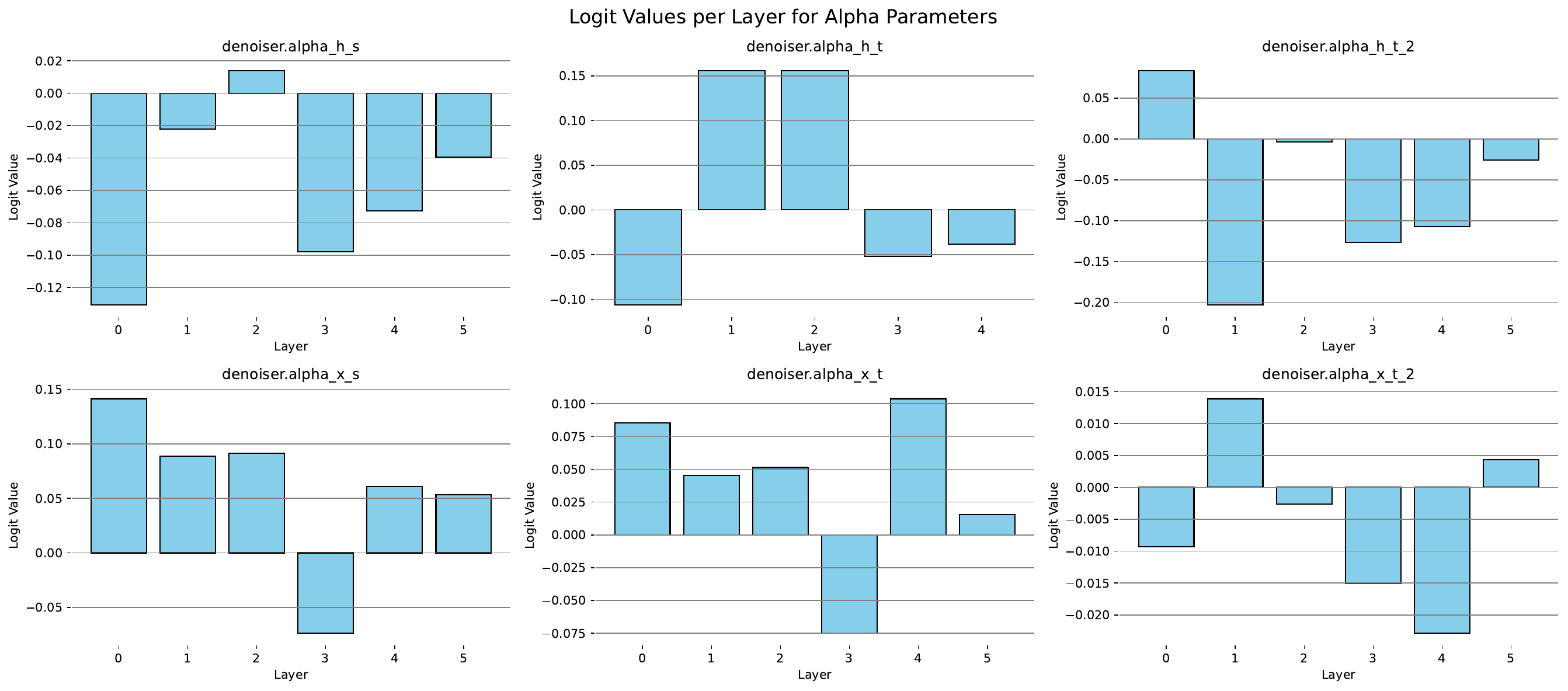}
  }
  \caption{\textbf{Top:} Logits of $\alpha$ for each spatial and temporal layer after convergence on QM9. \textbf{Bottom:} Logits of $\alpha$ for each spatial and temporal layer after convergence on DRUGS.
           \textbf{Both:} Results obtained with \textsc{EGInterpolator-Casc} for the interpolation task.}
  \label{fig:alpha_logits_inter}
\end{figure}

\begin{figure}[ht]
  \centering
  \begin{minipage}{0.4\linewidth}
    \centering
    \includegraphics[width=\linewidth]{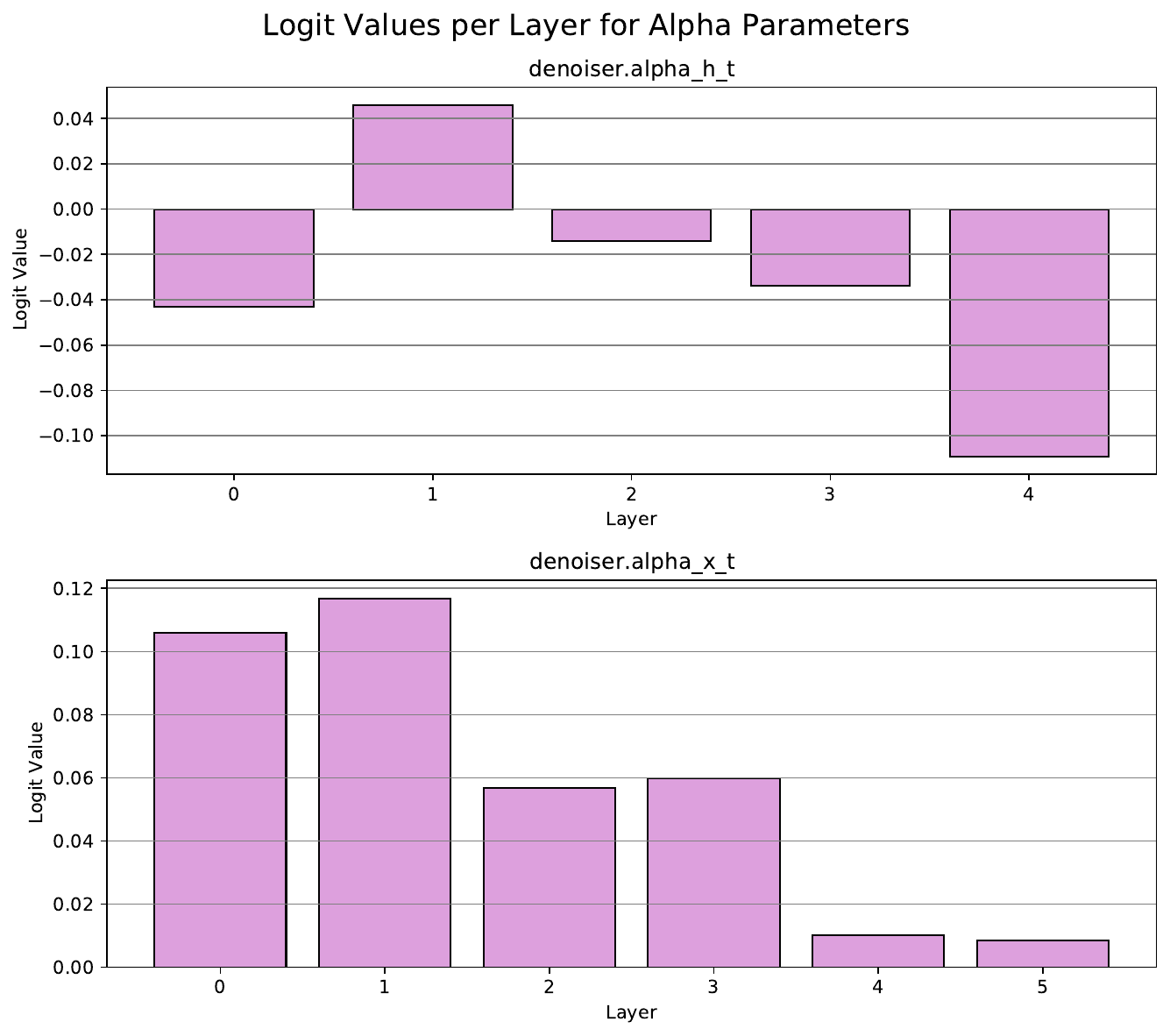}
    \caption*{Logits of $\alpha$ for each spatial and temporal layer after convergence on the QM9 unconditional generation task.}
  \end{minipage}
  \hfill
  \begin{minipage}{0.4\linewidth}
    \centering
    \includegraphics[width=\linewidth]{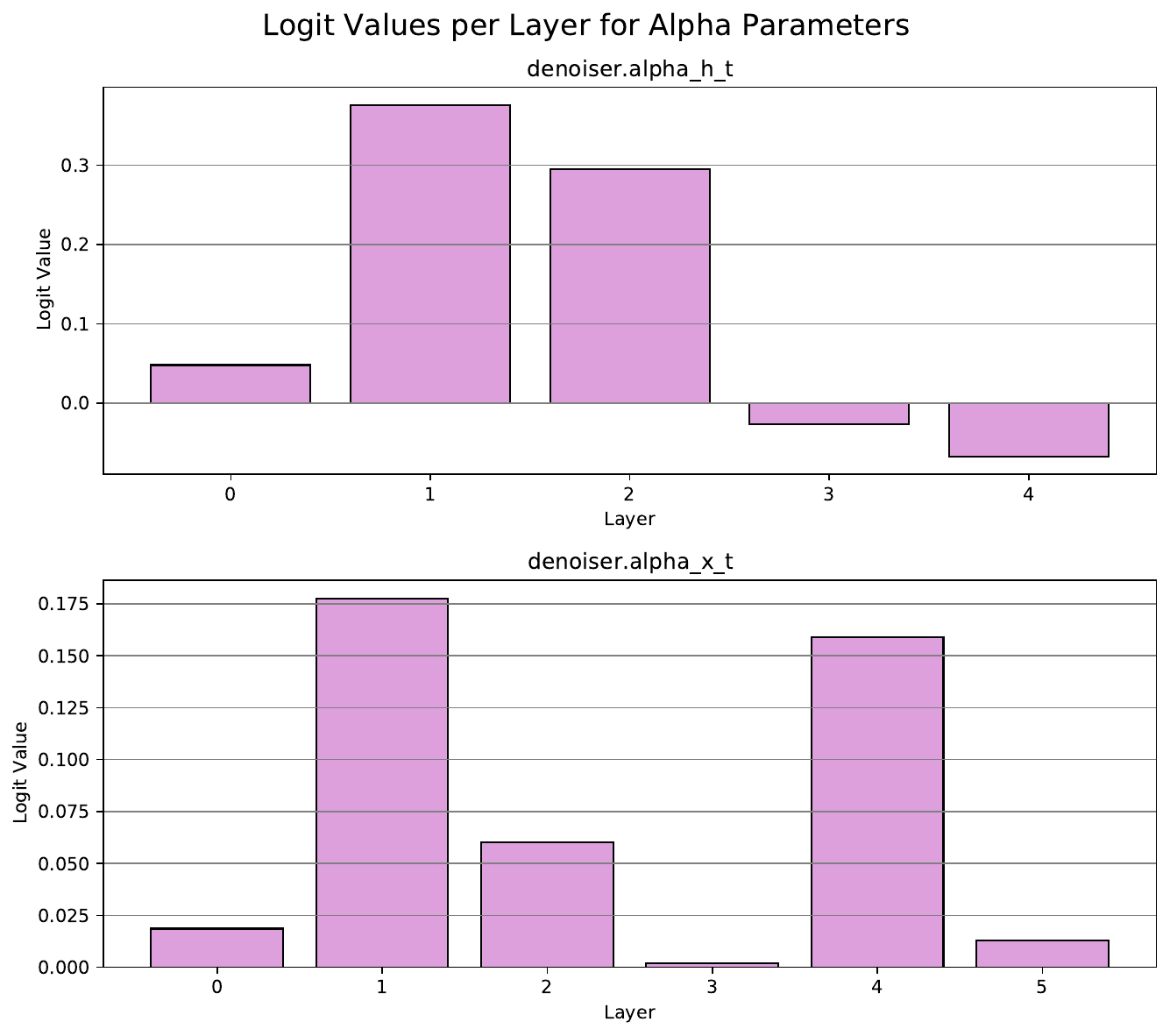}
    \caption*{Logits of $\alpha$ for each spatial and temporal layer after convergence on the DRUGS forward simulation task.}
  \end{minipage}

  \caption{Results obtained with \textsc{EGInterpolator-Simple}.}
  \label{fig:alpha_logits_simple}
\end{figure}

\newpage
\section{Statements and Discussions}
\subsection{Limitations Cont. and Future Opportunities}
Our results demonstrate that structural pretraining significantly enhances all-atom diffusion models for simulating molecular dynamics trajectories, especially on chemically diverse set of molecules. Nonetheless, our work has limitations that highlight directions for future research. As noted in Section~6, machine learning methods still lag behind ground-truth MD simulations in terms of physical accuracy. Future work may therefore explore improved learning objectives, molecular parameterizations, and the incorporation of physics-based regularization to help bridge this gap. 

While our focus was primarily on the challenging domain of organic small molecules and addresses generalizeability in this chemical space, molecular dynamics is broadly applicable to other $N$-body systems, such as a peptides and protein–ligand complexes. While our efforts touched these realms, future work may robustly extend our framework, leveraging pretraining and structure prediction models, along with other physics-based or experimental signals, to enable generative modeling of larger bio-molecular simulations. Moreover, while we have shown promising results, continued efforts should look to unify the unique dynamics of small and large systems, as well as span multiple tasks.

Additionally, although our approach effectively reproduces distributions and dynamics consistent with classical mechanics, it remains subject to the inherent biases of molecular dynamics simulations. Future research may explore aligning both conformer and trajectory generation more closely with Boltzmann-distributed energy landscapes to improve thermodynamic fidelity.

\subsection{Ethics and Impacts Statement}
This work develops generative models for molecular dynamics to advance efficient, accurate simulation in chemistry and biology. While such models can accelerate scientific discovery, they also raise concerns around AI safety and dual-use risks, particularly in the design of harmful chemical or biological agents.

Our goal is to support beneficial applications in drug discovery, materials science, and molecular understanding through data-efficient and physically grounded modeling. All models are trained on publicly available, non-sensitive data and are released under open licenses to promote transparency and responsible use. We encourage continued dialogue on the safe development and deployment of generative AI in the physical and natural sciences.

\end{document}